\theoremstyle{plain}
\newtheorem{theorem}{Theorem}[section]
\newtheorem{lemma}[theorem]{Lemma}
\theoremstyle{definition}
\theoremstyle{remark}
\newtheorem{remark}[theorem]{Remark}
\newcommand\TODO[1][]{{\color{orange}[TODO\ifthenelse{\equal{#1}{}}{}{: #1}]}}
\theoremstyle{plain}
\newtheorem{ASS}{Assumption}
\newcommand\SEC\section
\newcommand\SSEC\subsection
\newcommand\SSSEC\subsubsection
\newcommand\RM[1]{\mathrm{#1}}
\newcommand\DD[1]{\mathop{\RM d{#1}}}
\newcommand\BM[1]{\boldsymbol{#1}}
\newcommand\BB[1]{\mathbb{#1}}
\newcommand\CAL[1]{\mathcal{#1}}
\newcommand\Cp{^\mathsf{C}}
\newcommand\TP{^\mathsf{T}}
\newcommand\Tp{\TP}
\newcommand\OP[1]{\operatorname{#1}}
\newcommand\Poly{\operatorname{poly}}
\newcommand\GA[1]{\begin{gather}#1\end{gather}}
\newcommand\AL[1]{\begin{align}#1\end{align}}
\newcommand\AM[1]{\begin{align*}#1\end{align*}}
\newcommand\ALN[2]{\AL{#2\label{#1}}}
\newcommand\HAT[1]{\widehat{#1}}
\newcommand\TLD[1]{\widetilde{#1}}
\newcommand\Prb{\mathbb{P}}
\newcommand\Exp{\mathbb{E}}
\newcommand\SM[1]{\setminus\{#1\}}
\DeclareMathOperator*{\ExpOp}{\Exp}
\DeclareMathOperator*{\argmin}{\arg\min}
\DeclareMathOperator*{\argmax}{\arg\max}
\newcommand{\LABEL}[2]{\protected@write\@auxout{}{\string\newlabel{#1}{{#2}{\thepage}{#2}{#1}{}}}\hypertarget{#1}{}}
\newcommand\ALIST[1]{\begin{itemize}[topsep=0em,partopsep=0em,parsep=0em,itemsep=0em]
#1\end{itemize}}
\newcommand\AITEM[1]{\item[\ref{#1}]\ref{#1::name}\dotfill\pageref{#1}}
\newcommand\AOBJ[3]{\LABEL{#3::name}{#2}#1{#2}\label{#3}}
\newcommand\ASEC[2]{\AOBJ{\section}{#1}{#2}}
\newcommand\ASSEC[2]{\AOBJ{\subsection}{#1}{#2}}
\newcommand\Ours{GraCe}
\newcommand\OursFull{\underline{Gra}dient \underline{C}ompressed S\underline{e}nsing}
\icmltitlerunning{Gradient Compressed Sensing: A Query-Efficient Gradient Estimator for High-Dimensional Zeroth-Order Optimization}
\begin{document}

\twocolumn[
\icmltitle{Gradient Compressed Sensing: A Query-Efficient Gradient Estimator for High-Dimensional Zeroth-Order Optimization}



\icmlsetsymbol{equal}{*}

\begin{icmlauthorlist}
\icmlauthor{Ruizhong Qiu}{uiuc}
\icmlauthor{Hanghang Tong}{uiuc}
\end{icmlauthorlist}

\icmlaffiliation{uiuc}{Department of Computer Science, University of Illinois Urbana--Champaign, USA}

\icmlcorrespondingauthor{Ruizhong Qiu}{rq5@illinois.edu}
\icmlcorrespondingauthor{Hanghang Tong}{htong@illinois.edu}

\icmlkeywords{Zeroth-Order Optimization, High Dimensions, Compressed Sensing}

\vskip 0.3in
]



\printAffiliationsAndNotice{}  

\begin{abstract}
We study nonconvex zeroth-order optimization (ZOO) in a high-dimensional space $\BB R^d$ for functions with approximately $s$-sparse gradients. 
To reduce the dependence on the dimensionality $d$ in the query complexity, high-dimensional ZOO methods seek to leverage gradient sparsity to design gradient estimators. The previous best method needs $O\big(s\log\frac ds\big)$ queries per step to achieve $O\big(\frac1T\big)$ rate of convergence w.r.t.\ the number $T$ of steps.
In this paper, we propose \underline{Gra}dient \underline{C}ompressed S\underline{e}nsing (GraCe), a query-efficient and accurate estimator for sparse gradients that uses only $O\big(s\log\log\frac ds\big)$ queries per step and still achieves $O\big(\frac1T\big)$ rate of convergence. To our best knowledge, we are the first to achieve a \emph{double-logarithmic} dependence on $d$ in the query complexity, and our proof uses weaker assumptions than previous work.
Our proposed GraCe generalizes the Indyk--Price--Woodruff (IPW) algorithm in compressed sensing from linear measurements to nonlinear functions. Furthermore, since the IPW algorithm is purely theoretical due to its impractically large constant, we improve the IPW algorithm via our \emph{dependent random partition} technique together with our corresponding novel analysis and successfully reduce the constant by a factor of nearly 4300. 
Our GraCe is not only theoretically query-efficient but also achieves strong empirical performance. We benchmark our GraCe against 12 existing ZOO methods with 10000-dimensional functions and demonstrate that GraCe significantly outperforms existing methods. Our code is publicly available at \url{https://github.com/q-rz/ICML24-GraCe}.
\end{abstract}
\begin{table*}[t]
\caption{Comparison in nonconvex ZOO. ($q,b$: hyperparameters; $\tau:=\argmin_{t=1,\dots,T}\|\nabla f(\BM x_t)\|_2$; w.h.p.: with high probability.) To our best knowledge, we are the first to achieve a \emph{double-logarithmic} dependence on $d$ in the query complexity 
under weaker assumptions.}\label{tab:bounds}
\begin{center}
\begin{small}
\begin{tabular}{clll}
\toprule
\multicolumn{1}{c}{\textbf{Type}}&\multicolumn{1}{l}{\textbf{Method}}&\multicolumn{1}{l}{\textbf{Queries per step}}&\multicolumn{1}{l}{\textbf{Rate of convergence}}\\
\midrule
\multirow{6.8}*{\makecell[c]{Full\\Gradient}}&RS \cite{zo-rs}
&$O(1)$&$\Exp[\|\nabla f(\BM x_\tau)\|_2^2]\le O\big(\frac{\sqrt d}{\sqrt T}+\frac dT\big)$\\
&TPGE \cite{tpge}
&$O(1)$&$\Exp[\|\nabla f(\BM x_\tau)\|_2^2]\le O\big(\frac{\sqrt d}{\sqrt T}\big)$\\
&RSPG \cite{rspg}
&$O(q)$&$\Exp[\|\nabla f(\BM x_\tau)\|_2^2]\le O\big(\frac dq+\frac{d^2}{qT}\big)$\\
&ZO-signSGD \cite{zo-signsgd}
&$O(bq)$&$\Exp[\|\nabla f(\BM x_\tau)\|_2]\le O\big(\frac{\sqrt d\sqrt{q+d}}{\sqrt{bq}}+\frac{\sqrt d}{\sqrt T}\big)$\\
&ZO-AdaMM \cite{zo-adamm}
&$O(1)$&$\Exp[\|\nabla f(\BM x_\tau)\|_2^2]\le O\big(\frac d{\sqrt T}+\frac{d^2}T\big)$\\
\midrule
\multirow{2.1}*{\makecell[c]{Sparse\\Gradient}}&ZORO \cite{zoro}
&$O\big(s\log\frac ds\big)$&$\|\nabla f(\BM x_\tau)\|_2^2\le O\big(\frac1T\big)$ w.h.p.\\
&\textbf{\Ours} (ours)
&$O\big(s\log\log\frac ds\big)$&$\|\nabla f(\BM x_\tau)\|_2^2\le O\big(\frac1{T}\big)$ w.h.p.\\
\bottomrule
\end{tabular}
\end{small}
\end{center}\end{table*}

\section{Introduction}
We study the problem of unconstrained optimization:
\AL{\min_{\BM x\in\BB R^d}f(\BM x),}
where $f:\BB R^d\to\BB R$ is a (possibly nonconvex) function over a high-dimensional space $\BB R^d$. 

Gradient-free optimization (GFO), also known as black-box optimization, was among the first schemes explored in the history of optimization theory \cite{matyas1965random}. In GFO, the function $f$ is unknown to the optimizer, and the optimizer can obtain information about the function $f$ only via \emph{queries} (i.e., function evaluations). The goal of GFO is to optimize the function $f$ using a minimal number of queries.

Zeroth-order optimization (ZOO), a paradigm of GFO, aims to apply first-order optimization methods to GFO with gradients estimated from queries. Through a long history of study, various \emph{full-gradient} ZOO methods have been proposed.  Early full-gradient ZOO methods such as the Kiefer--Wolfowitz method \cite{kiefer1952stochastic} use dimension-wise finite difference to approximate gradients, which suffers from an $O(d)$ dependence in the query complexity. Later methods achieve $O(1)$ queries per step via stochastic gradient estimators such as Gaussian smoothing \cite{nesterov2015random}, but they suffer from a $\Poly(d)$ factor in their rates of convergence. Hence, their overall query complexity still depends polynomially on $d$.

Meanwhile, the dimensionality $d$ can be very large in modern real-world applications. For instance, a high-resolution image can have millions of pixels. These real-world scenarios call for \emph{high}-dimensional ZOO. High-dimensional ZOO aims to develop gradient estimators with minimal dependence on the dimensionality $d$ in the query complexity under gradient sparsity assumptions \cite{zo-lasso,zoro}. In contrast to full-gradient ZOO, the research on \emph{sparse-gradient} ZOO is still in its infancy. Existing methods suffer from slow convergence and/or a suboptimal query complexity. For example, 
\citet{zo-lasso} proposed a LASSO-based method that uses $O(s^{2/3}\sqrt T)$ queries per step and has $O\big(\frac{s^2\sqrt{\log d}}{T^{1/3}}\big)+{\TLD O}\big(\frac1{T^{5/12}}\big)$ rate of convergence for stochastic convex ZOO. For nonconvex ZOO, the previous best method ZORO \cite{zoro}, which is based on CoSaMP \cite{needell2009cosamp}, needs $O\big(s\log\frac ds\big)$ queries per step to achieve $O\big(\frac1T\big)$ rate\footnote{This rate is for non-stochastic nonconvex ZOO. For stochastic nonconvex ZOO, ZORO has $O\big(1+\frac1{\sqrt T}\big)$ rate of convergence.} of convergence. 
The root cause of the technical difficulty here lies in the inaccurate gradient estimation in existing methods. 

In this paper, we propose \emph{\OursFull{}} (\Ours), a new sparse gradient estimator that uses only $O\big(s\log\log\frac ds\big)$ queries per step 
and still achieves $O\big(\frac1T\big)$ rate of convergence for nonconvex ZOO. It 
generalizes the Indyk--Price--Woodruff (IPW) algorithm \cite{indyk2011power} in compressed sensing from linear measurements to nonlinear functions. Our main contributions are as follows:
\begin{itemize}
\item\textbf{Query-efficient gradient estimator.} We propose \emph{\OursFull{}} (\Ours), a new gradient estimator that uses only $O\big(s\log\log\frac ds\big)$ queries per step and still achieves $O\big(\frac1T\big)$ rate of convergence for nonconvex ZOO. To our best knowledge, we are the first to achieve a \emph{double-logarithmic} dependence on $d$ in the query complexity (see Table~\ref{tab:bounds}).
\item\textbf{Relaxed sparsity assumption.} Our analysis is based on a new assumption of approximate gradient sparsity, which is weaker than previous assumptions --- \emph{exact sparsity} \cite{zo-lasso} and \emph{compressibility} \cite{zoro}. 
\item\textbf{Improvement of the IPW algorithm.} The IPW algorithm is purely theoretical due to its impractically large constant. To make the IPW algorithm practical, we improve the IPW algorithm via our \emph{dependent random partition} technique together with our corresponding novel analysis and successfully reduce the constant by a factor of nearly 4300.
\item\textbf{Strong empirical performance.} Our \Ours{} is not only theoretically query-efficient but also achieves strong empirical performance. We benchmark our \Ours{} against 12 existing ZOO methods with 10000-dimensional functions and demonstrate that \Ours{} significantly outperforms existing methods.
\end{itemize}


\section{Preliminaries}

\subsection{Notation}
Throughout the paper, we use the bold font for vectors (e.g., $\BM x$) and the italic font for scalars (e.g., $x_i$). We use the same alphabet for a vector and its entries. 

For $i\in[d]$, let $\BM e_i:=[1_{[i'=i]}]_{i'\in[d]}\in\BB R^d$ denote the $i$-th standard basis of $\BB R^d$. For vectors $\BM u,\BM v\in\BB R^d$, let $\langle\BM u,\BM v\rangle:=\BM u\Tp\BM v$ denote the standard inner product. For a vector $\BM u\in\BB R^d$, let $\|\BM u\|_2:=\sqrt{\langle\BM u,\BM u\rangle}$ denote the Euclidean norm. 

For a dimension $i\in[d]$, let $\nabla_i$ denote the partial derivative operator w.r.t.\ the dimension $i$. For a subset $S\subseteq[d]$ of dimensions, let $\nabla_S:=[\nabla_i]_{i\in S}$ denote the partial derivative operator (as a column vector) w.r.t.\ dimensions $S$. Let $\nabla:=\nabla_{[d]}$ denote the gradient operator. 

For two finite sets $A,B$ with $|A|=|B|$, let $\CAL P_{A\to B}$ denote the set of bijections from $A$ to $B$. For instance, $\CAL P_{[d]\to[d]}$ is the set of permutations over $[d]$. For a finite set $A$, let $\mathop{\mathsf{Unif}}(A)$ denote the uniform distribution over $A$. 

\subsection{Assumptions}

We first introduce our new assumption on approximate gradient sparsity.

\begin{ASS}[Approximate gradient sparsity]\label{ASS:sparse}
The function $f$ has \emph{$\rho$-approximately $s$-sparse} gradients ($0<\rho\le1$, $1\le s\le d$):
\AM{\max_{\begin{subarray}{c}
I\subseteq[d]\,:\,|I|=s
\end{subarray}}{\|\nabla_If(\BM x)\|_2^2}\ge\rho\|\nabla f(\BM x)\|^2_2,\quad\forall\BM x.}
\end{ASS}

Our Assumption~\ref{ASS:sparse} is weaker than previous assumptions on gradient sparsity. The \emph{exact sparsity} assumption (i.e., $\|\nabla f(\BM x)\|_0\le s$) in \citet{zo-lasso} 
corresponds to $\rho=1$ in our Assumption~\ref{ASS:sparse}. The \emph{compressibility} assumption (i.e., $\exists\kappa>1$ s.t.\ the $i$-th largest magnitude in $\nabla f(\BM x)$ is at most $i^{-\kappa}\|\nabla f(\BM x)\|_2$, $\forall i\in[d]$) in \citet{zoro} assumes the distribution of the entries of $\nabla f(\BM x)$ while our Assumption~\ref{ASS:sparse} does not assume the distribution; also, it implies our Assumption~\ref{ASS:sparse} with $\rho=1-\frac1{(2\kappa-1)s^{2\kappa-1}}$. Hence, our Assumption~\ref{ASS:sparse} is a relaxation of existing assumptions.


In addition to approximate gradient sparsity, we make the following standard assumptions on the function $f$. 

\begin{ASS}[Lower boundedness]
The function $f$ is lower-bounded:
\AM{f_*:=\inf_{\BM x}f(\BM x)>-\infty.}
\end{ASS}

\begin{ASS}[Lipschitz continuity]\label{ASS:lip-cont}
The function $f$ is $L_0$-Lipschitz continuous:
\AM{|f(\BM x+\BM u)-f(\BM x)|\le L_0\|\BM u\|_2,\quad\forall(\BM x,\BM u).}
\end{ASS}

\begin{ASS}[Lipschitz smoothness]\label{ASS:lip-smooth}
The function $f$ is differentiable and $L_1$-Lipschitz smooth:
\AM{\|\nabla f(\BM x+\BM u)-\nabla f(\BM x)\|_2\le L_1\|\BM u\|_2,\quad\forall(\BM x,\BM u).}
\end{ASS}


\begin{algorithm}[t]
\caption{\OursFull{} (\Ours)}\label{alg:grace}
\begin{algorithmic}[1]
\REQUIRE{point $\BM x$; sparsity $s$; finite difference $\epsilon$; number $m$ of repeats; group size $n$; division schedule $\{D_r\}_{r\ge1}$}
\ENSURE{the gradient estimate $\BM g\in\BB R^d$}
\STATE candidate set $J\gets\varnothing$
\FOR{$l=1$ \textbf{to} $m$}
    \STATE random permutation $\omega\sim\mathop{\mathsf{Unif}}(\CAL P_{[d]\to[d]})$
    \FOR{$k=1$ \textbf{to} $\lceil d/n\rceil$}
        \STATE candidate group $S\gets\big\{i\in[d]:\big\lceil\frac{\omega(i)}{n}\big\rceil=k\big\}$
        \STATE iteration number $r\gets0$
        \REPEAT
            \STATE iteration number $r\gets r+1$
            \STATE random permutation $\varpi\sim\mathop{\mathsf{Unif}}(\CAL P_{S\to[|S|]})$
            \STATE block size $B\gets\big\lceil\frac{|S|}{D_r}\big\rceil$
            \STATE perturbations $\BM u\gets\BM0_d$, $\BM v\gets\BM0_d$
            \FOR{$i\in S$}
                \STATE random sign $\sigma_i\sim\mathop{\mathsf{Unif}}(\{\pm1\})$
                \STATE block label $h_i\gets\big\lceil\frac{\varpi(i)}{B}\big\rceil$
                \STATE perturbations $u_i\gets\epsilon\cdot\sigma_i$, $v_i\gets\epsilon\cdot\sigma_i\cdot h_i$
            \ENDFOR
            \STATE target $q\gets{\OP{round}}\big(\!\frac{f(\BM x+\BM v)-f(\BM x)}{f(\BM x+\BM u)-f(\BM x)}\!\big)$ via 2 queries
            \STATE candidate group $S\gets\{i\in S:h_i=q\}$
        \UNTIL{$|S|\le2$}
        \STATE candidate set $J\gets J\cup S$
    \ENDFOR
\ENDFOR
\STATE gradient estimate $\BM g\gets\BM 0_d$
\FOR{$j\in J$}
    \STATE finite difference $g_j\gets\frac{f(\BM x+\epsilon\BM e_j)-f(\BM x)}{\epsilon}$ via 1 query
\ENDFOR
\STATE\textbf{return} gradient estimate $\BM g$
\end{algorithmic}
\end{algorithm}

\section{\Ours: \OursFull}
In this section, we first propose a query-efficient method, \OursFull{} (\Ours), for estimating $\rho$-approximately $s$-sparse gradients using only $O\big(s\log\log\frac ds\big)$ adaptive queries.

Our 
\Ours{} generalizes the Indyk--Price--Woodruff (IPW) algorithm \cite{indyk2011power} in compressed sensing from linear measurements to nonlinear functions. 
First, \Ours{} randomly partitions the $d$ dimensions into $O(s)$ groups of size $O\big(\frac ds\big)$ so that (with high probability) each group has at most one large-gradient dimension. Then for each group, to locate the large-gradient dimension, \Ours{} constructs adaptive queries to iteratively shrink the candidate set of dimensions and finds the large-gradient dimension after $O\big({\log\log\frac ds}\big)$ iterations (with high probability). The procedure of \Ours{} is presented in Algorithm~\ref{alg:grace}. 

In the rest of this section, Section~\ref{ssec:grad-1} introduces how to design adaptive queries to locate the large-gradient dimension in a group, and Section~\ref{ssec:grad-s} describes how to divide the groups to achieve accurate gradient estimation with high probability. Proofs are deferred to Appendix~\ref{app:proof}.

\SSEC{Base case: Approximately $1$-sparse gradient}\label{ssec:grad-1}

Suppose that we have a candidate group $S\subseteq[d]$ in which there is only one dimension $j\in S$ with a large gradient $|\nabla_jf(\BM x)|$. We will introduce how to find $j$ with a small number of adaptive queries.

First, consider a motivating case: the \emph{signal-to-noise ratio} (SNR) $\frac{|\nabla_jf(\BM x)|}{\|\nabla_{S\SM j}f(\BM x)\|_2}$ is sufficiently large. Then, one can use an idea in \citet{ba2010lower} to encode dimension information into queries. Given a small $\epsilon>0$, define perturbations $\BM u',\BM v'\in\BB R^d$ by
\AL{u_i':=\epsilon\cdot 1_{[i\in S]},\;\;v_i':=\epsilon\cdot i\cdot1_{[i\in S]},\quad i\in[d].}
With a sufficiently large SNR $\frac{|\nabla_jf(\BM x)|}{\|\nabla_{S\SM j}f(\BM x)\|_2}$,
\ALN{eq:grad-1-base}{
\begin{split}
&\frac{f(\BM x+\BM v')-f(\BM x)}{f(\BM x+\BM u')-f(\BM x)}\approx\frac{\sum_{i\in[d]}v_i'\cdot\nabla_if(\BM x)}{\sum_{i\in[d]}u_i'\cdot\nabla_if(\BM x)}\\
={}&\frac{\sum_{i\in S}\epsilon\cdot i\cdot\nabla_if(\BM x)}{\sum_{i\in S}\epsilon\cdot\nabla_if(\BM x)}\approx\frac{\epsilon\cdot j\cdot\nabla_jf(\BM x)}{\epsilon\cdot\nabla_jf(\BM x)}=j.
\end{split}
}
In this case, we can find $j$ using only $O(1)$ queries by rounding $\frac{f(\BM x+\BM v')-f(\BM x)}{f(\BM x+\BM u')-f(\BM x)}$ to the nearest integer.

In general, however, it can happen that the SNR is not sufficiently large. To address this issue, the idea is iteratively increasing the SNR by identifying small-gradient dimensions and removing them from the candidate group $S$. As an improvement over the IPW algorithm, we introduce a technique that we call \emph{dependent random partition} for increasing the SNR: given a parameter $D$, we randomly divide $S$ into blocks of a fixed size $B:=\big\lceil\frac{|S|}{D}\big\rceil$ and label the blocks $1,\dots,\big\lceil\frac{|S|}{B}\big\rceil$. For each dimension $i\in S$, let $h_i$ denote the label of the block that $i$ belongs to. Each $i\in S$ is also assigned a random sign $\sigma_i\sim\mathop{\mathsf{Unif}}\{\pm1\}$. Then, define perturbations $\BM u,\BM v\in\BB R^d$ by
\AL{u_i:=\epsilon\cdot\sigma_i\cdot 1_{[i\in S]},\;v_i:=\epsilon\cdot\sigma_i\cdot h_i\cdot1_{[i\in S]},\;i\in[d].}
Intuitively, since all dimensions in the same block have the same label, then the ``signal'' of the label $h_j$ should be strengthened. Furthermore, although the labels $\{h_i\}$ are not mutually independent, their dependence is weakened by the random signs $\{\sigma_i\}$. Thus, the ``noises'' $h_i\ne h_j$ would not be strengthened. Hence, under suitable conditions,
\AL{
&\frac{f(\BM x+\BM v)-f(\BM x)}{f(\BM x+\BM u)-f(\BM x)}\approx\frac{\sum_{i\in[d]}v_i\cdot\nabla_if(\BM x)}{\sum_{i\in[d]}u_i\cdot\nabla_if(\BM x)}\\
={}&\frac{\sum_{i\in S}\epsilon\sigma_ih_i\cdot\nabla_if(\BM x)}{\sum_{i\in S}\epsilon\sigma_i\cdot\nabla_if(\BM x)}\approx\frac{\sum_{\begin{subarray}{l}i\in S\\h_i=h_j\end{subarray}}\!\!\!\epsilon\sigma_i h_i\cdot\nabla_if(\BM x)}{\sum_{\begin{subarray}{l}i\in S\\h_i=h_j\end{subarray}}\!\!\!\epsilon\sigma_i\cdot\nabla_if(\BM x)}=h_j.\nonumber
}
Once we obtain $h_j$, we can shrink the candidate group to $S':=\{i\in S:h_i=h_j\}$, which has an increased SNR. This is formally stated in Lemma~\ref{LEM:shrink}. 


\begin{lemma}\label{LEM:shrink}
There is an absolute constant $C_1>0$ such that given $\BM x\in\BB R^d$, $\epsilon>0$, $S\subseteq[d]$, $0<\delta_1,\delta_2<1$, and integer $2\le D\le d$, if there exists $j\in S$ with $|\nabla_j f(\BM x)|>$
\ALN{eq:shrink-lem-0}{\!\big(C_1D+\tfrac1D\big)\!\sqrt{2\ln\tfrac3{\delta_1}}\|\nabla_{S\setminus\{j\}} f(\BM x)\|_2+\lambda_{1,|S|}\cdot\epsilon,}
then using $O(1)$ queries, with probability $\ge1-(\delta_1+\delta_2)$, we can find a subset $S'\subseteq S$ with $j\in S'$ and
\ALN{eq:shrink-lem-2}{\label{eq:shrink-lem-1}|S'\SM j|&\le\frac{|S\SM j|}{D},\\\|\nabla_{S'\SM j} f(\BM x)\|_2&\le\frac{\|\nabla_{S\setminus\{j\}} f(\BM x)\|_2}{\sqrt{D\delta_2}}.}
Here, $\lambda_{1,n}:=L_1\big(d^2+d+\frac12\big)n$.
\end{lemma}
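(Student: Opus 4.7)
The plan is to analyze the rounding step $q = \OP{round}(\text{FD}(\BM v)/\text{FD}(\BM u))$, where $\text{FD}(\BM y) := f(\BM x+\BM y) - f(\BM x)$, exploiting two independent sources of randomness: the fixed-size random partition $\varpi$ determining the block labels $\{h_i\}$, and the independent Rademacher signs $\{\sigma_i\}$. First I would apply Assumption~\ref{ASS:lip-smooth} to linearize each finite difference via $\text{FD}(\BM y) = \langle \nabla f(\BM x), \BM y\rangle + R(\BM y)$ with $|R(\BM y)| \le \tfrac{L_1}{2}\|\BM y\|_2^2$. Since $\|\BM u\|_2^2 = |S|\epsilon^2$ and $\|\BM v\|_2^2 \le |S| D^2 \epsilon^2$ (using $h_i \le D$), the remainders satisfy $|R_u| \le \tfrac{L_1}{2}|S|\epsilon^2$ and $|R_v| \le \tfrac{L_1}{2} |S| D^2 \epsilon^2$; after the triangle-inequality argument below these combine into the $\lambda_{1,|S|}\epsilon$ term of hypothesis~\eqref{eq:shrink-lem-0}.

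Next, I would isolate the signal coordinate $i = j$ in the linear parts: $\langle \nabla f, \BM u \rangle = \epsilon \sigma_j \nabla_j f + \epsilon A$ with $A := \sum_{i \in S\SM j} \sigma_i \nabla_i f$, and $\langle \nabla f, \BM v \rangle = \epsilon h_j \sigma_j \nabla_j f + \epsilon B$ with $B := \sum_{i \in S\SM j} h_i \sigma_i \nabla_i f$. A short algebra gives the key identity $\text{FD}(\BM v)/\text{FD}(\BM u) - h_j = \frac{\epsilon(B - h_j A) + (R_v - h_j R_u)}{\epsilon \sigma_j \nabla_j f + \epsilon A + R_u}$ for the unrounded ratio. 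Conditionally on $\varpi$ (which fixes $\{h_i\}$), both $A$ and $B - h_j A = \sum_{i \in S\SM j} \sigma_i(h_i - h_j)\nabla_i f$ are Rademacher sums; Hoeffding's inequality together with a union bound yields $|A| \le \|\nabla_{S\SM j} f\|_2 \sqrt{2\ln(3/\delta_1)}$ and $|B - h_j A| \le D\|\nabla_{S\SM j} f\|_2\sqrt{2\ln(3/\delta_1)}$ simultaneously with probability $\ge 1-\delta_1$ (using $|h_i-h_j|\le D$). Plugging these into the identity together with the hypothesis on $|\nabla_j f|$, with absolute constant $C_1$ absorbing the triangle-inequality overheads, forces the unrounded ratio to lie within $1/2$ of $h_j$, so $q = h_j$ and the output $S' = \{i\in S : h_i = q\}$ equals $\{i \in S : h_i = h_j\}$. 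This immediately gives $j\in S'$ and $|S'\SM j| \le B-1 \le (|S|-1)/D$, establishing \eqref{eq:shrink-lem-1}.

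For the gradient-norm contraction~\eqref{eq:shrink-lem-2}, I would separately invoke Markov's inequality on the $\varpi$-measurable random set $\tilde S' := \{i\in S : h_i = h_j\}$. For a uniform bijection $\varpi$, each $i\in S\SM j$ falls in the same block as $j$ with probability $(B-1)/(|S|-1) \le 1/D$ (since $B = \lceil |S|/D \rceil \le (|S|+D-1)/D$), so $\Exp_\varpi[\|\nabla_{\tilde S'\SM j} f\|_2^2] \le \tfrac{1}{D}\|\nabla_{S\SM j} f\|_2^2$, and Markov yields $\|\nabla_{\tilde S'\SM j} f\|_2 \le \|\nabla_{S\SM j} f\|_2/\sqrt{D\delta_2}$ with failure probability $\le \delta_2$. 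Intersecting with the Hoeffding event (on which $S' = \tilde S'$) and a union bound give total failure probability $\le \delta_1+\delta_2$.

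The hard part will be the constant-level bookkeeping needed to obtain the specific noise coefficient $\bigl(C_1 D + \tfrac{1}{D}\bigr)\sqrt{2\ln(3/\delta_1)}$ rather than a looser $O(D)$ bound. I expect this to require a careful allocation of the concentration budget between numerator and denominator: controlling $B - h_j A$ gives the $C_1 D$ factor, while the $1/D$ summand should emerge from insisting that the denominator $\epsilon\sigma_j\nabla_j f + \epsilon A + R_u$ retain magnitude close to $\epsilon|\nabla_j f|$, so that the Hoeffding slack on $|A|$, once divided through by $D$ on the opposite side of the rounding inequality, contributes only the additive $1/D$ term. The dependence among $\{h_i\}$ induced by the fixed-size partition does not obstruct Hoeffding (applied conditionally on $\varpi$) but must be respected when computing $\Exp_\varpi$ in the Markov step above.
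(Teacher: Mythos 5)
Your proposal follows essentially the same route as the paper's proof: linearize both finite differences via Lipschitz smoothness, isolate the signal coordinate in the ratio, control $A$ and $B-h_jA$ by concentration of the Rademacher signs (your conditional-Hoeffding step given $\varpi$ is equivalent to the paper's Azuma argument), force the unrounded ratio within $1/2$ of $h_j$ so that $S'$ is $j$'s block, bound $|S'\SM j|$ by the block size, and get the norm contraction from $\Prb[h_i=h_j]\le 1/D$ plus Markov, with a union bound giving $\delta_1+\delta_2$. The only piece you defer --- showing the union-bounded thresholds fit under $(C_1D+\tfrac1D)\sqrt{2\ln\tfrac3{\delta_1}}$ for an absolute $C_1$ --- is exactly what the paper's technical Lemma~\ref{LEM:C1} does (via an asymmetric split $\tfrac{\delta_1}{4D+1}$ vs.\ $\tfrac{4D\delta_1}{4D+1}$ to optimize $C_1\approx2.29$), and a generic split still yields some absolute constant, so your argument proves the lemma as stated.
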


Eq.~\eqref{eq:shrink-lem-0} quantifies the condition between the SNR and the division parameter $D$; Eq.~\eqref{eq:shrink-lem-1} shows that the size of the candidate group shrinks by $D$ times; and Eq.~\eqref{eq:shrink-lem-2} shows that the SNR $\frac{|\nabla_jf(\BM x)|}{\|\nabla_{S'\SM j}f(\BM x)\|_2}$ increases by a factor of $\sqrt{D\delta_2}$. Lemma~\ref{LEM:shrink} will be used next as the key subroutine in Lemma~\ref{LEM:one}.

Since the SNR has increased, we can repeat Lemma~\ref{LEM:shrink} with a larger division parameter $D$ for the next iteration. Let $D_r$ denote the division parameter for the $r$-th iteration. With the help of our \emph{dependent random partition} technique, the size of the candidate group shrinks rapidly. With the candidate set $S$ shrinking and the division parameter increasing, we shall have $|S\SM j|<D_r$ at some iteration. Then by Eq.~\eqref{eq:shrink-lem-1}, 
\AL{|S'\SM j|\le\frac{|S\SM j|}{D_r}<1,}
which implies that $S'$ contains $j$ only, i.e., $j$ is found. 

It remains to bound the number of iterations. This depends on the growth rate of 
$D_r$. We show in Lemma \ref{LEM:one} that $D_r$ can grow rapidly so that $O(\log\log|S|)$ iterations suffice. 

\begin{lemma}\label{LEM:one}
There exist absolute constants $C_2,C_3>0$, $A>1$, and a division schedule $\{D_r\}_{r\ge1}$ such that (i) $D_r\ge C_3A^{(3/2)^{r-1}}$, and (ii) given $\BM x\in\BB R^d$, $\epsilon>0$, and $S\subseteq[d]$, if there exists $j\in S$ such that
\ALN{eq:one-cond}{|\nabla_jf(\BM x)|>C_2\|\nabla_{S\SM j}f(\BM x)\|_2+\lambda_{1,|S|}\cdot\epsilon,}
then $O(\log_{3/2}\log_A|S|)$ iterations of Lemma~\ref{LEM:shrink} with parameters $\{D_r\}_{r\ge1}$ can find $j$ with probability at least $1/2$.
\end{lemma}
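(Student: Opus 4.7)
The plan is to apply Lemma~\ref{LEM:shrink} iteratively with a schedule $\{D_r\}_{r\ge 1}$ and failure-probability budgets $\{\delta_{1,r},\delta_{2,r}\}_{r\ge 1}$ chosen so that (a) the signal-to-noise condition~\eqref{eq:shrink-lem-0} holds at every iteration, (b) the candidate group shrinks faster than doubly exponentially, and (c) the aggregate failure probability is at most $1/2$ by a union bound.

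Write $\Gamma := |\nabla_j f(\BM x)|$, and for each $r\ge 0$ let $S_r$ denote the candidate group after iteration $r$ (with $S_0 = S$), $n_r := |S_r\SM j|$, and $\eta_r := \|\nabla_{S_r\SM j} f(\BM x)\|_2$. I would pick $\delta_{1,r} = \delta_{2,r} = c_0/r^2$ with $c_0$ small enough that $\sum_{r\ge 1}(\delta_{1,r} + \delta_{2,r}) \le 1/2$, and at each $r$ take $D_r$ to be roughly the largest integer in $[2,d]$ for which~\eqref{eq:shrink-lem-0} holds, i.e.\ $D_r$ of order $(\Gamma - \lambda_{1,|S_{r-1}|}\epsilon)/(\eta_{r-1}\sqrt{\ln(1/\delta_{1,r})})$. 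Because $|S_{r-1}| \le |S|$, the term $\lambda_{1,|S_{r-1}|}\epsilon$ is dominated by its counterpart in~\eqref{eq:one-cond}, so the initial hypothesis already guarantees slack $\Gamma - \lambda_{1,|S_{r-1}|}\epsilon > C_2\eta_0$; feasibility of $D_r\ge 2$ at $r=1$ follows by taking $C_2$ large enough.

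The crux is that the two guarantees of Lemma~\ref{LEM:shrink} interact multiplicatively to yield doubly exponential growth of the SNR. From~\eqref{eq:shrink-lem-2} we have $\eta_r \le \eta_{r-1}/\sqrt{D_r\delta_{2,r}}$, and combining with $D_r\propto \Gamma/\eta_{r-1}$ (up to log factors) gives
\AM{\frac{\Gamma}{\eta_r}\;\gtrsim\;\Big(\frac{\Gamma}{\eta_{r-1}}\Big)^{3/2}\!\cdot\,\frac{1}{r\,(\ln r)^{1/4}}.}
Taking logarithms and unrolling, the polynomial-in-$r$ corrections contribute only a bounded geometric tail, so $\log(\Gamma/\eta_r) \ge c\cdot(3/2)^r$ once the initial SNR exceeds an absolute constant, which I secure by choosing $C_2$ large enough. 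Consequently $D_r\ge C_3 A^{(3/2)^{r-1}}$ for suitable $A>1$ and $C_3>0$, proving part (i).

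For termination, iterating~\eqref{eq:shrink-lem-1} gives $n_r \le n_0/\prod_{k=1}^r D_k$; since $\sum_{k=1}^r \log D_k = \Theta((3/2)^r)$, after $r = O(\log_{3/2}\log_A |S|) = O(\log\log|S|)$ iterations we have $n_r < 1$, so $S_r = \{j\}$ and $j$ is found. A union bound over these iterations bounds the total failure probability by $1/2$. The main technical obstacle I expect is verifying that the $\sqrt{\ln(1/\delta_{1,r})}$ factor in~\eqref{eq:shrink-lem-0} together with the additive $1/D_r$ and $\lambda_{1,|S_{r-1}|}\epsilon$ terms do not corrupt the $(3/2)$-power recursion; as sketched, the $O(1/r^2)$ schedule keeps all correction terms subpolynomial, and low-iteration transients are absorbed into the constants $C_2$, $C_3$, and $A$.
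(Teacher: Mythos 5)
There is a genuine gap in how you choose the division parameters. The lemma's quantifier order (and Algorithm~\ref{alg:grace}, which takes the division schedule as an input hyperparameter) requires a \emph{fixed} schedule $\{D_r\}_{r\ge1}$, determined by absolute constants alone, that works for every instance $(\BM x,\epsilon,S,j)$ satisfying Eq.~\eqref{eq:one-cond}. Your $D_r$ is defined as (roughly) the largest value for which Eq.~\eqref{eq:shrink-lem-0} holds, i.e.\ of order $(\Gamma-\lambda_{1,|S_{r-1}|}\epsilon)/(\eta_{r-1}\sqrt{\ln(1/\delta_{1,r})})$ with $\Gamma=|\nabla_jf(\BM x)|$ and $\eta_{r-1}=\|\nabla_{S_{r-1}\SM j}f(\BM x)\|_2$. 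These quantities are unknown to the zeroth-order algorithm (it never sees the gradient, nor $j$), and $\eta_{r-1}$ is moreover a random variable depending on the realized partitions of earlier rounds; an oblivious overestimate is not an option either, since $D$ appears on the \emph{left} of Eq.~\eqref{eq:shrink-lem-0}, so taking $D_r$ too large breaks the hypothesis of Lemma~\ref{LEM:shrink}. Hence your argument, as written, does not exhibit the schedule whose existence the lemma asserts, and property (i) becomes an instance-dependent byproduct rather than a statement about a fixed sequence.

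The repair is to pin the schedule to the worst case (initial SNR exactly at the threshold determined by $C_2$): seed $D_1$ from the constants and define a deterministic recurrence of the form $D_{r+1}:=\big\lfloor D_r^{3/2}\sqrt{\delta_{r,2}\ln(3/\delta_{r,1})/\ln(3/\delta_{r+1,1})}\big\rfloor$ with pre-specified failure budgets, and then prove \emph{by induction} that, on the success events, the guaranteed SNR gain $\sqrt{D_r\delta_{r,2}}$ from Eq.~\eqref{eq:shrink-lem-2} is enough for the hypothesis Eq.~\eqref{eq:shrink-lem-0} to hold again at round $r+1$ with the larger $D_{r+1}$ (instances with better SNR then satisfy it a fortiori). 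That induction, together with showing the resulting fixed schedule still grows like $A^{(3/2)^{r-1}}$ despite the floor and the $\delta$-dependent correction factors, is precisely the technical content of the paper's general version (Lemma~\ref{LEM:one-general}) and is what your sketch defers to ``absorbed into constants.'' Your remaining ingredients --- the union bound over per-round failure probabilities (the paper uses a geometric budget $\delta_{r,1},\delta_{r,2}\propto\phi^{r-1}\delta$ rather than your $1/r^2$, which only matters for the explicit constant $C_2\approx135$), the $\Gamma/\eta_r\gtrsim(\Gamma/\eta_{r-1})^{3/2}$ growth, and termination once $|S_r\SM j|<1$ after $O(\log_{3/2}\log_A|S|)$ rounds --- do mirror the paper's argument.
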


We provide the general version of Lemma~\ref{LEM:one} in Lemma~\ref{LEM:one-general}, which gives the exact relation between the failure probability and the absolute constants. Besides that, we recommend choosing division parameters via the recurrence $D_{r+1}:=\lfloor D_r^{3/2}\rfloor$ with an appropriate $D_1$ in practice. Note that the subset $S$ in Lemma~\ref{LEM:one} is different from the subset $I$ in Assumption~\ref{ASS:sparse}. We will show how to find such subsets $S$ in Section~\ref{ssec:grad-s}.

We remark that 
our constant $C_2\approx135$ is nearly $4300$ times smaller than the corresponding constant $C_2'\approx579263$ of the IPW algorithm. This is owing to our \emph{dependent random partition} technique and our corresponding novel analysis. 
In contrast to the purely theoretical IPW algorithm, our \Ours{} achieves strong empirical performance, which is demonstrated by our experiments in Section~\ref{sec:exp}. 

\SSEC{General case: Approximately $s$-sparse gradient}\label{ssec:grad-s}

Building upon the base case, next we describe how to partition the $d$ dimensions into groups so that most groups satisfy the condition Eq.~\eqref{eq:one-cond} in Lemma~\ref{LEM:one}.

Here we employ again our aforementioned technique \emph{dependent random partition}: given a parameter $n$, we randomly partition the $d$ dimensions into groups of a fixed size $n$. It remains to determine the group size $n$. On the one hand, the condition Eq.~\eqref{eq:one-cond} requires a group to have an SNR greater than an absolute constant $C_2$. This means that $S\SM j$ should not contain too many dimensions, so the group size $n$ should not be too large. On the other hand, the group size $n$ should not be too small. Otherwise, the number of groups would be too large, resulting in a large number of queries. For example, if $S\SM j=\varnothing$, the SNR would be $\infty$, but the overall query complexity would be $\Omega(d)$. We will show in Lemma~\ref{LEM:cand} that $n=\Theta\big(\frac ds\big)$ suffices, and we use repetition to ensure success with high probability. 

Combining the dimensions $j$ found in each group gives a candidate set $J\subseteq[d]$. We show in Lemma~\ref{LEM:cand} that $J$ is likely to contain most large-gradient dimensions. 

\begin{lemma}\label{LEM:cand}
Given $\BM x\in\BB R^d$, $\epsilon>0$, $0<\alpha<\rho$, and $0<\delta<1$, 
there exist hyperparameters for Algorithm~\ref{alg:grace} such that with probability at least $1-\delta$, it can use $O\big(s\log\log\frac ds\big)$ adaptive queries to find a set $J\subseteq[d]$ of size $O(s)$ such that 
\AL{\|\nabla_Jf(\BM x)\|_2^2\ge\alpha\|\nabla f(\BM x)\|_2^2-\lambda_{2,d}\epsilon-\lambda_{1,d}^2\epsilon^2,}
where $\lambda_{2,d}:=2L_0\lambda_{1,d}$. The $O$ notation hides constants that depend only on $\rho$, $\alpha$, and $\delta$.
\end{lemma}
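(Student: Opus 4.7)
The plan is to combine the approximate gradient sparsity assumption with a random partition analysis, thereby reducing the problem to $O(s)$ parallel invocations of Lemma~\ref{LEM:one}. Fix a set $I\subseteq[d]$ with $|I|=s$ attaining $\|\nabla_If(\BM x)\|_2^2\ge\rho\|\nabla f(\BM x)\|_2^2$ under Assumption~\ref{ASS:sparse}, and call its coordinates \emph{heavy}. Inside the main loop, each of the $m$ random permutations $\omega$ cuts $[d]$ into $\lceil d/n\rceil$ groups of size $n=\Theta(d/s)$, where the hidden constant is chosen later.

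First, I would quantify when a single heavy coordinate $j\in I$ is \emph{recoverable} in its group $S\ni j$, meaning that the SNR hypothesis \eqref{eq:one-cond} of Lemma~\ref{LEM:one} holds. Decompose the noise as $\|\nabla_{S\SM j}f\|_2^2=\|\nabla_{(I\cap S)\SM j}f\|_2^2+\|\nabla_{S\cap I\Cp}f\|_2^2$. For the light term, uniform partitioning gives $\Exp[\|\nabla_{S\cap I\Cp}f\|_2^2\mid j\in S]\le\frac{n-1}{d-1}(1-\rho)\|\nabla f\|_2^2$, which can be made an arbitrarily small fraction of $\|\nabla f\|_2^2$ by tuning the constant in $n$. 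For the heavy-on-heavy term, $\Prb[i\in S\mid j\in S]\le\frac{n-1}{d-1}$ for $i\in I\SM j$, so the expected heavy mass carried by coordinates $j\in I$ that are collision-free and satisfy \eqref{eq:one-cond} is at least $(\rho-\eta)\|\nabla f\|_2^2$ for any prescribed slack $\eta\in(0,\rho-\alpha)$. A Markov inequality then yields a constant probability per repetition that a set of heavy coordinates carrying $\ge\alpha\|\nabla f\|_2^2$ of the mass is simultaneously recoverable.

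Next, I would apply Lemma~\ref{LEM:one} to each recoverable $j$: it succeeds with probability $\ge\tfrac12$ using $O(\log\log|S|)=O(\log\log\frac ds)$ iterations and $O(1)$ queries each. Repeating the outer loop $m=\Theta(\log(s/\delta))$ times and taking a union bound over the at most $s$ heavy coordinates in the targeted subset drives the per-coordinate failure probability below $\delta/(2s)$, so with probability $\ge1-\delta$ the candidate set $J$ contains a subset $I'\subseteq I$ with $\|\nabla_{I'}f\|_2^2\ge\alpha\|\nabla f\|_2^2$ up to the additive slack. The total query count is $m\cdot\lceil d/n\rceil\cdot O(\log\log\frac ds)\cdot O(1)+|J|=O(s\log\log\frac ds)$, while $|J|\le m\cdot\lceil d/n\rceil=O(s)$, both absorbing the $\rho,\alpha,\delta$-dependent constants.

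Finally, the residuals $\lambda_{2,d}\epsilon$ and $\lambda_{1,d}^2\epsilon^2$ come from the finite-difference slack in \eqref{eq:one-cond}: a heavy $j$ with $|\nabla_jf(\BM x)|\le C_2\|\nabla_{S\SM j}f\|_2+\lambda_{1,|S|}\epsilon$ can escape detection, and its contribution to $\|\nabla_If\|_2^2-\|\nabla_Jf\|_2^2$ is bounded via $(a+b)^2\le 2a^2+2b^2$, the Lipschitz estimate $\|\nabla f\|_2\le\sqrt d\,L_0$, and the identity $\lambda_{2,d}=2L_0\lambda_{1,d}$, after which the deterministic inequality of the lemma follows. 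The main obstacle is the joint bookkeeping: one must pick the constants in $n$, $m$, and the division schedule $\{D_r\}$ so that the three probabilistic bounds (light-noise tail, heavy-heavy collision tail, and per-group Lemma~\ref{LEM:one} failure) can all be made compatible with a single $\rho-\alpha$ slack while still giving $|J|=O(s)$ and the claimed $O(s\log\log\frac ds)$ query count.
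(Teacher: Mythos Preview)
Your overall skeleton is right, but the union-bound step creates a genuine gap. You set $m=\Theta(\log(s/\delta))$ so that each of the at most $s$ targeted heavy coordinates is individually recovered with probability $\ge 1-\delta/(2s)$. With $\lceil d/n\rceil=\Theta(s)$ groups per repetition, this gives $|J|\le m\cdot\lceil d/n\rceil=\Theta(s\log(s/\delta))$ and a query count of $\Theta\big(s\log(s/\delta)\log\log\tfrac ds\big)$. Neither matches the lemma's claims, because the statement explicitly says the $O$-constants may depend only on $\rho,\alpha,\delta$---an extra $\log s$ factor is not allowed. Your final accounting line ``$|J|\le m\cdot\lceil d/n\rceil=O(s)$'' is therefore incorrect.

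The missing idea is that you do not need every targeted heavy coordinate to be found; you only need the total \emph{missed mass} to be small. Concretely: once you have shown that each $j$ in the heavy set $H^*$ is found in a single repetition with probability at least some constant $c$ (combining the partition analysis with the $\ge\tfrac12$ success of Lemma~\ref{LEM:one}), linearity of expectation gives
\[
\Exp\big[\|\nabla_{H^*\setminus J}f(\BM x)\|_2^2\big]\;\le\;(1-c)^m\,\|\nabla_{H^*}f(\BM x)\|_2^2.
\]
Now choose $m$ to be a \emph{constant} depending only on $\rho,\alpha,\delta$ so that $(1-c)^m\le\tfrac{(\rho-\alpha-\text{slack})\delta}{\rho}$, and apply Markov's inequality once to $\|\nabla_{H^*\setminus J}f(\BM x)\|_2^2$. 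This yields the required high-probability bound with $|J|\le m\cdot\lceil d/n\rceil=O(s)$ and query count $O\big(s\log\log\tfrac ds\big)$, both with $(\rho,\alpha,\delta)$-only constants. A minor side point: the Lipschitz bound you invoke should be $\|\nabla f(\BM x)\|_2\le L_0$ (from Assumption~\ref{ASS:lip-cont}), not $\sqrt d\,L_0$; this is what makes the cross term collapse to $\lambda_{2,d}\epsilon=2L_0\lambda_{1,d}\epsilon$.
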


Finally, for each candidate dimension $j\in J$, we estimate the gradient $\nabla_jf(\BM x)$ via finite difference:
\AL{g_j:=\frac{f(\BM x+\epsilon\BM e_j)-f(\BM x)}{\epsilon}.}
With the good candidate set $J$, we show in Theorem~\ref{THM:grad} that the direction of the gradient estimate $\BM g$ aligns well with that of the true gradient $\nabla f(\BM x)$.


\begin{theorem}\label{THM:grad}
Given $\BM x\in\BB R^d$, $\epsilon>0$, and $0<\alpha<\rho$, there exist hyperparameters for Algorithm~\ref{alg:grace} such that it can use $O\big(s\log\log\frac ds\big)$ adaptive queries to find a gradient estimate $\BM g\in\BB R^d$ such that with probability $1$,
\GA{\|\BM g\|_2\le\|\nabla f(\BM x)\|_2+O(L_1\sqrt s\epsilon),\label{eq:grad-norm}\\
\Exp[\langle\nabla f(\BM x),\BM g\rangle\mid\BM x]\ge\alpha\|\nabla f(\BM x)\|_2^2-\lambda_{3,d,s}\epsilon-\lambda_{4,d}\epsilon^2,\nonumber}
where $\lambda_{3,d,s}=O(\lambda_{2,d}+L_0L_1\sqrt s)$, $\lambda_{4,d}=O(\lambda_{1,d}^2)$.
The $O$ notation hides constants that depend only on $\rho$ and $\alpha$.
\end{theorem}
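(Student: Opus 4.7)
My plan is to build $\BM g$ exactly as in Algorithm~\ref{alg:grace}: first invoke Lemma~\ref{LEM:cand} to obtain a candidate set $J\subseteq[d]$ with $|J|=O(s)$ using $O(s\log\log\frac ds)$ adaptive queries, and then set $g_j=\frac{f(\BM x+\epsilon\BM e_j)-f(\BM x)}\epsilon$ for $j\in J$ and $g_j=0$ otherwise. The $|J|=O(s)$ additional finite-difference queries preserve the overall query budget.

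Lipschitz smoothness (Assumption~\ref{ASS:lip-smooth}) gives the deterministic coordinate-wise bound $|g_j-\nabla_jf(\BM x)|\le\frac{L_1\epsilon}{2}$ for every $j\in J$ by a standard Taylor-remainder estimate. Eq.~\eqref{eq:grad-norm} follows immediately: since $\BM g$ is supported on $J$,
\[
\|\BM g\|_2\le\|\nabla_Jf(\BM x)\|_2+\|\BM g-\nabla_Jf(\BM x)\|_2\le\|\nabla f(\BM x)\|_2+\sqrt{|J|}\cdot\tfrac{L_1\epsilon}{2},
\]
and $\sqrt{|J|}=O(\sqrt s)$ yields the stated bound with probability $1$, independently of the random $J$ returned by the algorithm.

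For the inner-product inequality, I would expand
\[
\langle\nabla f(\BM x),\BM g\rangle=\|\nabla_Jf(\BM x)\|_2^2+\sum_{j\in J}\nabla_jf(\BM x)\,(g_j-\nabla_jf(\BM x)),
\]
and bound the cross sum via Cauchy--Schwarz by $\frac{L_1\epsilon}{2}\sqrt{|J|}\|\nabla_Jf(\BM x)\|_2\le\frac{L_1\epsilon}{2}\sqrt{|J|}\|\nabla f(\BM x)\|_2$. Using the $L_0$-Lipschitz assumption (Assumption~\ref{ASS:lip-cont}) to substitute $\|\nabla f(\BM x)\|_2\le L_0$ is the one nontrivial maneuver here, needed to turn the extraneous $\|\nabla f(\BM x)\|_2$ factor into a constant so that the cross term is at most $O(L_0L_1\sqrt s\,\epsilon)$ with probability $1$.

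To match the constant $\alpha$ in the expectation bound, I would apply Lemma~\ref{LEM:cand} with a slightly larger parameter $\alpha'\in(\alpha,\rho)$ and a failure probability $\delta\in(0,1)$ chosen so that $(1-\delta)\alpha'\ge\alpha$; both depend only on $\rho$ and $\alpha$. On the success event, $\|\nabla_Jf(\BM x)\|_2^2\ge\alpha'\|\nabla f(\BM x)\|_2^2-\lambda_{2,d}\epsilon-\lambda_{1,d}^2\epsilon^2$; on the failure event, $\|\nabla_Jf(\BM x)\|_2^2\ge0$. Taking conditional expectation and combining with the deterministic cross-term bound then delivers the claimed inequality with $\lambda_{3,d,s}=O(\lambda_{2,d}+L_0L_1\sqrt s)$ and $\lambda_{4,d}=O(\lambda_{1,d}^2)$. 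I do not expect a real obstacle beyond this $\alpha'$--$\delta$ bookkeeping and the $L_0$ trick used to absorb $\|\nabla f(\BM x)\|_2$.
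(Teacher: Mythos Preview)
Your proposal is correct and follows essentially the same route as the paper's proof: invoke Lemma~\ref{LEM:cand} with a boosted parameter $\alpha'\in(\alpha,\rho)$ and failure probability $\delta=1-\alpha/\alpha'$, use Lipschitz smoothness for the coordinatewise finite-difference error $|g_j-\nabla_jf(\BM x)|\le\frac{L_1\epsilon}{2}$, apply Cauchy--Schwarz together with the bound $\|\nabla f(\BM x)\|_2\le L_0$ to control the cross term, and then split the conditional expectation over the success/failure events of Lemma~\ref{LEM:cand}. The paper's argument is a fleshed-out version of exactly this outline, including the same ``$L_0$ trick'' and the same $\alpha'$--$\delta$ bookkeeping you identified.
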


Theorem~\ref{THM:grad} shows that the inner product $\langle\nabla f(\BM x),\BM g\rangle$ is relatively large, and Eq.~\eqref{eq:grad-norm} shows that it is not due to an unbounded norm $\|\BM g\|_2$. Together, we can conclude that the gradient estimate $\BM g$ has high cosine similarity with the true gradient $\nabla f(\BM x)$. This property will be useful in improving the rate of convergence in nonconvex ZOO.
\begin{algorithm}[t]
\caption{Zeroth-order gradient descent with \Ours}\label{alg:zo-grace}
\begin{algorithmic}[1]
\REQUIRE{initial point $\BM x_1$; step size $\eta$; finite difference schedule $\{\epsilon_t\}_{t\ge1}$; hyperparameters for \Ours{}}
\ENSURE{optimized point}
\STATE step number $t\gets1$
\REPEAT
    \STATE gradient estimate $\BM g_t$ via \Ours{} with $(\BM x_t,\epsilon_t)$
    \STATE next point $\BM x_{t+1}\gets\BM x_t-\eta\BM g_t$
    \STATE step number $t\gets t+1$
\UNTIL{stopping criterion is met}
\STATE\textbf{return} $\argmin_{\BM x\in\{\BM x_1,\dots,\BM x_t\}}f(\BM x)$
\end{algorithmic}
\end{algorithm}

\section{Zeroth-Order Optimization with \Ours}
As a zeroth-order gradient estimator, \Ours{} can be applied to ZOO by integrating the estimated gradient into existing first-order methods. In this work, we consider zeroth-order gradient descent with \Ours{}.

Let $\BM x_1\in\BB R^d$ denote the initial point, let $\eta>0$ denote the step size, and let $\{\epsilon_t\}_{t\ge1}$ denote the finite difference schedule. At each iteration $t\ge1$, the algorithm finds a gradient estimate $\BM g_t\in\BB R^d$ using \Ours{} with $(\BM x_t,\epsilon_t)$ and performs a gradient descent step:
\AL{\BM x_{t+1}\gets\BM x_t-\eta_t\BM g_t.}
The overall procedure is presented in Algorithm~\ref{alg:zo-grace}. 

Next, we analyze the rate of convergence of Algorithm~\ref{alg:zo-grace}. With the help of the accurate gradient estimation by \Ours{}, Algorithm~\ref{alg:zo-grace} achieves an $O\big(\frac1T\big)$ rate of convergence for finding a first-order stationary point in nonconvex ZOO. A comparison of nonconvex bounds is summarized in Table~\ref{tab:bounds}.

\begin{theorem}\label{THM:gd-nc}
Given any initial point $\BM x_1\in\BB R^d$ and any $\varDelta>0$, there exist a step size $\eta$, a finite difference schedule $\{\epsilon_t\}_{t\ge1}$ for Algorithm~\ref{alg:zo-grace}, and hyperparameters for \Ours{} such that for every $T\ge1$,
\AL{\Exp\Big[\min_{t=1,\dots,T}\|\nabla f(\BM x_t)\|_2^2\Big]\le\frac{\frac{2L_1}{\rho^2}(f(\BM x_1)-f_*)+\varDelta}T.}
\end{theorem}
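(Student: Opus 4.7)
The plan is standard descent analysis combined with the sparse-gradient estimator guarantees of Theorem~\ref{THM:grad}. I would start from $L_1$-smoothness (Assumption~\ref{ASS:lip-smooth}) applied to the update $\BM x_{t+1} = \BM x_t - \eta \BM g_t$, yielding the one-step descent inequality
$$f(\BM x_{t+1}) \le f(\BM x_t) - \eta \langle \nabla f(\BM x_t), \BM g_t\rangle + \frac{L_1\eta^2}{2}\|\BM g_t\|_2^2.$$

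Next I would take conditional expectation given $\BM x_t$ and invoke both parts of Theorem~\ref{THM:grad}: its lower bound on $\Exp[\langle \nabla f(\BM x_t),\BM g_t\rangle\mid \BM x_t]$ controls the linear term, while its almost-sure bound $\|\BM g_t\|_2\le\|\nabla f(\BM x_t)\|_2+O(L_1\sqrt s\,\epsilon_t)$ controls $\|\BM g_t\|_2^2$ after expanding the square. Using $\|\nabla f(\BM x_t)\|_2\le L_0$ (a direct consequence of Assumption~\ref{ASS:lip-cont}) absorbs the cross term, so all $\epsilon_t$-dependent pieces collapse into an additive error $e(\epsilon_t)=O(\epsilon_t+\epsilon_t^2)$. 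This yields
$$\Exp[f(\BM x_{t+1})\mid\BM x_t]-f(\BM x_t)\le -\Big(\alpha\eta-\frac{L_1\eta^2}{2}\Big)\|\nabla f(\BM x_t)\|_2^2+e(\epsilon_t).$$

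I would then choose the step size $\eta=\alpha/L_1$, which optimizes the quadratic in $\eta$ and gives per-step coefficient $\alpha^2/(2L_1)$. Taking total expectation, summing over $t=1,\dots,T$, telescoping, and using $\Exp[f(\BM x_{T+1})]\ge f_*$ together with $\min_t(\cdot)\le\frac1T\sum_t(\cdot)$ produces
$$\Exp\Big[\min_{t=1,\dots,T}\|\nabla f(\BM x_t)\|_2^2\Big]\le\frac{\frac{2L_1}{\alpha^2}(f(\BM x_1)-f_*)+\frac{2L_1}{\alpha^2}\sum_{t=1}^T e(\epsilon_t)}{T}.$$

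It remains to tune the free parameters. I would pick $\alpha<\rho$ close enough to $\rho$ so that $\frac{2L_1}{\alpha^2}(f(\BM x_1)-f_*)\le\frac{2L_1}{\rho^2}(f(\BM x_1)-f_*)+\varDelta/2$, and choose a summable schedule such as $\epsilon_t=c/t^2$ with $c$ small enough so that $\frac{2L_1}{\alpha^2}\sum_{t=1}^\infty e(\epsilon_t)\le\varDelta/2$; the \Ours{} hyperparameters are then those guaranteed by Theorem~\ref{THM:grad} for this $\alpha$. The only slightly delicate point I foresee is that, because the bound must hold for \emph{every} $T$ with one fixed schedule, the $\epsilon_t$'s must be summable (not merely small), and the two slacks -- one from taking $\alpha<\rho$, one from the $\epsilon_t$ error budget -- must both fit inside the single additive $\varDelta$. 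Everything else is routine algebra.
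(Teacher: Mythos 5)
Your proposal is correct and follows essentially the same route as the paper's proof: descent lemma plus the two guarantees of Theorem~\ref{THM:grad}, the choice $\eta=\alpha/L_1$, a summable schedule $\epsilon_t\propto1/t^2$, telescoping, and $\min_t\le\frac1T\sum_t$. Your final step of taking $\alpha<\rho$ close enough to $\rho$ and splitting $\varDelta$ into two budgets is in fact a point the paper glosses over (its displayed bound ends with $\frac{2L_1}{\alpha^2}$ rather than $\frac{2L_1}{\rho^2}$), so your handling of that slack is a welcome clarification rather than a deviation.
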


The proof of Theorem~\ref{THM:gd-nc} is owing to Theorem~\ref{THM:grad}, which enables us to show a constant upper bound of the cumulative regret $\Exp[\sum_{t=1}^T\|\nabla f(\BM x_t)\|_2^2]$. Furthermore, we also provide a high-probability bound of convergence. 

\begin{theorem}\label{THM:gd-prob}
Given any initial point $\BM x_1\in\BB R^d$, any step size $0<\eta<\frac\rho{L_1}$, any $0<\beta<1$, and any $\varDelta>0$, there exist a finite difference schedule $\{\epsilon_t\}_{t\ge1}$ for Algorithm~\ref{alg:zo-grace} and hyperparameters for \Ours{} such that with probability at least $1-\beta$, for all $T\ge1$ simultaneously,
\AL{\!\min_{t=1,\dots,T}\!\|\nabla f(\BM x_t)\|_2^2\!\le\!\frac{\frac{1+\tfrac{2(1-L_1\eta)}{L_1\eta\beta}}{\eta-\tfrac{L_1\eta^2}2}(f(\BM x_1)\!-\!f_*)\!+\!\varDelta}T.}
\end{theorem}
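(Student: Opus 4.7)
The plan is to combine a path-wise descent inequality from $L_1$-smoothness with a single application of Markov's inequality to a carefully chosen non-negative residual. Starting from $f(\BM x_{t+1}) \le f(\BM x_t) - \eta\langle\nabla f(\BM x_t),\BM g_t\rangle + \frac{L_1\eta^2}{2}\|\BM g_t\|_2^2$, substituting the path-wise norm bound $\|\BM g_t\|_2 \le \|\nabla f(\BM x_t)\|_2 + K\epsilon_t$ from Eq.~\eqref{eq:grad-norm} (with $K=O(L_1\sqrt s)$), using $\|\nabla f\|_2 \le L_0$ from Lipschitz continuity, and writing $V_t := \|\nabla f(\BM x_t)\|_2^2 - \langle\nabla f(\BM x_t),\BM g_t\rangle$, I rearrange the step into
\[
\bigl(\eta - \tfrac{L_1\eta^2}{2}\bigr)\|\nabla f(\BM x_t)\|_2^2 \le f(\BM x_t) - f(\BM x_{t+1}) + \eta V_t + \mathrm{err}_t,
\]
with $\mathrm{err}_t = O(L_1\eta^2 K L_0 \epsilon_t + L_1\eta^2 K^2\epsilon_t^2)$. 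Summing over $t \le T$ and using $f(\BM x_{T+1}) \ge f_*$ gives the deterministic inequality
\[
\bigl(\eta - \tfrac{L_1\eta^2}{2}\bigr)\sum_{t=1}^T \|\nabla f(\BM x_t)\|_2^2 \le (f(\BM x_1) - f_*) + \eta\sum_{t=1}^T V_t + \sum_{t=1}^T \mathrm{err}_t, \quad (\star)
\]
which holds simultaneously for every $T \ge 1$.

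The crucial calibration is to choose the GraCe hyperparameters so that Theorem~\ref{THM:grad} applies with $\alpha = L_1\eta$; this is permitted because $0 < \eta < \rho/L_1$ implies $\alpha\in(0,\rho)$. Then $\Exp[V_t \mid \BM x_t] \le (1-L_1\eta)\|\nabla f(\BM x_t)\|_2^2 + \lambda_{3,d,s}\epsilon_t + \lambda_{4,d}\epsilon_t^2$. Taking expectation in $(\star)$ and rearranging (as in the Theorem~\ref{THM:gd-nc} argument) gives
\[
\Exp\!\left[\sum_{t=1}^\infty\|\nabla f(\BM x_t)\|_2^2\right] \le \frac{2(f(\BM x_1)-f_*)}{L_1\eta^2} + \mathrm{small}_1,
\]
provided $\{\epsilon_t\}$ is chosen so that $\sum\epsilon_t$ and $\sum\epsilon_t^2$ converge (e.g., geometric decay). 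Cauchy--Schwarz plus the norm bound give $V_t \ge -K\epsilon_t\|\nabla f(\BM x_t)\|_2$ path-wise, so $\tilde V_t := V_t + K\epsilon_t\|\nabla f(\BM x_t)\|_2 \ge 0$; propagating the expectation bound yields $\Exp[\eta\sum_{t=1}^\infty \tilde V_t] \le 2(1-L_1\eta)(f(\BM x_1)-f_*)/(L_1\eta) + \mathrm{small}_2$.

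Markov's inequality on the non-negative $\eta\sum_{t=1}^\infty \tilde V_t$ now gives, with probability $\ge 1-\beta$, $\eta\sum_{t=1}^\infty \tilde V_t \le \beta^{-1}\bigl[2(1-L_1\eta)(f(\BM x_1)-f_*)/(L_1\eta) + \mathrm{small}_2\bigr]$. On this single good event, since $(\star)$ is deterministic and $\sum_{t\le T}V_t \le \sum_{t=1}^\infty \tilde V_t$,
\[
\bigl(\eta - \tfrac{L_1\eta^2}{2}\bigr)\sum_{t=1}^T \|\nabla f(\BM x_t)\|_2^2 \le \!\left(1 + \tfrac{2(1-L_1\eta)}{L_1\eta\beta}\right)\!(f(\BM x_1) - f_*) + \mathrm{small}_3
\]
holds for all $T \ge 1$ simultaneously. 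Dividing by $T(\eta - L_1\eta^2/2)$, using $\min_{t\le T}\|\nabla f\|_2^2 \le T^{-1}\sum_{t\le T}\|\nabla f\|_2^2$, and choosing $\{\epsilon_t\}$ small enough (with $\beta$ fixed) to absorb $\mathrm{small}_3/(\eta - L_1\eta^2/2)$ into $\varDelta$ finishes the proof. The main obstacle is producing the exact coefficient $\tfrac{2(1-L_1\eta)}{L_1\eta}$: this forces the specific calibration $\alpha = L_1\eta$, available only under the hypothesis $\eta < \rho/L_1$; the ``simultaneously for all $T$'' requirement is handled essentially for free by applying Markov just once to the infinite sum $\sum_{t=1}^\infty \tilde V_t$, after which the deterministic $(\star)$ transfers the bound uniformly to every partial sum.
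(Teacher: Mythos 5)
Your proposal is correct and is essentially the paper's own argument: calibrate Theorem~\ref{THM:grad} with $\alpha=L_1\eta$ (valid since $\eta<\rho/L_1$), bound $\Exp\big[\sum_{t=1}^\infty\|\nabla f(\BM x_t)\|_2^2\big]$ as in Theorem~\ref{THM:gd-nc}, apply Markov's inequality once to a non-negative residual built from $\|\nabla f(\BM x_t)\|_2^2-\langle\nabla f(\BM x_t),\BM g_t\rangle$ plus an $O(\epsilon_t)$ correction, and transfer the bound uniformly to all $T$ via the deterministic telescoped descent inequality. The only cosmetic difference is that you obtain non-negativity of the residual from Cauchy--Schwarz and the norm bound Eq.~\eqref{eq:grad-norm} (and use geometrically decaying $\epsilon_t$), whereas the paper uses the finite-difference error on the candidate set and $\epsilon_t=\epsilon_1/t^2$; both yield the same constants and conclusion.
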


In practice, we recommend using a constant $\epsilon$ for all $t$ in order to avoid underflow in floating point arithmetics. Our experiments demonstrate that a constant $\epsilon$ still works well. 
\begin{table*}[t]
\caption{Comparison among ZOO methods (mean$\,\pm\,$s.e.).}
\label{tab:exp-main}
\begin{center}\small
\begin{tabular}{cl|ccc}
\toprule
\multicolumn{1}{c}{\textbf{Type}}&\multicolumn{1}{c|}{\textbf{Method}}&\textsc{Distance}&\textsc{Magnitude}&\textsc{Attack}\\
\midrule
\multirow{6}*{\makecell[c]{Full\\Gradient}} & RS & 0.66326\,{$\pm$\,0.00780} & 0.91847\,{$\pm$\,0.00140} & 0.41310\,{$\pm$\,0.00048} \\
 & TPGE & 0.75618\,{$\pm$\,0.00680} & 0.96111\,{$\pm$\,0.00110} & 0.42757\,{$\pm$\,0.00310} \\
 & RSPG & 0.47299\,{$\pm$\,0.00994} & 0.69877\,{$\pm$\,0.00228} & 0.46512\,{$\pm$\,0.00115} \\
 & ZO-signSGD & 0.93413\,{$\pm$\,0.00823} & 0.98787\,{$\pm$\,0.00024} & 0.81652\,{$\pm$\,0.00046} \\
 & ZO-AdaMM & 0.80454\,{$\pm$\,0.01442} & 0.97235\,{$\pm$\,0.00076} & 0.59624\,{$\pm$\,0.00747} \\
 & GLD & 0.85677\,{$\pm$\,0.00436} & 0.98267\,{$\pm$\,0.00074} & 0.85497\,{$\pm$\,0.00172} \\
\midrule
\multirow{7}*{\makecell[c]{Sparse\\Gradient}} & LASSO & 0.47432\,{$\pm$\,0.00873} & 0.70524\,{$\pm$\,0.00343} & 0.33776\,{$\pm$\,0.00027} \\
 & SparseSZO & 0.27062\,{$\pm$\,0.00994} & 0.09523\,{$\pm$\,0.00277} & 0.45858\,{$\pm$\,0.00151} \\
 & TruncZSGD & 0.18022\,{$\pm$\,0.01223} & 0.14323\,{$\pm$\,0.01869} & 0.99149\,{$\pm$\,0.00214} \\
 & ZORO & 0.51254\,{$\pm$\,0.06313} & 0.02534\,{$\pm$\,0.00188} & 0.99998\,{$\pm$\,0.00001} \\
 & ZO-BCD & 0.00708\,{$\pm$\,0.00256} & 0.02759\,{$\pm$\,0.01988} & 0.99994\,{$\pm$\,0.00003} \\
 & SZOHT & 0.49686\,{$\pm$\,0.03160} & 0.12000\,{$\pm$\,0.09466} & 0.33883\,{$\pm$\,0.00554} \\
 & \textbf{GraCe} (ours) & \textbf{0.00508}\,{$\pm$\,0.00242} & \textbf{0.00449}\,{$\pm$\,0.00005} & \textbf{0.32381}\,{$\pm$\,0.00097} \\
\bottomrule
\end{tabular}
\end{center}
\end{table*}

\section{Experiments}\label{sec:exp}

To demonstrate the empirical competence of our \Ours{}, we compare it with 12 strong baselines on three challenging functions. In the rest of the section, we introduce our benchmark functions in Section~\ref{ssec:exp-bm}, describe baselines and implementation details in Section~\ref{ssec:exp-impl}, and discuss the results in Section~\ref{ssec:exp-res}. The results are presented in Table~\ref{tab:exp-main} and Figures~\ref{fig:exp-distance} \& \ref{fig:exp-magnitude}. Our code is publicly available at \url{https://github.com/q-rz/ICML24-GraCe}.

\begin{figure*}[t]
\centering
\hfill
\subfigure[Comparison with full-gradient methods.]{\includegraphics[width=0.4\textwidth]{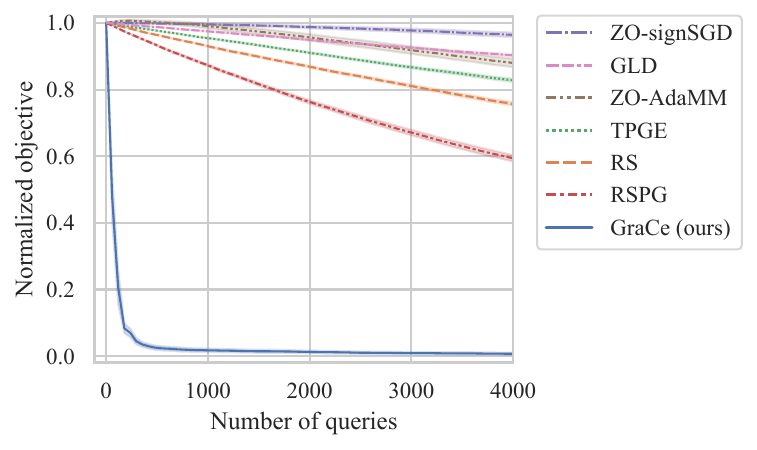}}
\hfill
\subfigure[Comparison with sparse-gradient methods.]{\includegraphics[width=0.4\textwidth]{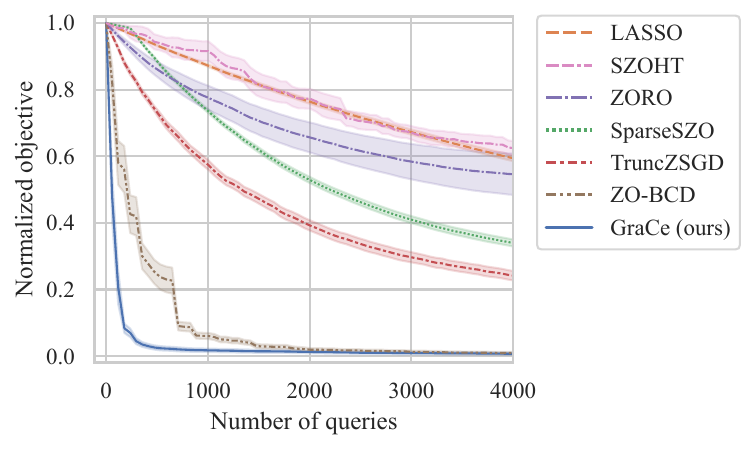}}
\hfill
\caption{Convergence plots for \textsc{Distance} (mean$\,\pm\,$s.e.).}
\label{fig:exp-distance}
\end{figure*}

\begin{figure*}[t]
\centering
\hfill
\subfigure[Comparison with full-gradient methods.]{\includegraphics[width=0.4\textwidth]{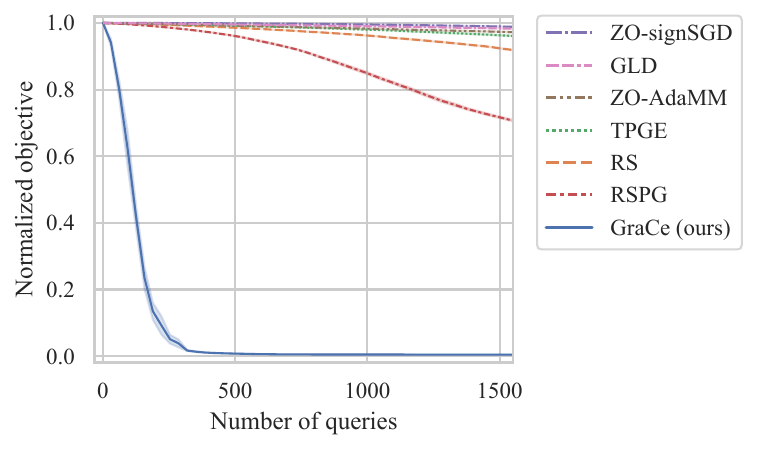}}
\hfill
\subfigure[Comparison with sparse-gradient methods.]{\includegraphics[width=0.4\textwidth]{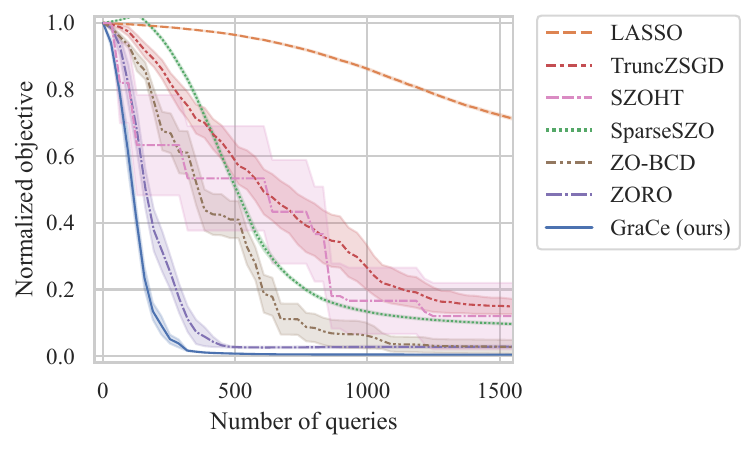}}
\hfill
\caption{Convergence plots for \textsc{Magnitude} (mean$\,\pm\,$s.e.).}
\label{fig:exp-magnitude}
\end{figure*}

\begin{figure*}[t]
\centering
\hfill
\subfigure[Comparison with full-gradient methods.]{\includegraphics[width=0.4\textwidth]{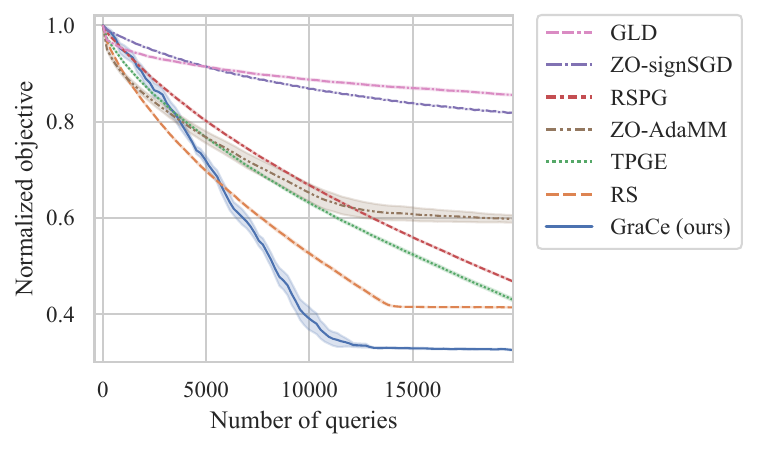}}
\hfill
\subfigure[Comparison with sparse-gradient methods.]{\includegraphics[width=0.4\textwidth]{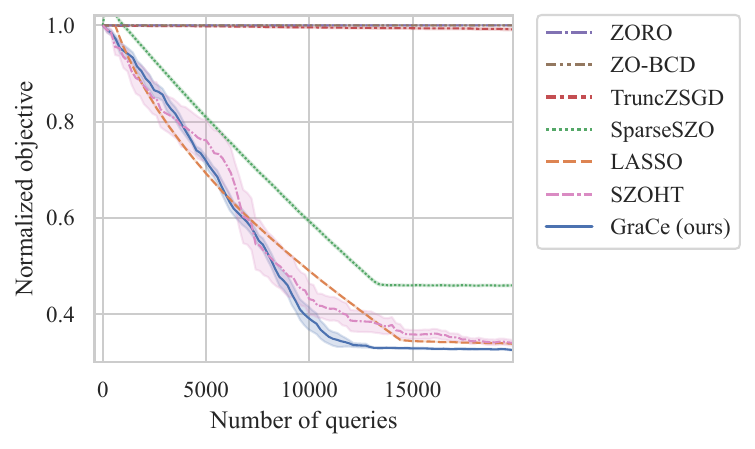}}
\hfill
\caption{Convergence plots for \textsc{Attack} (mean$\,\pm\,$s.e.).}
\label{fig:exp-attack}
\end{figure*}

\subsection{Benchmark Functions}\label{ssec:exp-bm}

To demonstrate the empirical competence of our \Ours{} in high-dimensional ZOO, we consider two challenging synthetic functions in $d=10,000$ dimensions and a real-world task in $d=13,225$ dimensions. For each synthetic benchmark function, we randomly generate 10 instantiations. 
\begin{itemize}
\item\textsc{Distance}: $f(\BM x):=(\BM x-\BM x_*)\Tp\BM W(\BM x-\BM x_*)$, where $\BM v\in\BB R^d$ is an $s$-sparse vector, and $\BM W\in\BB R^{d\times d}$ is a diagonal matrix. We randomly sample a subset $S\subseteq[d]$ of size $s$ as the nonzero dimensions of $\BM x_*$, and we generate the nonzero entries in $\BM x_*$ and $\BM W$ from $\mathop{\mathsf{Unif}}(0,1)$. We use $s=10$ and initial point $\BM x_1=\BM0_d$.

\item\textsc{Magnitude}: $f(\BM x):=\lambda\cdot\sum_{i=s+1}^d\tanh(x_{(i)}^2)-\sum_{i=1}^s\tanh(x_{(i)}^2)+s$, where $\lambda$ is a constant, and $x_{(i)}$ denotes the $i$-th largest magnitude among the coordinates of $\BM x$. For the initial point $\BM x_1$, we randomly sample a subset $S\subseteq[d]$ of size $s$ as the nonzero dimensions of $\BM x_i$ and let $(\BM x_1)_S$ be a constant $w$ times random signs. We use $s=5$, $\lambda=0.1$, and $w=0.2$.

\item\textsc{Attack}: There are various attacks on graphs \cite{dai2018adversarial,fu2023privacy}, and here we consider attacking the connectivity between two vertices on a real-world undirected graph \cite{girvan2002network} with $n=115$ vertices. 
We assume that the attacker wants to minimally change the adjacency matrix $\BM A\in[0,1]^{n\times n}$ by a perturbation $\BM X\in\BB R^{n\times n}$ into $\TLD{\BM A}:=\max\{\BM A\odot(\BM1_{n\times n}-|\BM X|)+(\BM1_{n\times n}-\BM A)\odot|\BM X|,0\}$ to minimize the connectivity between two vertices $u,v$, where $|\cdot|$ denotes entry-wise absolute value operation, and $\odot$ denotes entry-wise multiplication. Let $\TLD{\BM D}:=\OP{diag}(\TLD{\BM A}\BM1_{n\times1})$ denote the degree matrix w.r.t.\ $\TLD{\BM A}$, and let $\TLD{\BM A}_\text{sym}:=\TLD{\BM D}^{-1/2}\TLD{\BM A}\TLD{\BM D}^{-1/2}$ denote the symmetric normalization of $\TLD{\BM A}$. Then, we define the objective function by $f(\BM X):=\sum_{w=1}^W(\TLD{\BM A}_\text{sym}^w)_{u,v}+\lambda\|\BM X\|_\text F^2$, where $W$ denotes the number of hops in connectivity estimation, and $\|\cdot\|_\text F$ denotes the Frobenius norm. We use $u=0$, $v=1$, $W=4$, and $\lambda=\frac{100}{n^2}$ here. The true sparsity $s$ is unknown, and we use $s=30$ here. The initial point is $\BM X_1=\BM0_{n\times n}$.
\end{itemize}

We remark that these functions have approximately sparse gradients along the gradient flow starting from the initial point (although they might have non-sparse gradients elsewhere). Thus, as long as the the gradient estimates are sufficiently accurate, the gradients should be approximately sparse along the optimization trajectory.

\subsection{Baselines \& Implementation Details}\label{ssec:exp-impl}
We extensively compare our \Ours{} with existing ZOO methods including full-gradient and sparse-gradient methods. We do not compare proposed \Ours{} with global optimization methods such as evolutionary algorithms 
because they rely on strong prior knowledge on the function structure. 

For full-gradient methods, we use the following baselines:
\begin{itemize}
\item\textbf{RS} (random search): a classic gradient estimator using a random perturbation, which is one of the oldest estimators in ZOO \cite{spall1998overview} and is later referred to as Gaussian smoothing \cite{nesterov2015random}. 
\item\textbf{TPGE} \cite{tpge}: a two-point gradient estimator that uses two additive random perturbations to smooth the function. We use their general-case version. 
\item\textbf{RSPG} \cite{rspg}: can be viewed as RS with multiple random perturbations in unconstrained optimization. The multiple perturbations help to reduce the variance of the estimator.
\item\textbf{ZO-signSGD} \cite{zo-signsgd}: using only the signs of the zeroth-order gradient estimate instead of specific gradient values. 
\item\textbf{ZO-AdaMM} \cite{zo-adamm}: applying the adaptive momentum method to the zeroth-order gradient estimate. As suggested by the authors, we use momentum parameters $\beta_1=0.9$ and $\beta_2=0.5$.
\item\textbf{GLD} \cite{gld}: moving to the best point among $K$ random perturbations of different scales. We use their binary search version with $K=4$, so the perturbation scales for GLD are $\{\eta,\eta/2,\eta/4,\eta/8\}$, where $\eta$ is the step size. 
\end{itemize}
For sparse-gradient methods, we use the following state-of-the-art methods as baselines:
\begin{itemize}
\item\textbf{LASSO} \cite{zo-lasso}: generating random signs as perturbations and using LASSO \cite{tibshirani1996regression} to estimate the sparse gradient. As suggested by the authors, we use their mirror descent version.
\item\textbf{SparseSZO} \cite{sparseszo}: applying a mask to the gradient estimate to ensure sparsity and updating the mask periodically according to the magnitudes of coordinates. Following the authors, we update the mask every 5 steps.
\item\textbf{TruncZSGD} \cite{trunczsgd}: truncating the gradient estimate according to the magnitude of its coordinates. 
\item\textbf{ZORO} \cite{zoro}: generating random signs as perturbations and using CoSaMP \cite{needell2009cosamp} to estimate the sparse gradient. Following the authors, we run CoSaMP for at most 10 iterations with tolerence 0.5.
\item\textbf{ZO-BCD} \cite{zo-bcd}: dividing the dimensions into blocks and applying CoSaMP to each block. We use 5 blocks for ZO-BCD because this gives the best performance. Following the authors, we run CoSaMP for at most 10 iterations with tolerence 0.5.
\item\textbf{SZOHT} \cite{szoht}: perturbing only a random subset of dimensions and applying hard thresholding to the point according to the magnitudes of the coordinates. 
\end{itemize}

The hyperparameters of all methods are summarized in Table~\ref{tab:app-exp-hyp} in Appendix~\ref{app:exp-hyp}. To ensure a fair comparison, we let all methods have the same budget number of queries as that of \Ours{}. Since RS, TPGE, ZO-AdaMM, and GLD use $O(1)$ queries per step, we adjust their number $T$ of steps so that their total number of queries matches that of our \Ours{}; for other methods, we use the same number of queries per step as that of our \Ours{}. For each method, we choose the best step size $\eta$ among $\{0.5,0.2,0.1,0.05,0.02,0.01,\dots\}$. For \Ours{}, we use $m=1$, $n=\big\lfloor\frac{0.7d}{s}\big\rfloor$, and $D_1=20$ for \textsc{Distance} and \textsc{Magnitude} and $D_1=10$ for \textsc{Attack}. 

\subsection{Results \& Discussion}\label{ssec:exp-res}
We use the \emph{normalized objective} $\frac{f(\BM x_t)}{f(\BM x_1)}$ as the evaluation metric (the lower, the better). The best normalized objectives found by each method are presented in Table~\ref{tab:exp-main}, and their convergence plots are shown in Figures~\ref{fig:exp-distance}, \ref{fig:exp-magnitude}, \& \ref{fig:exp-attack}. We report means and standard errors (s.e.) over 10 runs.

From Table~\ref{tab:exp-main} and Figures~\ref{fig:exp-distance}, \ref{fig:exp-magnitude}, \& \ref{fig:exp-attack}, we can observe that our \Ours{} significantly outperforms baseline methods. For \textsc{Distance}, our \Ours{} finds a near-optimal solution within 1000 queries while none of the baselines converge even with over 5000 queries. For \textsc{Magnitude}, our \Ours{} finds a near-optimal solution within 500 queries while most baselines need at least 1000 queries. Furthermore, we can observe that our \Ours{} achieves consistent strong performance for both \textsc{Distance} and \textsc{Magnitude}. In contrast, the performance of sparse-gradient baselines varies drastically between \textsc{Distance} and \textsc{Magnitude}. For example, TruncZSGD performs well for \textsc{Distance} but not satisfactorily for \textsc{Magnitude}; ZORO performs well for \textsc{Magnitude} but badly for \textsc{Distance}. Our \Ours{} also achieves the best performance on the real-world dataset \textsc{Attack}. 

\subsection{Additional Experiments}
Due to the space limit, we provide additional experiments in Appendix~\ref{app:exp} (i) to show that our \Ours{} still achieves strong performance even when $s$ is inexact or when the gradient is non-sparse and (ii) to validate that the actual number of queries does scale as the query complexity $O\big(s\log\log\frac ds\big)$.
\section{Related Work}

\subsection{High-dimensional zeroth-order optimization}

Zeroth-order optimization aims to apply first-order optimization methods except with gradients estimated from queries. As queries are typically expensive in practice, the most important metric for comparing ZOO methods is their query complexity (i.e., the total number of queries till convergence). In the era of big data \cite{wei2024robust,chen2024wapiti,liu2024logic,liu2024class,liu2024aim,liu2023topological,qiu2024tucket,qiu2024ask,qiu2024efficient,qiu2023reconstructing,qiu2022dimes,xu2024discrete,zeng2024graph,lin2024backtime,yoo2025generalizable,yoo2024ensuring,chan2024group,wu2024fair,he2024sensitivity,wang2023networked}, high-dimensional ZOO has become increasingly important. In high-dimensional ZOO, the dimensionality $d$ can be very large. Thus, the main goal of high-dimensional ZOO is to reduce the dependence on $d$ in the query complexity under structural assumptions such as gradient sparsity and solution sparsity. Existing works in high-dimensional ZOO can be categorized into two lines. One line \cite{sparseszo,trunczsgd,szoht} applies a a mask on the gradient or the solution to enforce sparsity. Their query complexity depends on the quality of the masks. The other line  \cite{zo-lasso,zoro} employs sparse learning algorithms such as LASSO \cite{tibshirani1996regression} and CoSaMP \cite{needell2009cosamp}. Their query complexity depends on the accuracy of the sparse learning algorithms.

\subsection{High-dimensional first-order optimization}
A parallel line of research is high-dimensional first-order optimization, where the gradients of the objective function $f$ can be exactly computed or unbiasedly estimated. In the stark contrast to ZOO, general first-order optimization methods typically have dimension-independent rates of convergence. Thus, unlike ZOO, high-dimensional first-order optimization mainly focuses on handling high-dimensional constraints and achieving further acceleration for special problem structures. Mirror descent \cite{nemirovskij1983problem} is an efficient method to handle non-standard geometry. It has been successfully applied to high-dimensional optimization with simplicial \cite{beck2003mirror} and sparsity \cite{shalev2009stochastic} constraints, and also to problems with convex--concave \cite{nemirovski2009robust} and compositional \cite{lan2012optimal} structures. 
Other methods such as coordinate descent \cite{shalev2010trading} for sparse optimization and the homotopy method \cite{xiao2013proximal} for $\ell_1$-regularized least squares have also been developed to further accelerate convergence for special problem structures over general methods. 

\subsection{Compressed sensing}
Our \Ours{} is a generalization and an improvement of the Indyk--Price--Woodruff (IPW) algorithm \cite{indyk2011power} in compressed sensing, bridging an interesting connection between zeroth-order optimization and this parallel field. Compressed sensing is a classic field that has been widely studied in various domains, including signal processing, medical imaging, and data compression \cite{price2013sparse}. The aim of compressed sensing is to recover a sparse signal from a minimal number of linear measurements. 
Early methods focuses on non-adaptive measurements. For instance, magnetic resonance imaging (MRI) machines uses 2-dimensional Fourier transforms of the image \cite{lustig2008compressed}; single-pixel cameras employs wavelet transforms \cite{duarte2008single}. More recently, adaptive methods have been proposed \cite{haupt2009compressive}, and the IPW algorithm is the state of the art among adaptive methods. Nonetheless, the IPW algorithm is purely theoretical 
due to its impractically large constant. We have improved the IPW algorithm via our \emph{dependent random partition} technique and our corresponding novel analysis to make it practical.     
\section{Concluding Remarks}
In this paper, we have studied the problem of zeroth-order optimizing (ZOO) in high dimensions for functions with approximately sparse gradients. We have introduced a relaxed assumption on approximate gradient sparsity, which is weaker than previous assumptions. We have proposed a query-efficient gradient estimator called \Ours{}, whose query complexity has only double-logarithmic dependence on the dimensionality. With the help of \Ours{}, we have achieved an $O\big(\frac1T\big)$ rate of convergence for nonconvex ZOO. Experiments have demonstrated the strong empirical performance of our proposed method. We view our work as an early yet inspiring step towards high-dimensional ZOO.


The following are limitations of this work that we wish to be addressed in future work. 
\begin{itemize}
\item\textbf{Stochastic ZOO.} \Ours{} encodes information into queries, so a limitation of \Ours{} is that it assumes noise-free function evaluations. Unfortunately, this does not always hold for real-world applications. Thus, an interesting open question is: can we encode information into queries under noisy function evaluations? A possible idea is by employing error correcting codes \cite{berrou1993near} to encode the dimension information under noisy queries. 
\item\textbf{Lower bound for ZOO.} It is still unclear whether our $O\big(s\log\log\frac ds\big)$ query complexity is optimal for sparse-gradient ZOO. To date, there is only limited work on the lower bound for ZOO. For instance, \citet{alabdulkareem2021information} show that $\Omega(d/\varepsilon^2)$ queries are required for noisy ZOO to achieve $\varepsilon$ error in the worst case, but it is still unclear how many queries are required for sparse-gradient ZOO. Thus, an interesting open problem is: can we find a matching lower bound for the query complexity? A possible idea is by considering the example in \citet{price2013lower}, which has been used to show a $O(\log\log d)$ lower bound for $s=O(1)$ in compressed sensing.
\item\textbf{ZOO with memory.} Another limitation of \Ours{} is that it uses only the information collected in each step to find the candidate set $J$ but ignores 
previous steps. Meanwhile, we observe that the optimal candidate sets of different steps are typically the same or at least highly correlated. Thus, an interesting open problem is: can we leverage the information collected in previous steps to help find the candidate set $J$ using fewer queries than $O\big(s\log\log\frac ds\big)$? A possible idea is by keeping the candidate set $J$ from the previous step and 
updating it using a small number of queries. 
\end{itemize}


\section*{Acknowledgements}
This work was supported by NSF (%
2134079 
and 2324770
),
NIFA (2020-67021-32799), 
DHS (17STQAC00001-07-00), 
AFOSR (FA9550-24-1-0002), 
and the C3.ai Digital Transformation Institute. 
The content of the information in this document does not necessarily reflect the position or the policy of the Government, and no official endorsement should be inferred.  The U.S. Government is authorized to reproduce and distribute reprints for Government purposes notwithstanding any copyright notation here on.

\section*{Impact Statement}

This paper presents work whose goal is to advance the field of zeroth-order optimization. There are many potential societal consequences of zeroth-order optimization, including both positive and negative consequences. On the one hand, zeroth-order optimization can be applied to sequential experimental design. Since the query complexity of the zeroth-order optimizer corresponds to the number of experiments, improvement in the query complexity reduces the cost for the experimenter. On the other hand, zeroth-order optimization can also be applied to black-box adversarial attacks 
against machine learning models. Since the query complexity of the zeroth-order optimizer corresponds to the number of trials needed by the attacker, improvement in the query complexity reduces the cost for the attacker. We remark that these societal impacts apply to all zeroth-order optimization methods and are not specific to our work. 

\bibliography{output}
\bibliographystyle{icml2023}

\newpage
\appendix
\onecolumn

\section*{Contents}
\ALIST{
    \AITEM{app:proof}
    \ALIST{
        \AITEM{app:pf-lip}
        \AITEM{app:pf-shrink}
        \AITEM{app:pf-one}
        \AITEM{app:pf-all}
        \AITEM{app:pf-grad}
        \AITEM{app:pf-opt}
        \AITEM{app:pf-gd-prob}
    }
    \AITEM{app:exp}
    \ALIST{
        \AITEM{app:exp-hyp}
        \AITEM{app:exp-unknown-s}
        \AITEM{app:exp-nonsparse}
        \AITEM{app:exp-num-queries}
    }
}
\ASEC{Proofs}{app:proof}

\ASSEC{Preliminaries}{app:pf-lip}

We introduce two classic results on Lipschitz properties, whose proofs can be found, for example, in \citet{nesterov2018lectures}. We restate the theorems and their proofs with our notation as follows.

\begin{lemma}[Lemma~1.2.3, \citealp{nesterov2018lectures}]\label{LEM:lip-smooth}
For an $L_1$-Lipschitz smooth function $f$,
\AL{|f(\BM x+\BM u)-f(\BM x)-\langle\nabla f(\BM x),\BM u\rangle|\le\frac{L_1}2\|\BM u\|_2^2,&\qquad\forall\BM x,\BM u.}
\end{lemma}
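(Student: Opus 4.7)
The plan is to prove this classical bound via the fundamental theorem of calculus applied along the segment from $\BM x$ to $\BM x+\BM u$, combined with the Lipschitz smoothness assumption and Cauchy--Schwarz. This is a standard approach, but I spell it out because every step is needed to get the tight constant $L_1/2$.

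First, I would write the function difference as a path integral of directional derivatives:
\[
f(\BM x+\BM u)-f(\BM x)=\int_0^1\langle\nabla f(\BM x+t\BM u),\BM u\rangle\,\RM dt.
\]
This step uses only that $f$ is differentiable (Assumption~\ref{ASS:lip-smooth}) so that $t\mapsto f(\BM x+t\BM u)$ is absolutely continuous on $[0,1]$ with derivative $\langle\nabla f(\BM x+t\BM u),\BM u\rangle$. Subtracting $\langle\nabla f(\BM x),\BM u\rangle=\int_0^1\langle\nabla f(\BM x),\BM u\rangle\,\RM dt$ from both sides and pulling the inner product inside the integral rewrites the quantity of interest as
\[
f(\BM x+\BM u)-f(\BM x)-\langle\nabla f(\BM x),\BM u\rangle=\int_0^1\langle\nabla f(\BM x+t\BM u)-\nabla f(\BM x),\BM u\rangle\,\RM dt.
\]

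Next, I would take absolute values, bring them inside the integral by the triangle inequality, and apply the Cauchy--Schwarz inequality pointwise in $t$ to obtain
\[
\bigl|f(\BM x+\BM u)-f(\BM x)-\langle\nabla f(\BM x),\BM u\rangle\bigr|\le\int_0^1\|\nabla f(\BM x+t\BM u)-\nabla f(\BM x)\|_2\,\|\BM u\|_2\,\RM dt.
\]
Now the Lipschitz smoothness assumption (Assumption~\ref{ASS:lip-smooth}), applied with perturbation $t\BM u$, gives $\|\nabla f(\BM x+t\BM u)-\nabla f(\BM x)\|_2\le L_1 t\|\BM u\|_2$. Substituting and evaluating $\int_0^1 t\,\RM dt=\tfrac12$ yields the claimed bound $\tfrac{L_1}2\|\BM u\|_2^2$.

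There is no real obstacle here; the only point worth checking is that the path integral representation is valid under mere differentiability plus Lipschitz continuity of the gradient, which suffices because $t\mapsto\langle\nabla f(\BM x+t\BM u),\BM u\rangle$ is continuous on $[0,1]$ and is the derivative of the absolutely continuous function $t\mapsto f(\BM x+t\BM u)$. Everything else is Cauchy--Schwarz and a one-line integration.
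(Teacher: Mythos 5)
Your proposal is correct and follows essentially the same route as the paper's proof: the fundamental theorem of calculus along the segment, Cauchy--Schwarz applied pointwise in the integral, the Lipschitz smoothness bound $\|\nabla f(\BM x+t\BM u)-\nabla f(\BM x)\|_2\le L_1t\|\BM u\|_2$, and the integration $\int_0^1 t\,\RM dt=\tfrac12$. No gaps; nothing further is needed.
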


\begin{proof}
By the fundamental theorem of calculus, the chain rule, and the the Cauchy--Schwarz inequality,
\AL{
|f(\BM x+\BM u)-f(\BM x)-\langle\nabla f(\BM x),\BM u\rangle|={}&\bigg|\int_0^1\frac{\partial}{\partial\xi}f(\BM x+\xi\BM u)\DD\xi-\,\langle\nabla f(\BM x),\BM u\rangle\bigg|\\
={}&\bigg|\int_0^1\langle\nabla f(\BM x+\xi\BM u),\BM u\rangle\DD\xi-\int_0^1\langle\nabla f(\BM x),\BM u\rangle\DD\xi\bigg|\\
={}&\bigg|\int_0^1\langle\nabla f(\BM x+\xi\BM u)-\nabla f(\BM x),\BM u\rangle\DD\xi\bigg|\\
\le{}&\int_0^1|\langle\nabla f(\BM x+\xi\BM u)-\nabla f(\BM x),\BM u\rangle|\DD\xi\\
\le{}&\int_0^1\|\nabla f(\BM x+\xi\BM u)-\nabla f(\BM x)\|_2\|\BM u\|_2\DD\xi\\
\le{}&\int_0^1L_1\|\xi\BM u\|_2\|\BM u\|_2\DD\xi\\
={}&L_1\|\BM u\|_2^2\int_0^1\xi\DD\xi=L_1\|\BM u\|_2^2\cdot\frac12
.\qedhere}
\end{proof}


\begin{lemma}[Lemma~1.2.2, \citealp{nesterov2018lectures}]\label{LEM:lip-cont}
For an $L_0$-Lipschitz continuous function $f$ with continuous gradients,
\AM{\|\nabla f(\BM x)\|_2\le L_0,\qquad\forall\BM x.}
\end{lemma}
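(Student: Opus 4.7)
The plan is to prove the bound $\|\nabla f(\BM x)\|_2 \le L_0$ pointwise by reducing it to a directional derivative argument along the gradient direction. The key observation is that Lipschitz continuity controls first-order differences in every direction, while the directional derivative along the normalized gradient recovers exactly $\|\nabla f(\BM x)\|_2$. Fix an arbitrary point $\BM x \in \BB R^d$. If $\nabla f(\BM x) = \BM 0$, there is nothing to prove, so I assume $\nabla f(\BM x) \ne \BM 0$ and set
\[
\BM u := \frac{\nabla f(\BM x)}{\|\nabla f(\BM x)\|_2},
\]
which is a unit vector.

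Next, for any $t > 0$, Assumption~\ref{ASS:lip-cont} (Lipschitz continuity) applied to the perturbation $t\BM u$ yields
\[
|f(\BM x + t\BM u) - f(\BM x)| \le L_0 \|t\BM u\|_2 = L_0 t.
\]
Dividing by $t$ gives $\bigl|\tfrac{f(\BM x + t\BM u) - f(\BM x)}{t}\bigr| \le L_0$ for every $t > 0$. I would then take $t \to 0^+$. Since $f$ has continuous gradients by hypothesis, the directional derivative exists and
\[
\lim_{t \to 0^+} \frac{f(\BM x + t\BM u) - f(\BM x)}{t} = \langle \nabla f(\BM x), \BM u\rangle = \Bigl\langle \nabla f(\BM x),\, \tfrac{\nabla f(\BM x)}{\|\nabla f(\BM x)\|_2}\Bigr\rangle = \|\nabla f(\BM x)\|_2.
\]
Passing the absolute value through the limit (which is valid because the expression inside is bounded in absolute value by $L_0$ for every $t > 0$) gives $\|\nabla f(\BM x)\|_2 \le L_0$.

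There is no serious obstacle in this proof; it is a direct consequence of the definition of the gradient as a limit of difference quotients combined with the Lipschitz bound on the numerator. The only minor care needed is the degenerate case $\nabla f(\BM x) = \BM 0$, which is handled separately, and the justification that the one-sided limit of the difference quotient along $\BM u$ equals $\langle \nabla f(\BM x), \BM u\rangle$. The latter is immediate from differentiability (which is implicit in having a gradient) via the standard first-order expansion $f(\BM x + t\BM u) = f(\BM x) + t\langle \nabla f(\BM x), \BM u\rangle + o(t)$.
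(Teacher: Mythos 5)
Your proof is correct and follows essentially the same route as the paper: both bound the difference quotient along the gradient direction by the Lipschitz constant and pass to the limit, the only cosmetic differences being that you normalize the direction (so the zero-gradient case is handled separately) whereas the paper uses the unnormalized direction and divides by $\|\nabla f(\BM x)\|_2$ at the end. No issues.
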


\begin{proof}
By the chain rule and the $L_0$-Lipschitz continuity of $f$,
\AL{
\|\nabla f(\BM x)\|_2^2&=\langle\nabla f(\BM x),\nabla f(\BM x)\rangle\\
&=\frac{\partial}{\partial\xi}\Big|_{\xi=0}f(\BM x+\xi\nabla f(\BM x))\\
&=\lim_{\xi\searrow0}\frac{f(\BM x+\xi\nabla f(\BM x))-f(\BM x)}{\xi}\\
&\le\lim_{\xi\searrow0}\frac{L_0\|\xi\nabla f(\BM x)\|_2}{\xi}\\
&=\lim_{\xi\searrow0}\frac{L_0\xi\|\nabla f(\BM x)\|_2}{\xi}\\
&=L_0\|\nabla f(\BM x)\|_2
.}
It follows that $\|\nabla f(\BM x)\|_2\le L_0$, which still holds even when $\|\nabla f(\BM x)\|_2=0$.
\end{proof}


\ASSEC{Proof of Lemma~\ref{LEM:shrink}}{app:pf-shrink}

Our Lemma~\ref{LEM:shrink} is an improvement over Lemma~3.2 in \citet{indyk2011power}. Our main improvement here is by using dependent random partition to bound the worst-case block size while \citet{indyk2011power} used independent subsampling. A main difference here is that we use Azuma's inequality to handle weak dependence between blocks. 

Before proving our Lemma~\ref{LEM:shrink}, we show a technical lemma via basic calculus.

\begin{lemma}\label{LEM:C1}
There is an absolute constant $0<C_1<2.29$ such that for any $0<\delta<1$ and any $D\ge2$,
\AM{2D\sqrt{2\ln\frac{4D+1}{2D\delta}}+\sqrt{2\ln\frac{8D+2}{\delta}}\le\Big(C_1D+\frac1D\Big)\sqrt{2\ln\frac3\delta}.}
\end{lemma}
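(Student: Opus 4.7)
\emph{Setup.} The inequality depends on $\delta$ only through $L := \ln(3/\delta) \in (\ln 3, \infty)$. Setting $u(D) := \ln\frac{4D+1}{6D}$ and $v(D) := \ln\frac{8D+2}{3}$, we have $\ln\frac{4D+1}{2D\delta} = L + u(D)$ and $\ln\frac{8D+2}{\delta} = L + v(D)$. Dividing both sides of the claim by $\sqrt{2L} > 0$ reduces the goal to exhibiting some $C_1 < 2.29$ with
\[ h_D(L) := 2D\sqrt{1 + u(D)/L} + \sqrt{1 + v(D)/L} \;\leq\; C_1 D + \tfrac1D \quad \text{for all } D \geq 2,\ L > \ln 3. \]
Since $u(D) < 0 < v(D)$ for $D \geq 2$, the first summand is strictly increasing and the second strictly decreasing in $L$, so $h_D$ has at most one critical point in $(0, \infty)$, which simplifies the optimization drastically.

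\emph{Plan.} For each $D$, I would solve $h_D'(L) = 0$. Squaring (valid since both sides of the derivative equation carry the same sign) yields a linear equation in $L$ whose unique root is
\[ L^\star_D \;=\; \frac{u(D)\,v(D)\,(v(D) - 4D^2 u(D))}{4D^2 u(D)^2 - v(D)^2}, \]
and it is positive precisely when $v(D) + 2D u(D) > 0$. Direct computation shows this sign condition holds for $D \in \{2, 3\}$ but fails for $D = 4$; a monotonicity check (using $\frac{d}{dD}[v(D) + 2Du(D)] = \frac{4(2D+1)}{4D+1} + 2u(D) - 2$, which one verifies is strictly negative on $D \geq 4$) extends the failure to all $D \geq 4$. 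Consequently for $D \geq 4$, $h_D$ is strictly increasing on $(0, \infty)$ with supremum $\lim_{L \to \infty} h_D(L) = 2D + 1$, requiring only $C_1 \geq 2 + 1/D - 1/D^2 \leq 2.1875$. For $D \in \{2, 3\}$, substituting the critical point back and applying the identity $B/A = -v/(2Du)$ (where $A := \sqrt{1 + u/L^\star_D}$, $B := \sqrt{1 + v/L^\star_D}$) algebraically simplifies the maximum to the clean form
\[ h_D(L^\star_D) \;=\; \frac{\sqrt{(1 + t_D^2)(1 + 4D^2 t_D^2)}}{t_D}, \quad t_D := \sqrt{-u(D)/v(D)}. \]
Numerical evaluation then gives $h_2(L^\star_2) \approx 5.0796$ (requiring $C_1 \geq 2.2898$) and $h_3(L^\star_3) \approx 7.004$ (requiring $C_1 \geq 2.224$); one also checks $L^\star_2 \approx 1.747 > \ln 3$ so the critical point lies in the admissible region. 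The overall worst case is therefore $D = 2$, and any $C_1 \in (2.2898, 2.29)$ works.

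\emph{Main obstacle.} The chief difficulty is cleanly orchestrating the case split over $D$, in particular rigorously establishing that $v(D) + 2D u(D) < 0$ for every integer $D \geq 4$ --- this is the subsidiary claim that tells us when the maximum of $h_D$ lives in the interior versus at the boundary $L \to \infty$. The algebraic simplification at $L^\star_D$ is also somewhat delicate, but it becomes transparent once one introduces $A, B$ as above and eliminates one radical via the critical-point relation. After these two steps, the remaining numerical checks at $D \in \{2, 3\}$ are entirely elementary, and the tightness of the numerical value ($2.2898 < 2.29$) is what forces the specific constant $2.29$ in the statement.
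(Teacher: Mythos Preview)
Your approach is correct for integer $D\ge 2$ and arrives at the same numerical constant as the paper, but the route is genuinely different. The paper first fixes $\delta$ and shows, via a chain of nested derivative computations, that the quantity
\[
2\sqrt{\log_{3/\delta}\tfrac{4D+1}{2D\delta}}+\tfrac1D\sqrt{\log_{3/\delta}\tfrac{8D+2}{\delta}}-\tfrac1{D^2}
\]
is monotone decreasing in the \emph{real} variable $D\ge 2$; this collapses the two-variable supremum to $D=2$, after which the optimization over $\delta$ is a one-line critical-point calculation. You instead fix $D$ and optimize over $L=\ln(3/\delta)$ first, obtaining the clean closed form $h_D(L^\star_D)=\sqrt{(1+t_D^2)(1+4D^2t_D^2)}/t_D$ with $t_D=\sqrt{-u(D)/v(D)}$; then you dispose of $D$ by a case split. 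Your route avoids the paper's somewhat heavy calculus-in-$D$ and yields a more transparent formula for the critical value; the paper's route buys a proof valid for continuous $D$.

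There is one genuine gap. The lemma as stated says ``any $D\ge 2$'' without restricting to integers, and the paper's own proof treats $D$ as a continuous variable. Your case split $\{2,3\}$ versus $D\ge 4$ leaves the real interval $(2,4)\setminus\{2,3\}$ unaddressed: on $(2,D_0)$ (where $D_0\in(3,4)$ is the zero of $v(D)+2Du(D)$, which you correctly identify as monotone---indeed your derivative expression simplifies to $\tfrac{2}{4D+1}+2u(D)<0$ for all $D\ge2$, so the zero is unique) you must still show the interior-maximum bound $(h_D(L^\star_D)-1/D)/D$ does not exceed its value at $D=2$. This is fixable either by proving that $D\mapsto(h_D(L^\star_D)-1/D)/D$ is decreasing on $[2,D_0)$ via your closed form, or by noting that every application of the lemma in the paper (Lemmas~\ref{LEM:shrink} and~\ref{LEM:one-general}) uses integer $D$, so the integer case suffices. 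A minor second point: your claim that $h_D$ has at most one critical point does not follow merely from ``one summand increasing, one decreasing''; the clean justification is that the critical-point equation $2Du/A+v/B=0$ has left-hand side monotone in $L$ (both $2Du/A$ and $v/B$ are increasing), hence at most one root.
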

\begin{proof}
It suffices to find
\AL{C_1:=\sup_{\begin{subarray}{c}0<\delta<1\\D\ge2\end{subarray}}\bigg(2\sqrt{\log_{3/\delta}\frac{4D+1}{2D\delta}}+\frac1D\sqrt{\log_{3/\delta}\frac{8D+2}{\delta}}-\frac1{D^2}\bigg).}
Note that
\AL{&\frac{\partial}{\partial D}\bigg(D-\frac{D^2}{(4D+1)\sqrt{\ln\frac3\delta\ln\frac{4D+1}{2D\delta}}}+\frac{2D^2}{(4D+1)\sqrt{\ln\frac3\delta\ln\frac{8D+2}{\delta}}}-D\sqrt{\log_{3/\delta}\frac{8D+2}\delta}\bigg)\\
={}&{-\frac1{2(4D+1)^2}}\bigg(\frac{4D(2D+1)}{\sqrt{\ln\frac3\delta\ln\frac{4D+1}{2D\delta}}}+\frac{D\sqrt{\log_{3/\delta}\frac{4D+1}{2D\delta}}}{\big({\ln\frac{4D+1}{2D\delta}}\big)^2}\\&+\frac{2\sqrt{\log_{3/\delta}\frac{8D+2}{\delta}}\big(\big(2D-\frac12\ln\frac{8D+2}{\delta}\big)^2+\big(16D^2+8D+\frac34\big)\big({\ln\frac{8D+2}{\delta}}\big)^2\big)}{\big({\ln\frac{8D+2}{\delta}}\big)^2}\bigg)\\
\le{}&0
,}
and that
\AL{
&\frac{\partial}{\partial\delta}\bigg(2-\frac4{9\sqrt{\ln\frac3\delta\ln\frac{9}{4\delta}}}+\frac8{9\sqrt{\ln\frac3\delta\ln\frac{18}{\delta}}}-2\sqrt{\log_{3/\delta}\frac{18}\delta}\bigg)\\
={}&{-\frac1{9\delta\big({\ln\frac3\delta}\big)^{3/2}}}\bigg(\frac{2\big(\ln\frac{27}4+2\ln\frac1\delta\big)}{\big({\ln\frac{9}{4\delta}}\big)^{3/2}}+\frac{(9\ln6\ln18-4\ln54)+(9\ln6-8)\ln\frac1\delta}{\big({\ln\frac{18}\delta}\big)^{3/2}}\bigg)\\
\le{}&0
.}
Thus,
\AL{&\frac{\partial}{\partial D}\bigg(2\sqrt{\log_{3/\delta}\frac{4D+1}{2D\delta}}+\frac1D\sqrt{\log_{3/\delta}\frac{8D+2}{\delta}}-\frac1{D^2}\bigg)\\
={}&\frac1{D^3}\bigg(D-\frac{D^2}{(4D+1)\sqrt{\ln\frac3\delta\ln\frac{4D+1}{2D\delta}}}+\frac{2D^2}{(4D+1)\sqrt{\ln\frac3\delta\ln\frac{8D+2}{\delta}}}-D\sqrt{\log_{3/\delta}\frac{8D+2}\delta}\bigg)\\
\le{}&\frac1{D^3}\bigg(\bigg(D-\frac{D^2}{(4D+1)\sqrt{\ln\frac3\delta\ln\frac{4D+1}{2D\delta}}}+\frac{2D^2}{(4D+1)\sqrt{\ln\frac3\delta\ln\frac{8D+2}{\delta}}}-D\sqrt{\log_{3/\delta}\frac{8D+2}\delta}\bigg)\bigg|_{D=2}\bigg)\\
={}&\frac1{D^3}\bigg(2-\frac4{9\sqrt{\ln\frac3\delta\ln\frac{9}{4\delta}}}+\frac8{9\sqrt{\ln\frac3\delta\ln\frac{18}{\delta}}}-2\sqrt{\log_{3/\delta}\frac{18}\delta}\bigg)\\
\le{}&\frac1{D^3}\lim_{\delta\searrow0}\bigg(2-\frac4{9\sqrt{\ln\frac3\delta\ln\frac{9}{4\delta}}}+\frac8{9\sqrt{\ln\frac3\delta\ln\frac{18}{\delta}}}-2\sqrt{\log_{3/\delta}\frac{18}\delta}\bigg)\\
={}&\frac1{D^3}(2-0+0-2)=0
.}
It follows that
\AL{&2\sqrt{\log_{3/\delta}\frac{4D+1}{2D\delta}}+\frac1D\sqrt{\log_{3/\delta}\frac{8D+2}{\delta}}-\frac1{D^2}\\
\le{}&\bigg(2\sqrt{\log_{3/\delta}\frac{4D+1}{2D\delta}}+\frac1D\sqrt{\log_{3/\delta}\frac{8D+2}{\delta}}-\frac1{D^2}\bigg)\bigg|_{D=2}\\
={}&2\sqrt{\log_{3/\delta}\frac{9}{4\delta}}+\frac12\sqrt{\log_{3/\delta}\frac{18}{\delta}}-\frac14
.}
Futhermore, note that
\AL{&\frac{\partial}{\partial\ln\frac1\delta}\bigg(2\sqrt{\log_{3/\delta}\frac{9}{4\delta}}+\frac12\sqrt{\log_{3/\delta}\frac{18}{\delta}}-\frac14\bigg)\\
={}&\frac{\partial}{\partial\ln\frac1\delta}\bigg(2\sqrt{\frac{\ln\frac94+\ln\frac1\delta}{\ln3+\ln\frac1\delta}}+\frac12\sqrt{\frac{\ln18+\ln\frac1\delta}{\ln3+\ln\frac1\delta}}-\frac14\bigg)\\
={}&\frac1{\big({\ln3+\ln\frac1\delta}\big)^{3/2}}\bigg(\frac{\ln\frac43}{\sqrt{\ln\frac94+\ln\frac1\delta}}-\frac{\ln6}{4\sqrt{\ln18+\ln\frac1\delta}}\bigg)
.}
The derivative equals 0 when $\ln\frac1\delta=\frac{16\ln2\left({\ln\frac43}\right)^2+\ln4(\ln6)^2}{(\ln6)^2-16\left({\ln\frac43}\right)^2}-\ln9\approx0.648887$. Therefore,
\AM{&C_1=\bigg(2\sqrt{\log_{3/\delta}\frac{9}{4\delta}}+\frac12\sqrt{\log_{3/\delta}\frac{18}{\delta}}-\frac14\bigg)\bigg|_{\ln\frac1\delta=\frac{16\ln2\left({\ln\frac43}\right)^2+\ln4(\ln6)^2}{(\ln6)^2-16\left({\ln\frac43}\right)^2}-\ln9}\approx2.28955.\qedhere}
\end{proof}

Now we are ready to prove Lemma~\ref{LEM:shrink}. 

\begin{proof}[Proof of Lemma~\ref{LEM:shrink}]
To simplify notation, for fixed $\BM x\in\BB R^d$ and $\epsilon>0$, define functions $g,e:\BB R^d\to\BB R$ by
\AL{g(\BM w):=\frac{f(\BM x+\epsilon\BM w)-f(\BM x)}{\epsilon},\quad e(\BM w):=g(\BM w)-\langle\nabla f(\BM x),\BM w\rangle,\qquad\BM w\in\BB R^d.}

Consider the following procedure, which is equivalent to the procedure in Algorithm~\ref{alg:grace}.
Sample $\sigma_i\sim\OP{\mathsf{Unif}}(\{\pm1\})$ for each $i\in S$ independently, and sample a random permutation $\omega:S\to[|S|]$ of $S$. Let $B:=\lceil|S|/D\rceil$, and let $h_i:=\lceil\omega(i)/B\rceil$ for $i\in S$. Note that for any $i\in S$,
\AL{1\le h_i\le\Big\lceil\frac{|S|}B\Big\rceil=\Big\lceil\frac{|S|}{\lceil|S|/D\rceil}\Big\rceil\le\Big\lceil\frac{|S|}{|S|/D}\Big\rceil=\lceil D\rceil=D.}
Let $S_p:=\{i\in S:h_i=p\}$ for each $p\in[D]$. Define $\BM u,\BM v\in\BB R^d$ by
\AL{u_i:=\sigma_i\cdot1_{\{i\in S\}},\quad v_i:=\sigma_i\cdot h_i\cdot 1_{\{i\in S\}},\qquad i=1,\dots,d.}
Make queries $f(\BM x+\epsilon\BM u)$ and $f(\BM x+\epsilon\BM v)$. We claim that $S_{\OP{round}\left(\frac{f(\BM x+\epsilon\BM v)-f(\BM x)}{f(\BM x+\epsilon\BM u)-f(\BM x)}\right)}$ is the desired $S'$.

We prove the claim as follows. 
Let $q:=h_j$. 
First, since
\AL{&|S_q|=\#\Big\{i\in S:\Big\lceil\frac{\omega(i)}B\Big\rceil=q\Big\}=\#\Big\{i\in S:q-1<\frac{\omega(i)}B\le q\Big\}\\
={}&\#\{i\in S:(q-1)B<\pi(i)\le qB\}\le qB-(q-1)B=B=\Big\lceil\frac{|S|}D\Big\rceil.}
then
\AL{|S_q\SM j|&=|S_q|-1\le\Big\lceil\frac{|S|}D\Big\rceil-1=\Big\lfloor\frac{|S|+D-1}{D}\Big\rfloor-1\\
&\le\frac{|S|+D-1}{D}-1=\frac{|S|-1}{D}=\frac{|S\SM j|}{D}.}
Second, since
\AL{
\Exp[\|\nabla_{S_q\SM j}f(\BM x)\|_2^2]={}&\sum_{i\in S}(\nabla_if(\BM x))^2\Prb\{i\in S_q\SM j\}=\sum_{i\in S}(\nabla_if(\BM x))^2\frac{|S_q|-1}{|S|}\\
={}&\frac{\|\nabla_{S\SM j}f(\BM x)\|_2^2(|S_q|-1)}{|S|}\le\frac{\|\nabla_{S\SM j}f(\BM x)\|_2^2(\lceil|S|/D\rceil-1)}{|S|}\\
\le{}&\frac{\|\nabla_{S\SM j}f(\BM x)\|_2^2(|S|/D)}{|S|}=\frac{\|\nabla_{S\SM j}f(\BM x)\|_2^2}{D},
}
then by Markov's inequality, w.p.\ $\ge1-\delta_2$,
\AL{\|\nabla_{S_q\SM j}f(\BM x)\|_2\le\sqrt{\frac{\Exp[\|\nabla_{S_q\SM j}f(\BM x)\|_2^2]}{\delta_2}}\le\sqrt{\frac{\|\nabla_{S\SM j}f(\BM x)\|_2^2}{D\delta_2}}=\frac{\|\nabla_{S\SM j}f(\BM x)\|_2}{\sqrt{D\delta_2}}.}
Next, since $|\nabla_if(\BM x)\cdot u_i|\le|\nabla_i|$ for all $i$, and
\AL{\sum_{i\in S\SM j}(|\nabla_if(\BM x)|-(-|\nabla_if(\BM x)|))^2=4\|\nabla_{S\SM j}f(\BM x)\|_2^2,}
then by Hoeffding's inequality, w.p.\ $\ge1-\frac{\delta_1}{4D+1}$,
\ALN{eq:shrink-1}{|\langle\nabla_{S\SM j}f(\BM x),\BM u_{S\SM j}\rangle|\le\sqrt{\frac{4\|\nabla_{S\SM j}f(\BM x)\|_2^2}2\ln\frac2{\frac{\delta_1}{4D+1}}}=\|\nabla_{S\SM j}f(\BM x)\|_2\sqrt{2\ln\frac{8D+2}{\delta_1}}
.}
Besides that, thanks to the independent signs $\{\sigma_i\}_{i\in S\SM j}$, we can view $\{\nabla_if(\BM x)\cdot(v_i-qu_i)\}_{i\in S\setminus\{j\}}=\{\nabla_if(\BM x)\cdot\sigma_i\cdot(h_i-q)\}_{i\in S\setminus\{j\}}$ as a martingale difference sequence. Since $|\nabla_if(\BM x)\cdot(v_i-qu_i)|\le(D-1)|\nabla_i|\le D|\nabla_i|$ for all $i$, and
\AL{\sum_{i\in S\SM j}(D|\nabla_if(\BM x)|-(-D|\nabla_if(\BM x)|))^2=4D^2\|\nabla_{S\SM j}f(\BM x)\|_2^2,}
then by Azuma's inequality, w.p.\ $\ge1-\frac{4D\delta_1}{4D+1}$, 
\ALN{eq:shrink-2}{|\langle\nabla_{S\SM j}f(\BM x),\BM v_{S\SM j}-q\BM u_{S\SM j}\rangle|\le\sqrt{\frac{4D^2\|\nabla_{S\SM j}f(\BM x)\|_2^2}2\ln\frac2{\frac{4D\delta_1}{4D+1}}}=D\|\nabla_{S\SM j}f(\BM x)\|_2\sqrt{2\ln\frac{4D+1}{2D\delta_1}}
.}
Third, by Lemma~\ref{LEM:lip-smooth},
\ALN{eq:shrink-4}{|e(\BM u)|&\le\frac{L_1\epsilon}2\|\BM u\|_2^2=\frac{L_1\epsilon}2\sum_{i\in S}\sigma_i^2=\frac{L_1\epsilon}2\sum_{i\in S}1^2=\frac{L_1|S|}2\epsilon,\label{eq:shrink-3}\\
|e(\BM v)|&\le\frac{L_1\epsilon}2\|\BM v\|_2^2=\frac{L_1\epsilon}2\sum_{i\in S}(\sigma_ih_i)^2\le\frac{L_1\epsilon}2\sum_{i\in S}D^2=\frac{L_1|S|D^2}2\epsilon\le\frac{L_1|S|d^2}2\epsilon
.}
Fourth, with the absolute constant $C_1$ in Lemma~\ref{LEM:C1}, the assumption on $j$ implies
\AL{|\nabla_jf(\BM x)|&>\big(C_1D+\tfrac1D\big)\sqrt{2\ln\tfrac3{\delta_1}}\|\nabla_{S\setminus\{j\}} f(\BM x)\|_2+L_1|S|\Big(d^2+d+\frac12\Big)\epsilon\\
&\ge 2D\|\nabla_{S\SM j}f(\BM x)\|_2\sqrt{2\ln\frac{4D+1}{2D\delta_1}}+\|\nabla_{S\SM j}f(\BM x)\|_2\sqrt{2\ln\frac{8D+2}{\delta_1}}+L_1|S|\Big(d^2+d+\frac12\Big)\epsilon.}
Thus,
\ALN{eq:shrink-5}{\frac{D\|\nabla_{S\SM j}f(\BM x)\|_2\sqrt{2\ln\frac{4D+1}{2D\delta_1}}+\frac{L_1|S|d^2}2\epsilon+d\frac{L_1|S|}2\epsilon}{|\nabla_jf(\BM x)|-\|\nabla_{S\SM j}f(\BM x)\|_2\sqrt{2\ln\frac{8D+2}{\delta_1}}-\frac{L_1|S|}2\epsilon}<\frac12.}
Finally, by Eqs.~\eqref{eq:shrink-1}, \eqref{eq:shrink-2}, \eqref{eq:shrink-3}, \eqref{eq:shrink-4}, \& \eqref{eq:shrink-5}, with probability $\ge1-\big(\delta_2+\frac{\delta_1}{4D+1}+\frac{4D\delta_1}{4D+1}\big)=1-(\delta_1+\delta_2)$,
\AL{
\Big|\frac{f(\BM x+\epsilon\BM v)-f(\BM x)}{f(\BM x+\epsilon\BM u)-f(\BM x)}-q\Big|={}&\Big|\frac{g(\BM v)}{g(\BM u)}-q\Big|\\
={}&\Big|\frac{\sigma_jq\nabla_jf(\BM x)+\langle\nabla_{S\SM j}f(\BM x),\BM v_{S\SM j}\rangle+e(\BM v)}{\sigma_j\nabla_jf(\BM x)+\langle\nabla_{S\SM j}f(\BM x),\BM u_{S\SM j}\rangle+e(\BM u)}-q\Big|\\
={}&\Big|\frac{\langle\nabla_{S\SM j}f(\BM x),\BM v_{S\SM j}-q\BM u_{S\SM j}\rangle+e(\BM v)-qe(\BM u)}{\sigma_j\nabla_jf(\BM x)+\langle\nabla_{S\SM j}f(\BM x),\BM u_{S\SM j}\rangle+e(\BM u)}\Big|\\
\le{}&\frac{|\langle\nabla_{S\SM j}f(\BM x),\BM v_{S\SM j}-q\BM u_{S\SM j}\rangle|+|e(\BM v)|+q|e(\BM u)|}{|\nabla_jf(\BM x)|-|\langle\nabla_{S\SM j}f(\BM x),\BM u_{S\SM j}\rangle|-|e(\BM u)|}\\
\le{}&\frac{D\|\nabla_{S\SM j}f(\BM x)\|_2\sqrt{2\ln\frac{4D+1}{2D\delta_1}}+\frac{L_1|S|d^2}2\epsilon+d\frac{L_1|S|}2\epsilon}{|\nabla_jf(\BM x)|-\|\nabla_{S\SM j}f(\BM x)\|_2\sqrt{2\ln\frac{8D+2}{\delta_1}}-\frac{L_1|S|}2\epsilon}\\
<{}&\frac12
.}
This implies $\OP{round}\!\big(\frac{f(\BM x+\epsilon\BM v)-f(\BM x)}{f(\BM x+\epsilon\BM u)-f(\BM x)}\big)=q=h_j$, completing the proof.
\end{proof}

\ASSEC{Proof of Lemma~\ref{LEM:one}}{app:pf-one}

The following Lemma~\ref{LEM:one-general} is the general version of our Lemma~\ref{LEM:one}, which is an improvement of Lemma~3.3 in \citet{indyk2011power}. The main technical difficulties here are (i) how to construct a division schedule $\{D_r\}_{r\ge1}$ with which we can iteratively apply Lemma~\ref{LEM:shrink} and (ii) how to show that the constructed division schedule grows rapidly. 

\begin{lemma}\label{LEM:one-general}
Suppose that hyperparameters $0<\delta,\phi,\theta<1$, and integer $D\ge2$ satisfy
\GA{
\phi\bigg(D\frac{(1-\theta)(1-\phi)\delta\ln(3/(\theta(1-\phi)\delta))}{\ln(3/(\theta(1-\phi)\phi\delta))}\bigg)^{3/2}-D\frac{(1-\theta)(1-\phi)\delta\ln(3/(\theta(1-\phi)\delta))}{\ln(3/(\theta(1-\phi)\phi\delta))}\ge(1-\theta)(1-\phi)\phi\delta,\label{eq:LEM:one:ASS1}\\
D\frac{(1-\theta)(1-\phi)\delta\ln(3/(\theta(1-\phi)\delta))}{\ln(3/(\theta(1-\phi)\phi\delta))}\ge\frac32,\label{eq:LEM:one:ASS2}}
and
\AL{A:=\bigg(D-\frac1{\sqrt{D\frac{(1-\theta)(1-\phi)\delta\ln(3/(\theta(1-\phi)\delta))}{\ln(3/(\theta(1-\phi)\phi\delta))}}-1}\bigg)\frac{(1-\theta)(1-\phi)\phi^2\delta\ln\frac{3}{\theta(1-\phi)\delta}}{\ln\frac{3}{\theta(1-\phi)\phi\delta}}>1.\label{eq:LEM:one:ASS3}}
There exists a division schedule $\{D_r\}_{r\ge1}$ with $D_1=D$ and $D_r\ge\frac{A^{(3/2)^{r-1}}}{(1-\theta)(1-\phi)\phi^2\delta}$ such that given $\BM x\in\BB R^d$, $\epsilon>0$, $S\subseteq[d]$, if there exists $j\in S$ with
\ALN{eq:LEM:one:ASS-j}{|\nabla_jf(\BM x)|>\Big(C_1D+\frac1{D}\Big)\sqrt{2\ln\frac{3}{\theta(1-\phi)\delta}}\|\nabla_{S\SM j}f(\BM x)\|_2+\lambda_{1,|S|}\cdot\epsilon,}
then $O(\log_{3/2}\log_A|S|)$ iterations of Lemma~\ref{LEM:shrink} with parameters $\{D_r\}_{r\ge1}$ can find $j$ with probability at least $1-\delta$.
\end{lemma}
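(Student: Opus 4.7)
The strategy is to iteratively apply Lemma~\ref{LEM:shrink}, where at iteration $r$ we use failure probabilities $\delta_{1,r},\delta_{2,r}$ and division parameter $D_r$, producing a nested sequence $S=S_0\supseteq S_1\supseteq\cdots$ of candidate sets all containing $j$. The construction must simultaneously ensure (i) the total failure probability is at most $\delta$, (ii) the SNR precondition \eqref{eq:shrink-lem-0} is preserved inductively as the candidate group shrinks and the log-factor grows, and (iii) $D_r$ admits the claimed doubly-exponential lower bound so that after $O(\log_{3/2}\log_A|S|)$ iterations we have $|S_r\SM j|<1$.

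I would allocate the failure budget geometrically by setting $\delta_{1,r}:=\theta(1-\phi)\phi^{r-1}\delta$ and $\delta_{2,r}:=(1-\theta)(1-\phi)\phi^{r-1}\delta$, so that $\sum_{r\ge1}(\delta_{1,r}+\delta_{2,r})=(1-\phi)\delta\sum_{r\ge0}\phi^r=\delta$. Note that $\delta_{1,1}=\theta(1-\phi)\delta$ is precisely what appears in hypothesis \eqref{eq:LEM:one:ASS-j}, so Lemma~\ref{LEM:shrink} applies at $r=1$ with $D_1=D$. Then I would define $D_{r+1}$ recursively as the largest integer $\ge2$ satisfying the ``no-loss-of-SNR'' inequality
\[(C_1D_{r+1}+1/D_{r+1})\sqrt{2\ln(3/\delta_{1,r+1})}\le(C_1D_r+1/D_r)\sqrt{2\ln(3/\delta_{1,r})}\sqrt{D_r\delta_{2,r}}.\]
Given that iterations $1,\ldots,r$ have succeeded, iterating Eq.~\eqref{eq:shrink-lem-2} yields the telescoping bound $\|\nabla_{S_r\SM j}f(\BM x)\|_2\le\|\nabla_{S\SM j}f(\BM x)\|_2\prod_{i=1}^r(D_i\delta_{2,i})^{-1/2}$. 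Combining this with hypothesis \eqref{eq:LEM:one:ASS-j} and the defining inequality of $D_{r+1}$ verifies the SNR precondition of Lemma~\ref{LEM:shrink} for step $r+1$; the finite-difference error term $\lambda_{1,|S_r|}\epsilon$ only shrinks along the way because $|S_r|\le|S|$.

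The main technical obstacle is the growth-rate analysis: showing $D_r\ge A^{(3/2)^{r-1}}/((1-\theta)(1-\phi)\phi^2\delta)$. Plugging in the geometric probabilities, the defining inequality essentially reads $D_{r+1}\lesssim D_r^{3/2}\cdot c_r$, where $c_r$ depends on $\phi$ through both $\sqrt{\delta_{2,r}}$ and the ratio $\ln(3/\delta_{1,r})/\ln(3/\delta_{1,r+1})$ of log factors. I would prove the claimed lower bound by induction on $r$: the base case is $D\ge A/((1-\theta)(1-\phi)\phi^2\delta)$, which is exactly the definition of $A$ in \eqref{eq:LEM:one:ASS3}; the inductive step rewrites the recursion and uses \eqref{eq:LEM:one:ASS1} to absorb the $-1/\sqrt{\cdot-1}$ correction (which comes from isolating the leading $D$-term in $C_1D+1/D$ plus integer rounding), while \eqref{eq:LEM:one:ASS2} guarantees $D_{r+1}\ge2$ so Lemma~\ref{LEM:shrink} stays applicable. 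Condition \eqref{eq:LEM:one:ASS3} enforces $A>1$, which is exactly what makes the exponent $(3/2)^{r-1}$ blow up.

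Once the doubly-exponential bound on $D_r$ is established, Eq.~\eqref{eq:shrink-lem-1} gives $|S_R\SM j|\le|S|/\prod_{r=1}^RD_r$, and $\prod_{r=1}^RD_r\ge A^{2((3/2)^R-1)}/((1-\theta)(1-\phi)\phi^2\delta)^R$ grows doubly-exponentially in $R$. Choosing $R=\lceil\log_{3/2}\log_A|S|\rceil+O(1)$ forces $|S_R\SM j|<1$, so $S_R=\{j\}$ and $j$ is identified. A union bound over the $R=O(\log_{3/2}\log_A|S|)$ iterations keeps the total failure probability within $\sum_{r=1}^R(\delta_{1,r}+\delta_{2,r})\le\delta$, finishing the proof.
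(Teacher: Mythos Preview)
Your proposal is correct and follows the same approach as the paper: the identical geometric budget $\delta_{1,r}=\theta(1-\phi)\phi^{r-1}\delta$, $\delta_{2,r}=(1-\theta)(1-\phi)\phi^{r-1}\delta$, a schedule with $D_{r+1}\approx D_r^{3/2}$ (the paper uses the explicit formula $D_{r+1}=\big\lfloor D_r^{3/2}\sqrt{\delta_{2,r}\ln(3/\delta_{1,r})/\ln(3/\delta_{1,r+1})}\big\rfloor$ rather than your implicit ``largest integer'' definition), inductive SNR verification, and termination once $\prod_rD_r>|S|$. One small correction on the attribution of hypotheses: in the paper it is assumption~\eqref{eq:LEM:one:ASS2} that makes the $-1/(\sqrt{\cdot}-1)$ correction well-defined and bounded, while assumption~\eqref{eq:LEM:one:ASS1} is used (together with a short calculus lemma on $x\mapsto\phi x^{3/2}-x$) to bootstrap the monotonicity of $r\mapsto D_r\,\delta_{2,r}\ln(3/\delta_{1,r})/\ln(3/\delta_{1,r+1})$, which is what simultaneously handles the $1/D$ term in the SNR check and drives the doubly-exponential lower bound on $D_r$.
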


Before proving Lemma~\ref{LEM:one-general}, we show a technical lemma.

\begin{lemma}
\label{LEM:dda-mono}
For any $0<\phi<1$ and $0<\delta<1$, if there is $x_0\ge0$ with $\phi x_0^{3/2}-x_0\ge\phi\delta$, then $\phi x^{3/2}-x\ge\phi\delta$ $\forall x\ge x_0$.
\end{lemma}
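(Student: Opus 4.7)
The plan is to reduce the claim to a single-variable monotonicity argument. Define $g(x):=\phi x^{3/2}-x$ on $[0,\infty)$, so the hypothesis reads $g(x_0)\ge\phi\delta$ and the goal reads $g(x)\ge\phi\delta$ for all $x\ge x_0$. It suffices to show that $g$ is nondecreasing on $[x_0,\infty)$, because then $g(x)\ge g(x_0)\ge\phi\delta$.

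The key observation is that the hypothesis itself forces $x_0$ to be large enough to sit on the increasing branch of $g$. Concretely, since $\phi\delta>0$ (both $\phi,\delta\in(0,1)$), we have $g(x_0)>0$, i.e.\ $\phi x_0^{3/2}>x_0$. This rules out $x_0=0$ and, after dividing by $x_0>0$, gives $\phi\sqrt{x_0}>1$, hence $\sqrt{x_0}>1/\phi$. Thus the hypothesis implicitly pins $x_0$ away from the dip of $g$.

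Next, I would compute $g'(x)=\tfrac{3\phi}{2}\sqrt{x}-1$ and note that for any $x\ge x_0$ we have $\sqrt{x}\ge\sqrt{x_0}>1/\phi$, so
\[
g'(x)=\frac{3\phi}{2}\sqrt{x}-1>\frac{3\phi}{2}\cdot\frac{1}{\phi}-1=\frac{1}{2}>0.
\]
Hence $g$ is strictly increasing on $[x_0,\infty)$, which combined with $g(x_0)\ge\phi\delta$ yields the claim.

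I do not expect any real obstacle: the entire argument is one derivative computation plus the simple reduction ``$g(x_0)>0\Rightarrow\sqrt{x_0}>1/\phi\Rightarrow g'\ge\tfrac12$''. The only thing to be careful about is excluding the degenerate case $x_0=0$ (handled automatically by $\phi\delta>0$) and making sure one divides by $x_0$ only after verifying it is positive. No case analysis on $\phi$ or $\delta$ beyond their positivity is needed, and no appeal to any earlier lemma in the paper is required.
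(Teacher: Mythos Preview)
Your proof is correct and follows essentially the same approach as the paper: both show that the hypothesis forces $x_0$ to lie in the increasing region of $g(x)=\phi x^{3/2}-x$ and then conclude by monotonicity. Your version is marginally slicker in that you extract $\sqrt{x_0}>1/\phi$ directly from $g(x_0)>0$ rather than locating the exact minimizer $4/(9\phi^2)$ as the paper does, but the argument is otherwise identical.
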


\begin{proof}
Let $\psi(x):=\phi x^{3/2}-x$, $x\ge0$. Since
\AL{\psi'(x)=\frac32\phi x^{1/2}-1,}
then $\psi(x)$ is decreasing over $\big[0,\frac{4}{9\phi^2}\big]$ and increasing over $\big(\frac{4}{9\phi^2},+\infty\big)$. Thus, for every $x\in\big[0,\frac{4}{9\phi^2}\big]$,
\AL{\psi(x)\le\psi(0)=0<\phi\delta.}
This implies $x_0\in\big(\frac{4}{9\phi^2},+\infty\big)$, over which $\psi$ is increasing. It follows that for every $x\ge x_0$,
\AL{&\psi(x)\ge\psi(x_0)\ge\phi\delta.\qedhere}
\end{proof}


Now we are ready to prove Lemma~\ref{LEM:one-general}.

\begin{proof}[Proof of Lemma~\ref{LEM:one-general}]
For $r\ge1$, let
\AL{\delta_{r,1}:=\theta(1-\phi)\phi^{r-1}\delta,\qquad\delta_{r,2}:=(1-\theta)(1-\phi)\phi^{r-1}\delta.}
Let $D_1:=D$, and let $D_{r+1}:=\big\lfloor D_r^{3/2}\sqrt{\frac{\delta_{r,2}\ln(3/\delta_{r,1})}{\ln(3/\delta_{r+1,1})}}\big\rfloor$ for $r\ge1$. Note that
\AL{\frac{\ln\frac{3}{\delta_{r,1}}}{\ln\frac{3}{\delta_{r+1,1}}}=1-\frac{\ln\frac1\phi}{\ln\frac{3}{\delta_{r+1,1}}}}
is non-decreasing as $r$ increases, so we have
\ALN{eq:one-mono}{\frac{\ln\frac{3}{\delta_{r+1,1}}}{\ln\frac{3}{\delta_{r+2,1}}}\ge\frac{\ln\frac{3}{\delta_{r,1}}}{\ln\frac{3}{\delta_{r+1,1}}}.}
We will show by strong induction that for every $r\ge1$,
\ALN{eq:one-ind-1}{
\phi\bigg(D_r\frac{\delta_{r,2}\ln\frac3{\delta_{r,1}}}{\ln\frac3{\delta_{r+1,1}}}\bigg)^{3/2}-D_r\frac{\delta_{r,2}\ln\frac{3}{\delta_{r,1}}}{\ln\frac{3}{\delta_{r+1,1}}}\ge\frac{\phi\delta_{r,2}\ln\frac{3}{\delta_{r,1}}}{\ln\frac{3}{\delta_{r+1,1}}},}
and
\ALN{eq:one-ind-2}{
D_{r+1}\frac{\delta_{r+1,2}\ln(3/\delta_{r+1,1})}{\ln(3/\delta_{r+2,1})}\ge D_r\frac{\delta_{r,2}\ln(3/\delta_{r,1})}{\ln(3/\delta_{r+1,1})}.
}
First, consider the base case $r=1$. Using the assumption~Eq.~\eqref{eq:LEM:one:ASS1} and the fact that $\delta_{2,1}\le\delta_{1,1}$,
\ALN{eq:one-base-1}{
\phi\bigg(D_1\frac{\delta_{1,2}\ln\frac3{\delta_{1,1}}}{\ln\frac3{\delta_{2,1}}}\bigg)^{3/2}-D_1\frac{\delta_{1,2}\ln\frac{3}{\delta_{1,1}}}{\ln\frac{3}{\delta_{2,1}}}\ge\phi\delta_{1,2}\ge\frac{\phi\delta_{1,2}\ln\frac{3}{\delta_{1,1}}}{\ln\frac{3}{\delta_{2,1}}}.}
Furthermore, by Eqs.~\eqref{eq:one-mono} \& \eqref{eq:one-base-1},
\AL{D_2\frac{\delta_{2,2}\ln\frac{3}{\delta_{2,1}}}{\ln\frac{3}{\delta_{3,1}}}-D_1\frac{\delta_{1,2}\ln\frac{3}{\delta_{1,1}}}{\ln\frac{3}{\delta_{2,1}}}
={}&\bigg\lfloor D_1^{3/2}\sqrt{\frac{\delta_{1,2}\ln\frac3{\delta_{1,1}}}{\ln\frac3{\delta_{2,1}}}}\bigg\rfloor\frac{\phi\delta_{1,2}\ln\frac{3}{\delta_{2,1}}}{\ln\frac{3}{\delta_{3,1}}}-D_1\frac{\delta_{1,2}\ln\frac{3}{\delta_{1,1}}}{\ln\frac{3}{\delta_{2,1}}}\\
\ge{}&\bigg\lfloor D_1^{3/2}\sqrt{\frac{\delta_{1,2}\ln\frac3{\delta_{1,1}}}{\ln\frac3{\delta_{2,1}}}}\bigg\rfloor\frac{\phi\delta_{1,2}\ln\frac{3}{\delta_{1,1}}}{\ln\frac{3}{\delta_{2,1}}}-D_1\frac{\delta_{1,2}\ln\frac{3}{\delta_{1,1}}}{\ln\frac{3}{\delta_{2,1}}}\\
\ge{}&\bigg(D_1^{3/2}\sqrt{\frac{\delta_{1,2}\ln\frac3{\delta_{1,1}}}{\ln\frac3{\delta_{2,1}}}}-1\bigg)\frac{\phi\delta_{1,2}\ln\frac{3}{\delta_{1,1}}}{\ln\frac{3}{\delta_{2,1}}}-D_1\frac{\delta_{1,2}\ln\frac{3}{\delta_{1,1}}}{\ln\frac{3}{\delta_{2,1}}}\\
={}&\phi\bigg(D_1\frac{\delta_{1,2}\ln\frac3{\delta_{1,1}}}{\ln\frac3{\delta_{2,1}}}\bigg)^{3/2}-D_1\frac{\delta_{1,2}\ln\frac{3}{\delta_{1,1}}}{\ln\frac{3}{\delta_{2,1}}}-\frac{\phi\delta_{1,2}\ln\frac{3}{\delta_{1,1}}}{\ln\frac{3}{\delta_{2,1}}}\\
\ge{}&0
.}
Next, suppose that the inductive hypotheses Eqs.~\eqref{eq:one-ind-1} \& \eqref{eq:one-ind-2} hold for $r$, and consider the induction step from $r$ to $r+1$. By strong induction w.r.t.\ Eq.~\eqref{eq:one-ind-2},
\ALN{eq:one-ind-strong}{D_{r+1}\frac{\delta_{r+1,2}\ln(3/\delta_{r+1,1})}{\ln(3/\delta_{r+2,1})}\ge D_{1}\frac{\delta_{1,2}\ln(3/\delta_{1,1})}{\ln(3/\delta_{2,1})}.}
By Lemma~\ref{LEM:dda-mono} with $x_0=D_1\frac{\delta_{1,2}\ln\frac{3}{\delta_{1,1}}}{\ln\frac{3}{\delta_{2,1}}}$ and Eqs.~\eqref{eq:one-base-1} \& \eqref{eq:one-ind-strong} as well as the fact that $\delta_{r+2,1}\le\delta_{r+1,1}$,
\ALN{eq:one-ind-step1}{
\phi\bigg(D_{r+1}\frac{\delta_{r+1,2}\ln\frac3{\delta_{r+1,1}}}{\ln\frac3{\delta_{r+2,1}}}\bigg)^{3/2}-D_{r+1}\frac{\delta_{r+1,2}\ln\frac{3}{\delta_{r+1,1}}}{\ln\frac{3}{\delta_{r+2,1}}}\ge\phi\delta_{1,2}\ge\phi\delta_{r+1,2}\ge\frac{\phi\delta_{r+1,2}\ln\frac3{\delta_{r+1,1}}}{\ln\frac3{\delta_{r+2,1}}}
.}
Furthermore, by Eqs.~\eqref{eq:one-mono} \& \eqref{eq:one-ind-step1},
\AL{
&D_{r+2}\frac{\delta_{r+2,2}\ln\frac{3}{\delta_{r+2,1}}}{\ln\frac{3}{\delta_{r+3,1}}}-D_{r+1}\frac{\delta_{r+1,2}\ln\frac{3}{\delta_{r+1,1}}}{\ln\frac{3}{\delta_{r+2,1}}}\\
={}&\bigg\lfloor D_{r+1}^{3/2}\sqrt{\frac{\delta_{r+1,2}\ln\frac3{\delta_{r+1,1}}}{\ln\frac3{\delta_{r+2,1}}}}\bigg\rfloor\frac{\phi\delta_{r+1,2}\ln\frac{3}{\delta_{r+2,1}}}{\ln\frac{3}{\delta_{r+3,1}}}-D_1\frac{\delta_{r+1,2}\ln\frac{3}{\delta_{r+1,1}}}{\ln\frac{3}{\delta_{r+2,1}}}\\
\ge{}&\bigg\lfloor D_{r+1}^{3/2}\sqrt{\frac{\delta_{r+1,2}\ln\frac3{\delta_{r+1,1}}}{\ln\frac3{\delta_{r+2,1}}}}\bigg\rfloor\frac{\phi\delta_{r+1,2}\ln\frac{3}{\delta_{r+1,1}}}{\ln\frac{3}{\delta_{r+2,1}}}-D_{r+1}\frac{\delta_{r+1,2}\ln\frac{3}{\delta_{r+1,1}}}{\ln\frac{3}{\delta_{r+2,1}}}\\
\ge{}&\bigg(D_{r+1}^{3/2}\sqrt{\frac{\delta_{r+1,2}\ln\frac3{\delta_{r+1,1}}}{\ln\frac3{\delta_{r+2,1}}}}-1\bigg)\frac{\phi\delta_{r+1,2}\ln\frac{3}{\delta_{r+1,1}}}{\ln\frac{3}{\delta_{r+2,1}}}-D_{r+1}\frac{\delta_{r+1,2}\ln\frac{3}{\delta_{r+1,1}}}{\ln\frac{3}{\delta_{r+2,1}}}\\
={}&\phi\bigg(D_{r+1}\frac{\delta_{r+1,2}\ln\frac3{\delta_{r+1,1}}}{\ln\frac3{\delta_{r+2,1}}}\bigg)^{3/2}-D_{r+1}\frac{\delta_{r+1,2}\ln\frac{3}{\delta_{r+1,1}}}{\ln\frac{3}{\delta_{r+2,1}}}-\frac{\phi\delta_{r+1,2}\ln\frac{3}{\delta_{r+1,1}}}{\ln\frac{3}{\delta_{r+2,1}}}\\
\ge{}&0
.}
Hence, the inductive hypotheses hold for all $r\ge1$. In particular, Eq.~\eqref{eq:one-ind-2} shows that $D_r\frac{\delta_{r,2}\ln(3/\delta_{r,1})}{\ln(3/\delta_{r+1,1})}$ is non-decreasing as $r$ increases. Since $D_r\frac{\delta_{r,2}\ln(3/\delta_{r,1})}{\ln(3/\delta_{r+1,1})}$ is non-decreasing with $r$, but both $\delta_{r,2}$ and $\frac{\ln(3/\delta_{r,1})}{\ln(3/\delta_{r+1,1})}$ are non-increasing with $r$, then $D_r$ is non-decreasing with $r$. Together with assumption Eq.~\eqref{eq:LEM:one:ASS2}, for every $r\ge1$,
\ALN{eq:one-div-aux}{&\frac{1}{D_{r}\big(\sqrt{D_{r}\frac{\delta_{r,2}\ln(3/\delta_{r,1})}{\ln(3/\delta_{r+1,1})}}-1\big)}
\le{}\frac{1}{D_{r}\big(\sqrt{D_1\frac{\delta_{1,2}\ln(3/\delta_{1,1})}{\ln(3/\delta_{2,1})}}-1\big)}
\le{}\frac1{D_r\big(\frac32-1\big)}=\frac2{D_r}\le\frac{2}{D_1}\le\frac22=1.
}
Furthermore, using the fact that $\delta_{r+1,2}=\phi\delta_{r,2}$ and Eq.~\eqref{eq:one-mono},
\ALN{eq:one-div-1}{&\bigg(D_{r+1}-\frac1{\sqrt{D_{r+1}\frac{\delta_{r+1,2}\ln(3/\delta_{r+1,1})}{\ln(3/\delta_{r+2,1})}}-1}\bigg)\frac{\phi^2\delta_{r+1,2}\ln\frac{3}{\delta_{r+1,1}}}{\ln\frac{3}{\delta_{r+2,1}}}\\
={}&\bigg(D_{r+1}-\frac1{\sqrt{D_{r+1}\frac{\delta_{r+1,2}\ln(3/\delta_{r+1,1})}{\ln(3/\delta_{r+2,1})}}-1}\bigg)\frac{\phi^3\delta_{r,2}\ln\frac{3}{\delta_{r+1,1}}}{\ln\frac{3}{\delta_{r+2,1}}}\\
\ge{}&\bigg(D_{r+1}-\frac1{\sqrt{D_{r}\frac{\delta_{r,2}\ln(3/\delta_{r,1})}{\ln(3/\delta_{r+1,1})}}-1}\bigg)\frac{\phi^3\delta_{r,2}\ln\frac{3}{\delta_{r+1,1}}}{\ln\frac{3}{\delta_{r+2,1}}}\\
\ge{}&\bigg(D_{r+1}-\frac1{\sqrt{D_{r}\frac{\delta_{r,2}\ln(3/\delta_{r,1})}{\ln(3/\delta_{r+1,1})}}-1}\bigg)\frac{\phi^3\delta_{r,2}\ln\frac{3}{\delta_{r,1}}}{\ln\frac{3}{\delta_{r+1,1}}}.}
Plugging into the definition of $D_{r+1}$ gives
\ALN{eq:one-div-2}{
&\bigg(D_{r+1}-\frac1{\sqrt{D_{r}\frac{\delta_{r,2}\ln(3/\delta_{r,1})}{\ln(3/\delta_{r+1,1})}}-1}\bigg)\frac{\phi^3\delta_{r,2}\ln\frac{3}{\delta_{r,1}}}{\ln\frac{3}{\delta_{r+1,1}}}\\
={}&\bigg(\bigg\lfloor D_r^{3/2}\sqrt{\frac{\delta_{r,2}\ln\frac{3}{\delta_{r,1}}}{\ln\frac{3}{\delta_{r+1,1}}}}\bigg\rfloor-\frac1{\sqrt{D_{r}\frac{\delta_{r,2}\ln(3/\delta_{r,1})}{\ln(3/\delta_{r+1,1})}}-1}\bigg)\frac{\phi^3\delta_{r,2}\ln\frac{3}{\delta_{r,1}}}{\ln\frac{3}{\delta_{r+1,1}}}\\
\ge{}&\bigg(D_r^{3/2}\sqrt{\frac{\delta_{r,2}\ln\frac{3}{\delta_{r,1}}}{\ln\frac{3}{\delta_{r+1,1}}}}-1-\frac1{\sqrt{D_{r}\frac{\delta_{r,2}\ln(3/\delta_{r,1})}{\ln(3/\delta_{r+1,1})}}-1}\bigg)\frac{\phi^3\delta_{r,2}\ln\frac{3}{\delta_{r,1}}}{\ln\frac{3}{\delta_{r+1,1}}}\\
={}&\bigg(\bigg(D_r\frac{\delta_{r,2}\ln\frac{3}{\delta_{r,1}}}{\ln\frac{3}{\delta_{r+1,1}}}\bigg)^{3/2}-\frac{\sqrt{D_{r}}\big(\frac{\delta_{r,2}\ln(3/\delta_{r,1})}{\ln(3/\delta_{r+1,1})}\big)^{3/2}}{\sqrt{D_{r}\frac{\delta_{r,2}\ln(3/\delta_{r,1})}{\ln(3/\delta_{r+1,1})}}-1}\bigg)\phi^3\\
={}&\bigg(D_r\frac{\delta_{r,2}\ln\frac{3}{\delta_{r,1}}}{\ln\frac{3}{\delta_{r+1,1}}}\bigg)^{3/2}\bigg(1-\frac{1}{D_{r}\big(\sqrt{D_{r}\frac{\delta_{r,2}\ln(3/\delta_{r,1})}{\ln(3/\delta_{r+1,1})}}-1\big)}\bigg)\phi^3
.}
Since $0\le1-\frac{1}{D_{r}\big(\sqrt{D_{r}\frac{\delta_{r,2}\ln(3/\delta_{r,1})}{\ln(3/\delta_{r+1,1})}}-1\big)}\le1$ by Eq.~\eqref{eq:one-div-aux}, then
\ALN{eq:one-div-3}{
&\bigg(D_r\frac{\delta_{r,2}\ln\frac{3}{\delta_{r,1}}}{\ln\frac{3}{\delta_{r+1,1}}}\bigg)^{3/2}\bigg(1-\frac{1}{D_{r}\big(\sqrt{D_{r}\frac{\delta_{r,2}\ln(3/\delta_{r,1})}{\ln(3/\delta_{r+1,1})}}-1\big)}\bigg)\phi^3\\
\ge{}&\bigg(D_r\frac{\delta_{r,2}\ln\frac{3}{\delta_{r,1}}}{\ln\frac{3}{\delta_{r+1,1}}}\bigg)^{3/2}\bigg(1-\frac{1}{D_{r}\big(\sqrt{D_{r}\frac{\delta_{r,2}\ln(3/\delta_{r,1})}{\ln(3/\delta_{r+1,1})}}-1\big)}\bigg)^{3/2}\phi^3\\
={}&\bigg(\bigg(D_{r}-\frac1{\sqrt{D_{r}\frac{\delta_{r,2}\ln(3/\delta_{r,1})}{\ln(3/\delta_{r+1,1})}}-1}\bigg)\frac{\phi^2\delta_{r,2}\ln\frac{3}{\delta_{r,1}}}{\ln\frac{3}{\delta_{r+1,1}}}\bigg)^{3/2}
.}
It follows from Eqs.~\eqref{eq:one-div-1}, \eqref{eq:one-div-2}, \& \eqref{eq:one-div-3} that
\ALN{eq:one-div}{\bigg(D_{r+1}-\frac1{\sqrt{D_{r+1}\frac{\delta_{r+1,2}\ln(3/\delta_{r+1,1})}{\ln(3/\delta_{r+2,1})}}-1}\bigg)\frac{\phi^2\delta_{r+1,2}\ln\frac{3}{\delta_{r+1,1}}}{\ln\frac{3}{\delta_{r+2,1}}}\ge\bigg(\bigg(D_{r}-\frac1{\sqrt{D_{r}\frac{\delta_{r,2}\ln(3/\delta_{r,1})}{\ln(3/\delta_{r+1,1})}}-1}\bigg)\frac{\phi^2\delta_{r,2}\ln\frac{3}{\delta_{r,1}}}{\ln\frac{3}{\delta_{r+1,1}}}\bigg)^{3/2}.}
Then by an induction argument using Eq.~\eqref{eq:one-div}, we can show that for every $r\ge1$,
\AL{&\bigg(D_{r}-\frac1{\sqrt{D_{r}\frac{\delta_{r,2}\ln(3/\delta_{r,1})}{\ln(3/\delta_{r+1,1})}}-1}\bigg)\frac{\phi^2\delta_{r,2}\ln\frac{3}{\delta_{r,1}}}{\ln\frac{3}{\delta_{r+1,1}}}\\
\ge{}&\bigg(\bigg(D_{1}-\frac1{\sqrt{D_{1}\frac{\delta_{1,2}\ln(3/\delta_{1,1})}{\ln(3/\delta_{2,1})}}-1}\bigg)\frac{\phi^2\delta_{1,2}\ln\frac{3}{\delta_{1,1}}}{\ln\frac{3}{\delta_{2,1}}}\bigg)^{(3/2)^{r-1}}\\
={}&A^{(3/2)^{r-1}}.}
Hence,
\AL{D_r&>D_{r}-\frac1{\sqrt{D_{r}\frac{\delta_{r,2}\ln(3/\delta_{r,1})}{\ln(3/\delta_{r+1,1})}}-1}\ge\frac{\ln\frac{3}{\delta_{r+1,1}}}{\phi^2\delta_{r,2}\ln\frac{3}{\delta_{r,1}}}A^{(3/2)^{r-1}}\ge\frac1{\phi^2\delta_{1,2}}A^{(3/2)^{k-1}}=\frac{A^{(3/2)^{r-1}}}{(1-\theta)(1-\phi)\phi^2\delta}
.}
Finally, recall the assumption Eq.~\eqref{eq:LEM:one:ASS-j}:
\AL{|\nabla_jf(\BM x)|&>\Big(C_1D+\frac1{D}\Big)\sqrt{2\ln\frac{3}{\theta(1-\phi)\delta}}\|\nabla_{S\SM j}f(\BM x)\|_2+\lambda_{1,|S|}\cdot\epsilon\\
&=\Big(C_1D_1+\frac1{D_1}\Big)\sqrt{2\ln\frac{3}{\delta_{1,1}}}\|\nabla_{S\SM j}f(\BM x)\|_2+\lambda_{1,|S|}\cdot\epsilon
.}
Then, applying Lemma~\ref{LEM:shrink} with $(S,D_1,\delta_{1,1},\delta_{1,2})$ gives a set $S_1\subseteq S$ with
\ALN{eq:one-grad-1}{|S_1\SM j|\le\frac{|S\SM j|}{D_1},\qquad\|\nabla_{S_1\SM j}f(\BM x)\|_2\le\frac{\|\nabla_{S\setminus\{j\}}f(\BM x)\|_2}{\sqrt{D_1\delta_{1,2}}}.}
By Eqs.~\eqref{eq:LEM:one:ASS-j} \& \eqref{eq:one-grad-1},
\AL{|\nabla_jf(\BM x)|&>\Big(C_1D_1+\frac1{D_1}\Big)\sqrt{2\ln\frac{3}{\delta_{1,1}}}\|\nabla_{S\SM j}f(\BM x)\|_2+\lambda_{1,|S|}\cdot\epsilon\\
&\ge\Big(C_1D_1+\frac1{D_1}\Big)\sqrt{2\ln\frac{3}{\delta_{1,1}}}\sqrt{D_1\delta_{1,2}}\|\nabla_{S_1\SM j}f(\BM x)\|_2+\lambda_{1,|S|}\cdot\epsilon\\
&=\sqrt2\Big(C_1+\frac1{D_1^2}\Big)D_1^{3/2}\sqrt{\delta_{1,2}\ln\frac{3}{\delta_{1,1}}}\|\nabla_{S_1\SM j}f(\BM x)\|_2+\lambda_{1,|S|}\cdot\epsilon.}
Since we have shown that $\{D_r\}_{r\ge1}$ is non-decreasing, it follows that
\AL{|\nabla_jf(\BM x)|&>\sqrt2\Big(C_1+\frac1{D_1^2}\Big)D_1^{3/2}\sqrt{\delta_{1,2}\ln\frac{3}{\delta_{1,1}}}\|\nabla_{S_1\SM j}f(\BM x)\|_2+\lambda_{1,|S|}\cdot\epsilon\\
&\ge\sqrt2\Big(C_1+\frac1{D_2^2}\Big)D_1^{3/2}\sqrt{\delta_{1,2}\ln\frac{3}{\delta_{1,1}}}\|\nabla_{S_1\SM j}f(\BM x)\|_2+\lambda_{1,|S|}\cdot\epsilon\\
&=\sqrt2\Big(C_1+\frac1{D_2^2}\Big) D_1^{3/2}\sqrt{\frac{\delta_{1,2}\ln\frac3{\delta_{1,1}}}{\ln\frac3{\delta_{2,1}}}}\sqrt{\ln\frac{3}{\delta_{2,1}}}\|\nabla_{S_1\SM j}f(\BM x)\|_2+\lambda_{1,|S|}\cdot\epsilon\\
&\ge\sqrt2\Big(C_1+\frac1{D_2^2}\Big)\bigg\lfloor D_1^{3/2}\sqrt{\frac{\delta_{1,2}\ln\frac3{\delta_{1,1}}}{\ln\frac3{\delta_{2,1}}}}\bigg\rfloor\sqrt{\ln\frac{3}{\delta_{2,1}}}\|\nabla_{S_1\SM j}f(\BM x)\|_2+\lambda_{1,|S|}\cdot\epsilon\\
&=\sqrt2\Big(C_1+\frac1{D_2^2}\Big)D_2\sqrt{\ln\frac{3}{\delta_{2,1}}}\|\nabla_{S_1\SM j}f(\BM x)\|_2+\lambda_{1,|S|}\cdot\epsilon\\
&=\Big(C_1D_2+\frac1{D_2}\Big)\sqrt{2\ln\frac{3}{\delta_{2,1}}}\|\nabla_{S_1\SM j}f(\BM x)\|_2+\lambda_{1,|S|}\cdot\epsilon
.}
Thus, we can apply Lemma~\ref{LEM:shrink} again to $(S_1,D_2)$. In fact, the same argument holds for all $r\ge1$. We can repeat Lemma~\ref{LEM:shrink} with division schedule $\{D_r\}_{r\ge1}$ for
\AL{R:=\lceil\log_{3/2}\log_A(\phi^2\delta_{1,2}|S\SM j|)\rceil+2=O(\log_{3/2}\log_A|S|)}
times to get $S_R\subseteq S$, and we have
\AL{|S_R\SM j|\le\frac{|S\SM j|}{D_1\cdots D_R}\le\frac{|S\SM j|}{D_R}<1.}
This implies $S_R=\{j\}$, so we have found $j$. The total probability of failure is at most
\AL{\sum_{r=1}^R(\delta_{r,1}+\delta_{r,2})&=\sum_{r=1}^R(\theta(1-\phi)\phi^{r-1}\delta+(1-\theta)(1-\phi)\phi^{r-1}\delta)\\
&=\sum_{r=1}^R(1-\phi)\phi^{r-1}\delta=(1-\phi^R)\delta<\delta.\qedhere}
\end{proof}

\begin{proof}[Proof of Lemma~\ref{LEM:one}]
We can use Lemma~\ref{LEM:one-general} with $D=18$, $\delta=1/2$, $\phi=0.64$, and $\theta=0.08$, and it gives
\AL{&C_2:=\Big(C_1D+\frac1{D}\Big)\sqrt{2\ln\frac{3}{\theta(1-\phi)\delta}}\approx134.88.\qedhere}
\end{proof}

\begin{remark}
Our constant $C_2$ here is nearly $4300$ times smaller than the corresponding constant $C_2'\approx579263$ of the IPW algorithm. 
\end{remark}

\ASSEC{Proof of Lemma~\ref{LEM:cand}}{app:pf-all}

Our Lemma~\ref{LEM:cand} is an improvement over Lemma~3.6 in \citet{indyk2011power}. Our main improvement here is by using dependent random partition to bound the worst-case size of candidate groups while \citet{indyk2011power} used independent subsampling. A main difference here is our Lemma~\ref{LEM:perm-cond} for analyzing dependent random partition.

Before proving Lemma~\ref{LEM:cand}, we show two technical lemmas. Let $(n)_m:=\prod_{k=0}^{m-1}(n-k)$ denote the falling factorial. 

\begin{lemma}\label{LEM:perm-cond}
Let $\omega$ be a random permutation of $[d]$. Given $1\le n\le d$, $1\le k\le\big\lceil\frac dn\big\rceil$, define a random set
\AL{S:=\Big\{j\in[d]:\Big\lceil\frac{\omega(j)}{n}\Big\rceil=k\Big\}.}
Note that $|S|=\sum_{q\in[d]}1_{[\lceil q/n\rceil=k]}$ is not random. Then, given any $H\subset[d]$ and any $j\in H$,
\AL{\Prb[S\cap H=\{j\}]=\frac{|S|}{d}\cdot\frac{(d-|S|)_{|H|-1}}{(d-1)_{|H|-1}}.}
Besides that, given any $H\subset[d]$, any $J\subseteq H$ with $|J|\le|S|$, and any $i\in[d]\setminus H$
\AL{\Prb[i\in S\mid S\cap H=J]=\frac{|S|-|J|}{d-|H|}.}
\end{lemma}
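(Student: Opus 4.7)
The plan is to exploit the fact that $S$ can be rewritten as $S=\omega^{-1}(T)$, where $T:=\{q\in[d]:\lceil q/n\rceil=k\}$ is a deterministic set of size $|S|$. Since $\omega$ is a uniformly random permutation of $[d]$, the set $S=\omega^{-1}(T)$ is a uniformly random subset of $[d]$ of fixed size $|S|$. After this identification, both identities reduce to elementary hypergeometric-style counting.

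For the first identity, I would decompose the event $\{S\cap H=\{j\}\}$ into $\{j\in S\}$ together with $\{h\notin S\text{ for all }h\in H\SM j\}$. By the uniform marginals, $\Prb[j\in S]=|S|/d$. Conditionally on $\omega(j)$ landing in a ``block-$k$'' slot, I would argue sequentially by sampling the values $\omega(h)$ for $h\in H\SM j$ one at a time: after $\omega(j)$ is placed, there remain $d-1$ total slots of which $|S|-1$ lie in block $k$; requiring each subsequent element of $H\SM j$ to avoid those block-$k$ slots produces the product
\[
\frac{d-|S|}{d-1}\cdot\frac{d-|S|-1}{d-2}\cdots\frac{d-|S|-(|H|-2)}{d-(|H|-1)}=\frac{(d-|S|)_{|H|-1}}{(d-1)_{|H|-1}},
\]
yielding the claimed formula. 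As a sanity check, one may alternatively compute $\binom{d-|H|}{|S|-1}/\binom{d}{|S|}$ directly and verify it simplifies to the same expression.

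For the second identity, I would use the following symmetry argument: conditioning on the event $\{S\cap H=J\}$ fixes which elements of $H$ lie in $S$, and conditional on this event the remaining $|S|-|J|$ elements of $S$ form a uniformly random subset of $[d]\setminus H$ of size $|S|-|J|$. By symmetry across $[d]\setminus H$, each particular $i\in[d]\setminus H$ is included with probability $(|S|-|J|)/(d-|H|)$. Equivalently, one can simply evaluate the ratio $\Prb[S\cap H=J,\,i\in S]/\Prb[S\cap H=J]=\binom{d-|H|-1}{|S|-|J|-1}/\binom{d-|H|}{|S|-|J|}$, which collapses to $(|S|-|J|)/(d-|H|)$.

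I do not anticipate any serious obstacle here; the proof is essentially an exercise in the symmetry of the uniform permutation. The only genuine insight is recognizing $S$ as a uniformly random subset of $[d]$ of size exactly $|S|$, after which both claims follow from elementary conditional-probability computations. The main reason to state this lemma explicitly is that it is the combinatorial backbone used to analyze \emph{dependent random partition} in the proofs of Lemmas~\ref{LEM:shrink} and \ref{LEM:cand}, replacing the independent-subsampling calculations of \citet{indyk2011power}.
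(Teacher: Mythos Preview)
Your proposal is correct and follows essentially the same approach as the paper: both reduce to sequential sampling of the $\omega$-values on $H$ (the paper via an explicit brute-force permutation generator with WLOG relabeling $H=[|H|]$, you via the equivalent abstraction that $S=\omega^{-1}(T)$ is a uniformly random $|S|$-subset of $[d]$), and the resulting product for the first identity is identical. For the second identity your symmetry argument is a slightly cleaner packaging of what the paper does by conditioning on all of $\omega(H)$ and then averaging via the law of total probability, but the content is the same.
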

\begin{proof}
W.l.o.g., suppose that $H=[|H|]$, $j=1\in H$, $i=|H|+1\in[d]\setminus H$. Recall the brute-force algorithm for generating a random permutation (shown Algorithm~\ref{alg:randperm}). To calculate the desired probabilities, we suppose that the random permutation $\omega$ is indeed generated via Algorithm~\ref{alg:randperm} from now on. 

\begin{algorithm}[h]
\caption{Generating a random permutation}\label{alg:randperm}
\begin{algorithmic}[1]
\REQUIRE{the number $d$ of elements}
\ENSURE{a random permutation of $[d]$}
\FOR{$r\gets1,2,\dots,d$}
    \STATE$\omega(r)\gets\textsf{Unif}([d]\SM{\omega(1),\omega(2),\dots,\omega(r-1)})$
\ENDFOR
\STATE\textbf{return} $\omega$
\end{algorithmic}
\end{algorithm}

Note that $S=\{r\in[d]:\omega(r)\in Q\}$, where $Q:=\{q\in[d]:\lceil q/n\rceil=k\}$. We will use this set $Q$ to rewrite events.

First, let's calculate $\Prb[S\cap H=\{j\}]$. For $j=1$, the event that $j\in S$ is equivalent to the event that $\omega(j)\in Q$; for all other $r\in H\SM j$, the event that $r\notin S$ is equivalent to the event that $\omega(j)\notin Q$. Hence, by the chain rule,
\AL{\Prb[S\cap H=\{j\}]&=\Prb[\omega(1)\in Q,\,\omega(2),\dots,\omega(|H|)\notin Q]\\
&=\Prb[\omega(1)\in Q]\cdot\prod_{r=2}^{|H|}\Prb[\omega(r)\notin Q\mid\omega(1)\in Q,\,\omega(2),\dots,\omega(r-1)\notin Q]\\
&=\frac{|Q|}{d}\cdot\prod_{r=2}^{|H|}\frac{d-|Q|-(r-2)}{d-(r-1)}=\frac{|Q|}{d}\cdot\frac{(d-|Q|)_{|H|-1}}{(d-1)_{|H|-1}}=\frac{|S|}{d}\cdot\frac{(d-|S|)_{|H|-1}}{(d-1)_{|H|-1}}
.}

Next, let's calculate $\Prb[i\in S\mid S\cap H=J]$. Since $\omega(i)=\omega(|H|+1)$ is determined immediately after $\omega(H)$, then given any $\omega(H)$, 
\AL{
\Prb[\omega(i)\in Q\mid\omega(H)]=\frac{|Q\setminus\omega(H)|}{|[d]\setminus\omega(H)|}=\frac{|Q\setminus\omega(H)|}{|[d]|-|\omega(H)|}=\frac{|Q\setminus\omega(H)|}{d-|H|}
.}
Therefore, by the law of total probability and the fact that $|Q|=|S|$,
\AL{
\Prb[i\in S\mid S\cap H=J]&=\Prb[\omega(i)\in Q\mid\omega(H)\cap Q=\omega(J)]\\
&=\Exp_{\omega(H)}\big[\Prb[\omega(i)\in Q\mid\omega(H)]\mathbin{\big|}\omega(H)\cap Q=\omega(J)\big]\\
&=\Exp_{\omega(H)}\Big[\frac{|Q\setminus\omega(H)|}{d-|H|}\mathbin{\Big|}\omega(H)\cap Q=\omega(J)\Big]\\
&=\Exp_{\omega(H)}\Big[\frac{|Q|-|\omega(J)|}{d-|H|}\mathbin{\Big|}\omega(H)\cap Q=\omega(J)\Big]\\
&=\Exp_{\omega(H)}\Big[\frac{|S|-|J|}{d-|H|}\mathbin{\Big|}\omega(H)\cap Q=\omega(J)\Big]\\
&=\frac{|S|-|J|}{d-|H|}
.\qedhere}
\end{proof}

\begin{lemma}\label{LEM:egamma}
For any $1\le s\le d$ and any $0<\gamma<1$,
\AM{\frac{\big(d-\big\lfloor\frac{\gamma d}{s}\big\rfloor\big)_{s-1}}{(d-1)_{s-1}}\ge\RM e^{-\gamma}.}
\end{lemma}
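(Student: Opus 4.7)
The plan is to set $m := \lfloor \gamma d/s \rfloor$ (so $m \le \gamma d/s$) and dispose of the trivial cases first. If $s = 1$, both $(d-m)_0$ and $(d-1)_0$ equal $1$, so the ratio is $1 \ge \mathrm{e}^{-\gamma}$. If $m = 0$, the ratio is $(d)_{s-1}/(d-1)_{s-1} = d/(d-s+1) \ge 1 \ge \mathrm{e}^{-\gamma}$. So I may assume $s \ge 2$ and $m \ge 1$ from here on.

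The main argument is the three-step chain
\[
\frac{(d-m)_{s-1}}{(d-1)_{s-1}} \;=\; \prod_{k=0}^{s-2} \frac{d-m-k}{d-1-k} \;\ge\; \Bigl(1 - \tfrac{m}{d}\Bigr)^{s-1} \;\ge\; \Bigl(1 - \tfrac{\gamma}{s}\Bigr)^{s-1} \;\ge\; \mathrm{e}^{-\gamma}.
\]
The factor-wise first inequality is the crux. I would establish $\tfrac{d-m-k}{d-1-k} \ge \tfrac{d-m}{d}$ for each $k \in \{0, \dots, s-2\}$ by cross-multiplying the positive denominators; after cancellation, this reduces to $d \ge m(k+1)$. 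This holds because $m(k+1) \le (\gamma d / s)(s-1) = \gamma(s-1) d / s < d$, using $\gamma < 1$ and $(s-1)/s \le 1$. Multiplying the $s-1$ factor-wise bounds and then applying $m/d \le \gamma/s$ delivers the first two inequalities in the chain.

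For the final inequality I would take logarithms and use the standard bound $\ln(1-x) \ge -x/(1-x)$ on $[0,1)$ with $x = \gamma/s$, giving $(s-1)\ln(1 - \gamma/s) \ge -(s-1)\gamma/(s-\gamma)$. The hypothesis $\gamma \le 1$ forces $s - 1 \le s - \gamma$, so the right-hand side is at least $-\gamma$, and exponentiating yields $(1-\gamma/s)^{s-1} \ge \mathrm{e}^{-\gamma}$. There is no real technical obstacle here; the only subtlety to track is that the hypothesis $\gamma < 1$ enters in two separate places — once to guarantee $d > m(k+1)$ in the factor-wise step, and once to convert the logarithmic bound into the tight exponent $-\gamma$ at the end.
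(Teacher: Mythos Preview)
Your proof is correct and follows essentially the same route as the paper: split off the trivial cases, bound the product factor-by-factor down to $(1-\gamma/s)^{s-1}$, and then show $(1-\gamma/s)^{s-1}\ge\mathrm{e}^{-\gamma}$. The only cosmetic differences are that you reach the per-factor bound $\ge 1-\gamma/s$ via the clean cross-multiplication reduction to $d\ge m(k+1)$ (the paper instead rewrites each factor as $1-\tfrac{\gamma}{s}\bigl(1-\tfrac{s/\gamma-(1+k)}{d-1-k}\bigr)$ and drops the positive inner fraction), and you prove the final exponential bound via $\ln(1-x)\ge -x/(1-x)$ rather than the monotonicity of $s'\mapsto(1-\gamma/s')^{s'-1}$.
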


\begin{proof}
\emph{Case 1:} If $s=1$, then
\AL{\frac{\big(d-\big\lfloor\frac{\gamma d}{s}\big\rfloor\big)_{s-1}}{(d-1)_{s-1}}=\frac{\big(d-\big\lfloor\frac{\gamma d}{s}\big\rfloor\big)_{0}}{(d-1)_{0}}=\frac11=1\ge\RM e^{-\gamma}.}

\emph{Case 2:} If $s>1$ and $s\ge\gamma d$, then
\AL{\frac{\big(d-\big\lfloor\frac{\gamma d}{s}\big\rfloor\big)_{s-1}}{(d-1)_{s-1}}=\prod_{k=0}^{s-2}\frac{d-\big\lfloor\frac{\gamma d}{s}\big\rfloor-k}{d-1-k}=\prod_{k=0}^{s-2}\bigg(1+\frac{1-\big\lfloor\frac{\gamma d}{s}\big\rfloor}{d-1-k}\bigg)\ge1\ge\RM e^{-\gamma}.}

\emph{Case 3:} If $1<s<\gamma d$, since $\frac s{\gamma}>s>s-1$, then
\AL{
\frac{\big(d-\big\lfloor\frac{\gamma d}{s}\big\rfloor\big)_{s-1}}{(d-1)_{s-1}}
&=\prod_{k=0}^{s-2}\frac{d-\big\lfloor\frac{\gamma d}{s}\big\rfloor-k}{d-1-k}\ge\prod_{k=0}^{s-2}\frac{d-\frac{\gamma d}{s}-k}{d-1-k}\\
&=\prod_{k=0}^{s-2}\bigg(1-\frac{\gamma}s\bigg(1-\frac{\frac{s}{\gamma}-(1+k)}{d-1-k}\bigg)\bigg)\\
&\ge\prod_{k=0}^{s-2}\bigg(1-\frac{\gamma}s\bigg(1-\frac{\frac{s}{\gamma}-(s-1)}{d-1-k}\bigg)\bigg)\\
&\ge\prod_{k=0}^{s-2}\Big(1-\frac{\gamma}{s}\Big)=\Big(1-\frac{\gamma}{s}\Big)^{s-1}\\
&\ge\lim_{s'\to+\infty}\Big(1-\frac{\gamma}{s'}\Big)^{s'-1}=\RM e^{-\gamma}
.}
The last inequality uses the fact that $\big(1-\frac{\gamma}{s'}\big)^{s'-1}$ is decreasing w.r.t\ $s'$ for $0<\gamma<1$.
\end{proof}

Now we are ready to prove Lemma~\ref{LEM:cand}.

\begin{proof}[Proof of Lemma~\ref{LEM:cand}]
Let $0<\nu<1$ be an absolute constant, and let
\AL{\beta:=\min\bigg\{1,\frac{\rho-\alpha}{1-\rho}\bigg(1-\frac1{-W_{-1}\big({-\frac{(\rho-\alpha)\delta}{\RM e\rho}}\big)}\bigg)\bigg\},\qquad\gamma:=\frac{(1-\nu)\beta}{C_2^2}<1,}
where $W_{-1}:\big[{-\frac1{\RM e}},0\big)\to(-\infty,-1]$ is the $(-1)$ branch of the Lambert product logarithm function.

If $s>\gamma d=\Omega(d)$, then we can simply let $J:=[d]$. Otherwise, let
\AL{n:=\Big\lfloor\frac{\gamma d}s\Big\rfloor,\qquad K:=\Big\lceil\frac{d}{n}\Big\rceil,\qquad m:=\Big\lceil\log_{\frac1{1-\exp(-(1-\nu)/C_2^2)\nu/2}}\frac{\rho}{(\rho-\alpha-(1-\rho)\beta)\delta}\Big\rceil.}
Consider the following procedure, which is equivalent to the procedure in Algorithm~\ref{alg:grace}. Randomly generate $m$ permutations $\omega_1,\dots,\omega_m$ of $[d]$. Apply the algorithm in Lemma~\ref{LEM:one} independently to each $S_{l,k}:=\{i\in[d]:\lceil\omega_l(i)/n\rceil=k\}$ get a $j_{l,k}\in S_{l,k}$ for each $l\in[m]$ and $k\in[K]$. We will show that $J:=\{j_{l,k}\}_{l\in[m],k\in[K]}$ is the desired set.

Let
\AL{
H&:=\argmax_{\begin{subarray}{c}I\subseteq[d]\\|I|=s\end{subarray}}\|\nabla_If(\BM x)\|_2^2,\\
H^*&:=\bigg\{i\in H:|\nabla_if(\BM x)|>\sqrt{\frac{\beta}{s}}\|\nabla_{[d]\setminus H}f(\BM x)\|_2+\lambda_{1,n}\cdot\epsilon\bigg\}.
}
We remark that $H^*\subseteq H$ and that the number of large-gradient dimensions $H^*$ can be less than $s$. 

Fix an $l\in[m]$. For any $k\in[K]$ and any $j\in H$, since $|S_{l,k}|\le n=\big\lfloor\frac{\gamma d}{s}\big\rfloor\le\frac{\gamma d}{s}$, then by Lemma~\ref{LEM:perm-cond} and Lemma~\ref{LEM:egamma},
\ALN{eq:all-1}{
\Prb[S_{l,k}\cap H=\{j\}]&=\frac{|S_{l,k}|}{d}\cdot\frac{(d-|S_{l,k}|)_{s-1}}{(d-1)_{s-1}}\ge\frac{|S_{l,k}|}{d}\cdot\frac{\big(d-\big\lfloor\frac{\gamma d}{s}\big\rfloor\big)_{s-1}}{(d-1)_{s-1}}\\&\ge\frac{|S_{l,k}|}{d}\cdot\RM e^{-\gamma}=\frac{|S_{l,k}|}{d}\cdot\RM e^{-(1-\nu)\beta/C_2^2}\ge\frac{|S_{l,k}|}{d}\cdot\RM e^{-(1-\nu)/C_2^2}
.}
Besides that, by Lemma~\ref{LEM:perm-cond},
\AL{
\Prb[i\in S_{l,k}\mid S_{l,k}\cap H=\{j\}]=\frac{|S_{l,k}|-|\{j\}|}{d-|H|}=\frac{|S_{l,k}|-1}{d-s}
.}
Thus,
\ALN{eq:all-exp}{&\Exp[\|\nabla_{S_{l,k}\setminus H}f(\BM x)\|_2^2\mid\{S_{l,k}\cap H=\{j\}\}]\\
={}&\sum_{i\in[d]\setminus H}(\nabla_if(\BM x))^2\cdot\Prb[i\in S_{l,k}\setminus H\mid S_{l,k}\cap H=\{j\}]\\
={}&\sum_{i\in H}(\nabla_if(\BM x))^2\cdot0+\sum_{i\in[d]\setminus H}(\nabla_if(\BM x))^2\cdot\Prb[i\in S_{l,k}\mid S_{l,k}\cap H=\{j\}]\\
={}&0+\sum_{i\in[d]\setminus H}(\nabla_if(\BM x))^2\cdot\frac{|S_{l,k}|-1}{d-s}={}\frac{|S_{l,k}|-1}{d-s}\|\nabla_{[d]\setminus H}f(\BM x)\|_2^2\\
\le{}&\frac{\frac{\gamma d}{s}-1}{d-s}\|\nabla_{[d]\setminus H}f(\BM x)\|_2^2=\frac{\gamma d-s}{s(d-s)}\|\nabla_{[d]\setminus H}f(\BM x)\|_2^2
.}
By Markov's inequality and Eq.~\eqref{eq:all-exp},
\ALN{eq:all-2}{&\Prb\Big[\|\nabla_{S_{l,k}\setminus H}f(\BM x)\|_2^2\le\frac{\beta}{C_2^2s}\|\nabla_{[d]\setminus H}f(\BM x)\|_2^2\mathbin{\Big|}S_{l,k}\cap H=\{j\}\Big]\\
\ge{}&1-\frac{\Exp[\|\nabla_{S_{l,k}\setminus H}f(\BM x)\|_2^2\mid\{S_{l,k}\cap H=\{j\}\}]}{\frac{\beta}{C_2^2s}\|\nabla_{[d]\setminus H}f(\BM x)\|_2^2}\ge1-\frac{\frac{\gamma d-s}{s(d-s)}\|\nabla_{[d]\setminus H}f(\BM x)\|_2^2}{\frac{\beta}{C_2^2s}\|\nabla_{[d]\setminus H}f(\BM x)\|_2^2}\\
={}&1-\frac{C_2^2(\gamma d-s)}{\beta(d-s)}=1-\frac{(1-\nu)(\gamma d-s)}{\gamma(d-s)}\ge1-(1-\nu)=\nu
.}
Thus, for any $j\in H^*$ and any $k\in[K]$, by the definition of $H^*$ and Eqs.~\eqref{eq:all-1} \& \eqref{eq:all-2},
\ALN{eq:all-3}{
\Prb\{j=j_{l,k}\}={}&\Prb\{j\in S_{l,k},\text{ and Lemma~\ref{LEM:one} finds }j\}\\
\ge{}&\Prb\{j\in S_{l,k},\,|\nabla_jf(\BM x)|>C_2\|\nabla_{S_{l,k}\SM j}f(\BM x)\|_2+\lambda_{1,|S_{l,k}|}\cdot\epsilon,\text{ and Lemma~\ref{LEM:one} finds }j\}\\
\ge{}&\Prb\{j\in S_{l,k},\,|\nabla_jf(\BM x)|>C_2\|\nabla_{S_{l,k}\SM j}f(\BM x)\|_2+\lambda_{1,|S_{l,k}|}\cdot\epsilon\}\cdot\frac12\\
\ge{}&\Prb\{S_{l,k}\cap H=\{j\},\,|\nabla_jf(\BM x)|>C_2\|\nabla_{S_{l,k}\setminus H}f(\BM x)\|_2+\lambda_{1,|S_{l,k}|}\cdot\epsilon\}\cdot\frac12\\
\ge{}&\Prb\{S_{l,k}\cap H=\{j\},\,|\nabla_jf(\BM x)|>C_2\|\nabla_{S_{l,k}\setminus H}f(\BM x)\|_2+\lambda_{1,n}\cdot\epsilon\}\cdot\frac12\\
\ge{}&\Prb\Big\{S_{l,k}\cap H=\{j\},\,\|\nabla_{S_{l,k}\setminus H}f(\BM x)\|_2\le\frac{\beta}{C_2^2s}\|\nabla_{[d]\setminus H}f(\BM x)\|_2\Big\}\cdot\frac12\\
={}&\Prb[S_{l,k}\cap H=\{j\}]\cdot\Prb\Big[\|\nabla_{S_{l,k}\setminus H}f(\BM x)\|_2^2\le\frac{\beta}{C_2^2s}\|\nabla_{[d]\setminus H}f(\BM x)\|_2^2\mathbin{\Big|}S_{l,k}\cap H=\{j\}\Big]\cdot\frac12\\
\ge{}&\Big(\frac{|S_{l,k}|}{d}\cdot\RM e^{-(1-\nu)/C_2^2}\Big)\cdot\nu\cdot\frac12
,}
Since $\sum_{k\in[K]}|S_{l,k}|=d$ by definition, then by Eq.~\eqref{eq:all-3},
\ALN{eq:all-4}{&\Prb\{j\in\{j_{l,k}\}_{k\in[K]}\}=\sum_{k\in[K]}\Prb\{j=j_{l,k}\}
\ge{}\sum_{k\in[K]}\Big(\frac{|S_{l,k}|}{d}\cdot\RM e^{-(1-\nu)/C_2^2}\Big)\cdot\nu\cdot\frac12=\RM e^{-(1-\nu)/C_2^2}\cdot\frac\nu2
.}
By Eq.~\eqref{eq:all-4} and the definition of $m$,
\AL{\Exp[\|\nabla_{H^*\setminus J}f(\BM x)\|_2^2]&=\sum_{j\in H^*}(\nabla_jf(\BM x))^2\cdot\Prb\{j\notin J\}\\
&=\sum_{j\in H^*}(\nabla_jf(\BM x))^2\prod_{l\in[m]}(1-\Prb\{j\in\{j_{l,k}\}_{k\in[K]}\})\\
&\le\sum_{j\in H^*}(\nabla_jf(\BM x))^2\Big(1-\RM e^{-(1-\nu)/C_2^2}\cdot\frac\nu2\Big)^m\\
&=\Big(1-\RM e^{-(1-\nu)/C_2^2}\cdot\frac\nu2\Big)^m\|\nabla_{H^*}f(\BM x)\|_2^2\\
&\le\frac{(\rho-\alpha-(1-\rho)\beta)\delta}{\rho}\|\nabla_{H^*}f(\BM x)\|_2^2
.}
Hence, by Markov's inequality, with probability at least $1-\delta$,
\ALN{eq:all-5}{\|\nabla_{H^*\setminus J}f(\BM x)\|_2^2\le{}&\frac{\Exp[\|\nabla_{H^*\setminus J}f(\BM x)\|_2^2]}{\delta}\\
\le{}&\frac{\rho-\alpha-(1-\rho)\beta}{\rho}\|\nabla_{H^*}f(\BM x)\|_2^2\\
\le{}&\frac{\rho-\alpha-(1-\rho)\beta}{\rho}\|\nabla_{H}f(\BM x)\|_2^2
.}
In addition to that, by the definition of $H^*$, Lemma~\ref{LEM:lip-cont}, and convexity and monotonicity of $\lambda_{1,n}^2$ w.r.t.\ $n$,
\ALN{eq:all-6}{\|\nabla_{H\setminus H^*}f(\BM x)\|_2^2&\le s\|\nabla_{H\setminus H^*}f(\BM x)\|_\infty^2\\
&\le s\Big(\sqrt{\frac{\beta}{s}}\|\nabla_{[d]\setminus H}f(\BM x)\|_2+\lambda_{1,n}\cdot\epsilon\Big)^2\\
&=\beta\|\nabla_{[d]\setminus H}f(\BM x)\|_2^2+2\sqrt{s\beta}\|\nabla_{[d]\setminus H}f(\BM x)\|_2\lambda_{1,n}\epsilon+s\lambda_{1,n}^2\epsilon^2\\
&\le\beta\|\nabla_{[d]\setminus H}f(\BM x)\|_2^2+2\sqrt{s1}L_0\lambda_{1,n}\epsilon+s\lambda_{1,n}^2\epsilon^2\\
&\le\beta\|\nabla_{[d]\setminus H}f(\BM x)\|_2^2+2L_0\lambda_{1,sn}\epsilon+\lambda_{1,sn}^2\epsilon^2\\
&\le\beta\|\nabla_{[d]\setminus H}f(\BM x)\|_2^2+2L_0\lambda_{1,\gamma d}\epsilon+\lambda_{1,\gamma d}^2\epsilon^2\\
&\le\beta\|\nabla_{[d]\setminus H}f(\BM x)\|_2^2+2L_0\lambda_{1,d}\epsilon+\lambda_{1,d}^2\epsilon^2\\
&=\beta(\|\nabla f(\BM x)\|_2^2-\|\nabla_{H}f(\BM x)\|_2^2)+\lambda_{2,d}\epsilon+\lambda_{1,d}^2\epsilon^2
.}
It follows from Eqs.~\eqref{eq:all-6} \& \eqref{eq:all-5} and Assumption~\ref{ASS:sparse} that
\ALN{eq:all-J}{\|\nabla_Jf(\BM x)\|_2^2\ge{}&\|\nabla_Hf(\BM x)\|_2^2-\|\nabla_{H\setminus H^*}f(\BM x)\|_2^2-\|\nabla_{H^*\setminus J}f(\BM x)\|_2^2\\
\ge{}&\|\nabla_Hf(\BM x)\|_2^2-(\beta(\|\nabla f(\BM x)\|_2^2-\|\nabla_{H}f(\BM x)\|_2^2)+\lambda_{2,d}\epsilon+\lambda_{1,d}^2\epsilon^2)-\frac{\rho\!-\!\alpha\!-\!(1\!-\!\rho)\beta}{\rho}\|\nabla_{H}f(\BM x)\|_2^2\\
={}&\frac{\alpha+\beta}{\rho}\|\nabla_{H}f(\BM x)\|_2^2-\beta\|\nabla f(\BM x)\|_2^2-\lambda_{2,d}\epsilon-\lambda_{1,d}^2\epsilon^2\\
\ge{}&\frac{\alpha+\beta}{\rho}\rho\|\nabla f(\BM x)\|_2^2-\beta\|\nabla f(\BM x)\|_2^2-\lambda_{2,d}\epsilon-\lambda_{1,d}^2\epsilon^2\\
={}&\alpha\|\nabla f(\BM x)\|_2^2-\lambda_{2,d}\epsilon-\lambda_{1,d}^2\epsilon^2
.}
Since $K=O(\gamma s)$, then the size of $J$ is at most $mK\le C_{\rho,\alpha,\delta}s$ where $C_{\rho,\alpha,\delta}:=O(m\gamma)$, and the total number of queries is at most
\AL{&O\bigg(\sum_{l\in[m]}\sum_{k\in[K]}\log\log|S_{l,k}|\bigg)\le O\Big(mK\log\log\Big\lfloor\frac{\gamma d}s\Big\rfloor\Big)=O\Big(C_{\rho,\alpha,\delta}s\log\log\frac{d}s\Big).\qedhere}
\end{proof}

\ASSEC{Proof of Theorem~\ref{THM:grad}}{app:pf-grad}

While previous ZOO works try to upper-bound $\|\BM g-\nabla f(\BM x)\|_2$, here we only lower-bound $\langle\nabla f(\BM x),\BM g\rangle$ because ensuring this weaker condition suffices for the $O\big(\frac1T\big)$ convergence and needs fewer queries than ensuring $\|\BM g-\nabla f(\BM x)\|_2$ to be small. This insight also helps us to simplify the proofs of Theorems~\ref{THM:grad} \& \ref{THM:gd-nc}.

\begin{proof}[Proof of Theorem~\ref{THM:grad}]
Since $\alpha<\rho$, we can pick $\alpha'$ such that $\alpha<\alpha'<\rho$. Let $\delta:=1-\frac{\alpha}{\alpha'}$. 

Define $C_{\rho,\alpha}:=C_{\rho,\alpha',\delta}$, where $C_{\rho,\alpha',\delta}$ is the constant in the proof of Lemma~\ref{LEM:cand}.
Using Lemma~\ref{LEM:cand} w.r.t.\ $(\BM x,\alpha',\delta)$, we can find a set $J\subseteq[d]$ of size at most $C_{\rho,\alpha',\delta}s=C_{\rho,\alpha}s$ such that with probability at least $1-\delta$, Eq.~\eqref{eq:all-J} holds.

For each dimension $i\in[d]\setminus J$, let $g_i:=0$. For each dimension $j\in J$, recall that
\AL{g_j:=\frac{f(\BM x+\epsilon\BM e_j)-f(\BM x)}{\epsilon}.}
By Lemma~\ref{LEM:lip-smooth}, for every $j\in J$,
\AL{|g_j-\nabla_jf(\BM x)|=\frac{|f(\BM x+\epsilon\BM e_j)-f(\BM x)-\langle\nabla f(\BM x),\epsilon\BM e_j\rangle|}{\epsilon}\le\frac{L_1\epsilon^2/2}{\epsilon}=\frac{L_1\epsilon}{2}.}
This implies
\AL{\|\BM g_J-\nabla_{J}f(\BM x)\|_\infty&\le\frac{L_1\epsilon}2.
}
It follows that
\ALN{eq:grad-diff}{\|\BM g_J-\nabla_Jf(\BM x)\|_2\le\sqrt{|J|}\cdot\|\nabla_Jf(\BM x)-\BM g\|_\infty\le\sqrt{C_{\rho,\alpha}s}\cdot\frac{L_1\epsilon}2=\frac{\sqrt{C_{\rho,\alpha}}L_1}2\sqrt s\epsilon.}
Hence,
\ALN{eq:grad-est-upper}{\|\BM g\|_2=\|\BM g_J\|_2\le\|\nabla_Jf(\BM x)\|_2+\|\BM g_J-\nabla_Jf(\BM x)\|_2\le\|\nabla f(\BM x)\|_2+\frac{\sqrt{C_{\rho,\alpha}}L_1}2\sqrt s\epsilon.}
Besides that, since $\BM g_{[d]\setminus J}=\BM 0$, then by the Cauchy--Schwarz inequality and Eq.~\eqref{eq:grad-diff},
\AL{\langle\nabla f(\BM x),\BM g\rangle&=\langle\nabla_Jf(\BM x),\BM g_J\rangle\\
&=\langle\nabla_Jf(\BM x),\nabla_Jf(\BM x)\rangle-\langle\nabla_Jf(\BM x),\nabla_Jf(\BM x)-\BM g_J\rangle\\
&=\|\nabla_Jf(\BM x)\|_2^2-\langle\nabla_Jf(\BM x),\nabla_Jf(\BM x)-\BM g_J\rangle\\
&\ge\|\nabla_Jf(\BM x)\|_2^2-\|\nabla_Jf(\BM x)\|_2\cdot\|\nabla_Jf(\BM x)-\BM g_J\|_2\\
&\ge\|\nabla_Jf(\BM x)\|_2^2-\|\nabla f(\BM x)\|_2\cdot\|\nabla_Jf(\BM x)-\BM g_J\|_2\\
&\ge\|\nabla_Jf(\BM x)\|_2^2-L_0\cdot\Big(\frac{\sqrt{C_{\rho,\alpha}}L_1}2\sqrt s\epsilon\Big)\\
&=\|\nabla_Jf(\BM x)\|_2^2-\frac{\sqrt{C_{\rho,\alpha}}L_0L_1}2\sqrt s\epsilon\label{eq:grad-succ}\\
&\ge{-\frac{\sqrt{C_{\rho,\alpha}}L_0L_1}2\sqrt s\epsilon}
.}
Furthermore, recall that Lemma~\ref{LEM:cand} succeeds with probability at least $1-\delta$:
\AL{\Prb[\|\nabla_Jf(\BM x)\|_2^2\ge\alpha'\|\nabla f(\BM x)\|_2^2-\lambda_{2,d}\epsilon-\lambda_{1,d}^2\epsilon^2\mid\BM x]\ge1-\delta.}
Then by Eq.~\eqref{eq:grad-succ}, the event
\AL{E:=\bigg[\langle\nabla f(\BM x),\BM g\rangle\ge\alpha'\|\nabla f(\BM x)\|_2^2-\lambda_{2,d}\epsilon-\lambda_{1,d}^2\epsilon^2-\frac{\sqrt{C_{\rho,\alpha}}L_0L_1}2\sqrt s\epsilon\bigg]}
has probability
\AL{\Prb[E\mid\BM x]\ge\Prb\big[\|\nabla_Jf(\BM x)\|_2^2\ge\alpha'\|\nabla f(\BM x)\|_2^2-\lambda_{2,d}\epsilon-\lambda_{1,d}^2\epsilon^2\mathbin{\big|}\BM x\big]\ge1-\delta.}
Finally, by the decomposition $1=1_{E}+1_{E\Cp}$,
\AL{&\Exp[\langle\nabla f(\BM x),\BM g\rangle\mid\BM x]=\Exp[\langle\nabla f(\BM x),\BM g\rangle1_{E}\mid\BM x_t]+\Exp[\langle\nabla f(\BM x),\BM g\rangle1_{E\Cp}\mid\BM x]\\
\ge{}&\Exp\bigg[\bigg(\alpha'\|\nabla f(\BM x)\|_2^2-\lambda_{2,d}\epsilon-\lambda_{1,d}^2\epsilon^2-\frac{\sqrt{C_{\rho,\alpha}}L_0L_1}2\sqrt s\epsilon\bigg)1_{E}\mathbin{\bigg|}\BM x\bigg]+\Exp\bigg[{-\frac{\sqrt{C_{\rho,\alpha}}L_0L_1}2\sqrt s\epsilon}1_{E\Cp}\mathbin{\bigg|}\BM x\bigg]\\
={}&\bigg(\alpha'\|\nabla f(\BM x)\|_2^2-\lambda_{2,d}\epsilon-\lambda_{1,d}^2\epsilon^2-\frac{\sqrt{C_{\rho,\alpha}}L_0L_1}2\sqrt s\epsilon\bigg)\cdot\Prb[E\mid\BM x]-\frac{\sqrt{C_{\rho,\alpha}}L_0L_1}2\sqrt s\epsilon\cdot\Prb[E\Cp\mid\BM x]\\
\ge{}&\bigg(\alpha'\|\nabla f(\BM x)\|_2^2-\lambda_{2,d}\epsilon-\lambda_{1,d}^2\epsilon^2-\frac{\sqrt{C_{\rho,\alpha}}L_0L_1}2\sqrt s\epsilon\bigg)\cdot(1-\delta)-\frac{\sqrt{C_{\rho,\alpha}}L_0L_1}2\sqrt s\epsilon\cdot\delta\\
={}&\alpha'(1-\delta)\|\nabla f(\BM x)\|_2^2-\bigg((1-\delta)\lambda_{2,d}+\frac{\sqrt{C_{\rho,\alpha}}}2L_0L_1\sqrt s\bigg)\epsilon-(1-\delta)\lambda_{1,d}^2\epsilon^2\\
={}&\alpha\|\nabla f(\BM x)\|_2^2-\lambda_{3,d,s}\epsilon-\lambda_{4,d}\epsilon^2,\label{eq:grad-prod}}
with $\lambda_{3,d,s}=O(\lambda_{2,d}+L_0L_1\sqrt s)$ and $\lambda_{4,d}=O(\lambda_{1,d}^2)$.
\end{proof}

\ASSEC{Proof of Theorem~\ref{THM:gd-nc}}{app:pf-opt}

With the help of Theorem~\ref{THM:grad}, a standard argument (e.g., Section~1.2.3, \citealp{nesterov2018lectures}) shows the $O\big(\frac1T\big)$ rate of convergence.

\begin{proof}[Proof of Theorem~\ref{THM:gd-nc}]
Pick $0<\alpha<\rho$, and apply Theorem~\ref{THM:grad} with $(\rho,\alpha)$ to get gradient estimates $\{\BM g_t\}_{t\ge1}$. Let $C_{\rho,\alpha}$ denote the constant in the proof of Theorem~\ref{THM:grad}, and let $C_\rho:=C_{\rho,\alpha}$. 

For every $t\ge1$, by Lemma~\ref{LEM:lip-smooth} and Theorem~\ref{THM:grad},
\AL{
f(\BM x_{t+1})={}&f(\BM x_t-\eta\BM g_t)\le f(\BM x_t)-\eta\langle\nabla f(\BM x_t),\BM g_t\rangle+\frac{L_1\eta^2}2\|\BM g_t\|_2^2\\
\le{}&f(\BM x_t)-\eta\langle\nabla f(\BM x_t),\BM g_t\rangle+\frac{L_1\eta^2}2\bigg(\|\nabla f(\BM x_t)\|_2+\frac{\sqrt{C_{\rho,\alpha}}L_1}2\sqrt s\epsilon\bigg)^{\!2}\\
={}&f(\BM x_t)-\eta\langle\nabla f(\BM x_t),\BM g_t\rangle+\frac{L_1\eta^2}2\|\nabla f(\BM x_t)\|_2^2+\frac{\sqrt{C_{\rho}}\eta^2L_1^2\|\nabla f(\BM x_t)\|_2}2\sqrt s\epsilon_t+\frac{C_\rho\eta^2L_1^3}8s\epsilon_t^2\\
\le{}&f(\BM x_t)-\eta\langle\nabla f(\BM x_t),\BM g_t\rangle+\frac{L_1\eta^2}{2}\|\nabla f(\BM x_t)\|_2^2+\frac{\sqrt{C_{\rho}}\eta^2L_0L_1^2}2\sqrt s\epsilon_t+\frac{C_\rho\eta^2L_1^3}8s\epsilon_t^2
.}
Rearranging the terms gives
\ALN{eq:opt-term}{\eta\langle\nabla f(\BM x_t),\BM g_t\rangle-\frac{L_1\eta^2}{2}\|\nabla f(\BM x_t)\|_2^2\le(f(\BM x_t)-f(\BM x_{t+1}))+\frac{\sqrt{C_{\rho}}\eta^2L_0L_1^2}2\sqrt s\epsilon_t+\frac{C_\rho\eta^2L_1^3}8s\epsilon_t^2.}
One the one hand, by a telescoping sum of \eqref{eq:opt-term},
\ALN{eq:opt-cum-lhs}{
&\sum_{t=1}^T\bigg(\eta\langle\nabla f(\BM x_t),\BM g_t\rangle-\frac{L_1\eta^2}{2}\|\nabla f(\BM x_t)\|_2^2\bigg)\\
\le{}&\sum_{t=1}^T(f(\BM x_t)-f(\BM x_{t+1}))+\frac{\sqrt{C_{\rho}}\eta^2L_0L_1^2}2\sqrt s\sum_{t=1}^T\epsilon_t+\frac{C_\rho\eta^2L_1^3}8s\sum_{t=1}^T\epsilon_t^2\\
={}&f(\BM x_1)-f(\BM x_{T+1})+\frac{\sqrt{C_{\rho}}\eta^2L_0L_1^2}2\sqrt s\sum_{t=1}^T\epsilon_t+\frac{C_\rho\eta^2L_1^3}8s\sum_{t=1}^T\epsilon_t^2\\
\le{}&f(\BM x_1)-f_*+\frac{\sqrt{C_{\rho}}\eta^2L_0L_1^2}2\sqrt s\sum_{t=1}^T\epsilon_t+\frac{C_\rho\eta^2L_1^3}8s\sum_{t=1}^T\epsilon_t^2
.}
On the other hand, by the law of total expectation and Eq.~\eqref{eq:grad-prod} in Theorem~\ref{THM:grad},
\ALN{eq:opt-cum-rhs}{&\Exp\bigg[\sum_{t=1}^T\bigg(\eta\langle\nabla f(\BM x_t),\BM g_t\rangle-\frac{L_1\eta^2}{2}\|\nabla f(\BM x_t)\|_2^2\bigg)\bigg]\\
={}&\sum_{t=1}^T\ExpOp\bigg[\eta\ExpOp[\langle\nabla f(\BM x_t),\BM g_t\rangle\mid\BM x_t]-\frac{L_1\eta^2}{2}\|\nabla f(\BM x_t)\|_2^2\bigg]\\
\ge{}&\sum_{t=1}^T\Exp\bigg[\eta(\alpha\|\nabla f(\BM x_t)\|_2^2-\lambda_{3,d,s}\epsilon_t-\lambda_{4,d}\epsilon_t^2)-\frac{L_1\eta^2}{2}\|\nabla f(\BM x_t)\|_2^2\bigg]\\
={}&\bigg(\alpha\eta-\frac{L_1\eta^2}2\bigg)\Exp\bigg[\sum_{t=1}^T\|\nabla f(\BM x_t)\|_2^2\bigg]-\eta\lambda_{3,d,s}\sum_{t=1}^T\epsilon_t-\eta\lambda_{4,d}\sum_{t=1}^T\epsilon_t^2
.}
Let the step size $\eta:=\frac{\alpha}{L_1}$, so $\alpha\eta-\frac{L_1\eta^2}2=\frac{\alpha^2}{2L_1}$. Combining Eqs.~\eqref{eq:opt-cum-lhs} \& \eqref{eq:opt-cum-rhs} gives
\AM{\frac{\alpha^2}{2L_1}\Exp\bigg[\sum_{t=1}^T\|\nabla f(\BM x_t)\|_2^2\bigg]-\eta\lambda_{3,d,s}\sum_{t=1}^T\epsilon_t-\eta\lambda_{4,d}\sum_{t=1}^T\epsilon_t^2\le f(\BM x_1)-f_*+\frac{\sqrt{C_{\rho}}\eta^2L_0L_1^2}2\sqrt s\sum_{t=1}^T\epsilon_t+\frac{C_\rho\eta^2L_1^3}8s\sum_{t=1}^T\epsilon_t^2.}
Rearranging the terms gives
\ALN{eq:opt-sum}{\Exp\bigg[\sum_{t=1}^T\|\nabla f(\BM x_t)\|_2^2\bigg]\le\frac{2L_1}{\alpha^2}(f(\BM x_1)-f_*)+\frac{2L_1\big(\eta\lambda_{3,d,s}+\frac{\sqrt{C_{\rho}}\eta^2L_0L_1^2}2\sqrt s\big)}{\alpha^2}\sum_{t=1}^T\epsilon_t+\frac{2L_1\big(\eta\lambda_{4,d}+\frac{C_\rho\eta^2L_1^3}8s\big)}{\alpha^2}\sum_{t=1}^T\epsilon_t^2.}
For each $t\ge1$, let $\epsilon_t:=\epsilon_1/t^2$ where
\AL{
\epsilon_1:=\frac{-\frac{\uppi^2}6\!\cdot\!\frac{2L_1\big(\eta\lambda_{3,d,s}+\frac{\sqrt{C_{\rho}}\eta^2L_0L_1^2}2\sqrt s\big)}{\alpha^2}\!+\!\!\sqrt{\!\big(\frac{\uppi^2}6\!\cdot\!\frac{2L_1\big(\eta\lambda_{3,d,s}+\frac{\sqrt{C_{\rho}}\eta^2L_0L_1^2}2\sqrt s\big)}{\alpha^2}\big)^{\!2}\!\!+\!4\!\cdot\!\frac{\uppi^4}{90}\!\cdot\!\frac{2L_1\big(\eta\lambda_{4,d}+\frac{C_\rho\eta^2L_1^3s}8\big)}{\alpha^2}\!\cdot\!\varDelta\!}}
{2\!\cdot\!\frac{\uppi^4}{90}\!\cdot\!\frac{2L_1\big(\eta\lambda_{4,d}+\frac{C_\rho\eta^2L_1^3s}8\big)}{\alpha^2}}.}
It is clear that $\epsilon_1>0$. Since $\sum_{t=1}^T\frac1{t^2}\le\frac{\uppi^2}6$, and $\sum_{t=1}^T\frac1{t^4}\le\frac{\uppi^4}{90}$, then by Eq.~\eqref{eq:opt-sum},
\AL{\Exp\bigg[\sum_{t=1}^T\|\nabla f(\BM x_t)\|_2^2\bigg]
&\le\frac{2L_1}{\alpha^2}(f(\BM x_1)-f_*)+\frac{2L_1\big(\eta\lambda_{3,d,s}+\frac{\sqrt{C_{\rho}}\eta^2L_0L_1^2}2\sqrt s\big)}{\alpha^2}\cdot\frac{\uppi^2}6\epsilon_1+\frac{2L_1\big(\eta\lambda_{4,d}+\frac{C_\rho\eta^2L_1^3}8s\big)}{\alpha^2}\cdot\frac{\uppi^4}{90}\epsilon_1^2\label{eq:exp-cum-upper-bound}\\
&=\frac{2L_1}{\alpha^2}(f(\BM x_1)-f_*)+\varDelta
.}
It follows that
\AL{&\Exp\Big[\min_{t=1,\dots,T}\|\nabla f(\BM x_t)\|_2^2\Big]\le\Exp\bigg[\frac1T\sum_{t=1}^T\|\nabla f(\BM x_t)\|_2^2\bigg]\le\frac{\frac{2L_1}{\alpha^2}(f(\BM x_1)-f_*)+\varDelta}T.\qedhere}
\end{proof}

\ASSEC{Proof of Theorem~\ref{THM:gd-prob}}{app:pf-gd-prob}

Note that the proof of Theorem~\ref{THM:gd-nc} implies that $\Exp[\sum_{t=1}^\infty\|\nabla f(\BM x_t)\|_2^2]$ is bounded. Thus, we can give a high-probability convergence analysis through a similar argument with the proof of Theorem~\ref{THM:gd-nc} and by applying Markov's inequality w.r.t.\ a constructed non-negative random variable.

\begin{proof}
Let $\alpha:=L_1\eta<\rho$, and let $C_{\rho,\alpha}$ denote the constant in the proof of Theorem~\ref{THM:grad}. For each $t\ge1$, let $\epsilon_t:=\epsilon_1/t^2$ with $\epsilon_1>0$ to be determined later. 

Let $\mu_1:=\frac{2L_1\big(\eta\lambda_{3,d,s}+\frac{\sqrt{C_{\rho,\alpha}}\eta^2L_0L_1^2}2\sqrt s\big)}{\alpha^2}\cdot\frac{\uppi^2}6$ and $\mu_2:=\frac{2L_1\big(\eta\lambda_{4,d}+\frac{C_{\rho,\alpha}\eta^2L_1^3}8s\big)}{\alpha^2}\cdot\frac{\uppi^4}{90}$. Then by Eq.~\eqref{eq:exp-cum-upper-bound},
\AL{\Exp\bigg[\sum_{t=1}^\infty\|\nabla f(\BM x_t)\|_2^2\bigg]
&\le\frac{2L_1}{\alpha^2}(f(\BM x_1)-f_*)+\mu_1\epsilon_1+\mu_2\epsilon_1^2=\frac{2}{L_1\eta^2}(f(\BM x_1)-f_*)+\mu_1\epsilon_1+\mu_2\epsilon_1^2<\infty.}
Let $J_t$ be the candidate set in step $t$. Then by the Cauchy--Schwarz inequality and Eq.~\eqref{eq:grad-diff},
\AL{\langle\nabla f(\BM x_t),\BM g_t\rangle&=\langle\nabla_{J_t}f(\BM x_t),(\BM g_t)_{J_t}\rangle\\
&=\langle\nabla_{J_t}f(\BM x_t),\nabla_{J_t}f(\BM x_t)\rangle-\nabla_{J_t}f(\BM x_t),\nabla_{J_t}f(\BM x_t)-(\BM g_t)_{J_t}\rangle\\
&=\|\nabla_{J_t}f(\BM x_t)\|_2^2-\langle\nabla_{J_t}f(\BM x_t),\nabla_{J_t}f(\BM x_t)-(\BM g_t)_{J_t}\rangle\\
&\le\|\nabla_{J_t}f(\BM x_t)\|_2^2+\|\nabla_{J_t}f(\BM x_t)\|_2\cdot\|\nabla_{J_t}f(\BM x_t)-(\BM g_t)_{J_t}\|_2\\
&\le\|\nabla f(\BM x_t)\|_2^2+\|\nabla f(\BM x)\|_2\cdot\|\nabla_{J_t}f(\BM x_t)-(\BM g_t)_{J_t}\|_2\\
&\le\|\nabla f(\BM x_t)\|_2^2+L_0\cdot\Big(\frac{\sqrt{C_{\rho,\alpha}}L_1}2\sqrt s\epsilon_t\Big)\\
&=\|\nabla f(\BM x_t)\|_2^2+\frac{\sqrt{C_{\rho,\alpha}}L_0L_1}2\sqrt s\epsilon_t
.}
This implies that
\AL{&\sum_{t=1}^\infty\|\nabla f(\BM x_t)\|_2^2-\sum_{t=1}^\infty\langle\nabla f(\BM x_t),\BM g_t\rangle+\frac{\sqrt{C_{\rho,\alpha}}L_0L_1}2\sqrt s\cdot\frac{\uppi^2}6\epsilon_1\\
={}&\sum_{t=1}^\infty\Big(\|\nabla f(\BM x_t)\|_2^2-\langle\nabla f(\BM x_t),\BM g_t\rangle+\frac{\sqrt{C_{\rho,\alpha}}L_0L_1}2\sqrt s\epsilon_t\Big)\ge0.}
Thus, by Markov's inequality and Eq.~\eqref{eq:grad-prod}, with probability at least $1-\beta$,
\AL{
&\sum_{t=1}^\infty\|\nabla f(\BM x_t)\|_2^2-\sum_{t=1}^\infty\langle\nabla f(\BM x_t),\BM g_t\rangle+\frac{\sqrt{C_{\rho,\alpha}}L_0L_1}2\sqrt s\cdot\frac{\uppi^2}6\epsilon_1\\
\le{}&\frac1\beta\Exp\bigg[\sum_{t=1}^\infty\|\nabla f(\BM x_t)\|_2^2-\sum_{t=1}^\infty\langle\nabla f(\BM x_t),\BM g_t\rangle+\frac{\sqrt{C_{\rho,\alpha}}L_0L_1}2\sqrt s\cdot\frac{\uppi^2}6\epsilon_1\bigg]\\
={}&\frac1\beta\bigg(\Exp\bigg[\sum_{t=1}^\infty\|\nabla f(\BM x_t)\|_2^2\bigg]-\Exp\bigg[\sum_{t=1}^\infty\Exp[\langle\nabla f(\BM x_t),\BM g_t\rangle\mid\BM x_t]\bigg]+\frac{\sqrt{C_{\rho,\alpha}}L_0L_1}2\sqrt s\cdot\frac{\uppi^2}6\epsilon_1\bigg)\\
\le{}&\frac1\beta\bigg(\Exp\bigg[\sum_{t=1}^\infty\|\nabla f(\BM x_t)\|_2^2\bigg]-\Exp\bigg[\sum_{t=1}^\infty(\alpha\|\nabla f(\BM x_t)\|_2^2-\lambda_{3,d,s}\epsilon_t-\lambda_{4,d}\epsilon_t^2)\bigg]+\frac{\sqrt{C_{\rho,\alpha}}L_0L_1}2\sqrt s\cdot\frac{\uppi^2}6\epsilon_1\bigg)\\
={}&\frac1\beta\bigg((1-\alpha)\Exp\bigg[\sum_{t=1}^\infty\|\nabla f(\BM x_t)\|_2^2\bigg]+\Big(\lambda_{3,d,s}+\frac{\sqrt{C_{\rho,\alpha}}L_0L_1}2\sqrt s\Big)\cdot\frac{\uppi^2}6\epsilon_1+\lambda_{4,d}\cdot\frac{\uppi^4}{90}\epsilon_1^2\bigg)\\
\le{}&\frac1\beta\bigg((1-\alpha)\Big(\frac{2}{L_1\eta^2}(f(\BM x_1)-f_*)+\mu_1\epsilon_1+\mu_2\epsilon_1^2\Big)+\Big(\lambda_{3,d,s}+\frac{\sqrt{C_{\rho,\alpha}}L_0L_1}2\sqrt s\Big)\cdot\frac{\uppi^2}6\epsilon_1+\lambda_{4,d}\cdot\frac{\uppi^4}{90}\epsilon_1^2\bigg)\label{eq:cov-prob-bound}
.}
With $\mu_3:=\frac{\mu_1}\beta+\big(\lambda_{3,d,s}+\big(\frac1\beta-1\big)\frac{\sqrt{C_{\rho,\alpha}}L_0L_1}2\sqrt s\big)\cdot\frac{\uppi^2}6$ and $\mu_4:=\frac{\mu_2}\beta+\frac{\lambda_{4,d}}{\beta}\cdot\frac{\uppi^4}{90}$, rearranging the terms in Eq.~\eqref{eq:cov-prob-bound} gives
\ALN{eq:opt-cum-rhs-prob}{
\sum_{t=1}^\infty\langle\nabla f(\BM x_t),\BM g_t\rangle
&\ge\sum_{t=1}^\infty\|\nabla f(\BM x_t)\|_2^2-\frac{2(1-\alpha)}{L_1\eta^2\beta}(f(\BM x_1)-f_*)-\mu_3\epsilon_1-\mu_4\epsilon_1^2\\
&=\sum_{t=1}^\infty\|\nabla f(\BM x_t)\|_2^2-\frac{2(1-L_1\eta)}{L_1\eta^2\beta}(f(\BM x_1)-f_*)-\mu_3\epsilon_1-\mu_4\epsilon_1^2.}
By Eqs.~\eqref{eq:opt-cum-rhs-prob} \& \eqref{eq:opt-cum-lhs},
\ALN{eq:const-times-cum-sq-grad}{
&\Big(\eta-\frac{L_1\eta^2}2\Big)\sum_{t=1}^\infty\|\nabla f(\BM x_t)\|_2^2=\sum_{t=1}^\infty\eta\|\nabla f(\BM x_t)\|_2^2-\sum_{t=1}^\infty\frac{L_1\eta^2}2\|\nabla f(\BM x_t)\|_2^2\\
\le{}&\sum_{t=1}^\infty\eta\langle\nabla f(\BM x_t),\BM g_t\rangle+\frac{2(1-L_1\eta)}{L_1\eta\beta}(f(\BM x_1)-f_*)+\eta\mu_3\epsilon_1+\eta\mu_4\epsilon_1^2-\sum_{t=1}^\infty\frac{L_1\eta^2}2\|\nabla f(\BM x_t)\|_2^2\\
={}&\sum_{t=1}^\infty\Big(\eta\langle\nabla f(\BM x_t),\BM g_t\rangle-\frac{L_1\eta^2}2\|\nabla f(\BM x_t)\|_2^2\Big)+\frac{2(1-L_1\eta)}{L_1\eta\beta}(f(\BM x_1)-f_*)+\eta\mu_3\epsilon_1+\eta\mu_4\epsilon_1^2\\
\le{}&f(\BM x_1)-f_*+\frac{\sqrt{C_{\rho}}\eta^2L_0L_1^2}2\sqrt s\sum_{t=1}^\infty\epsilon_t+\frac{C_\rho\eta^2L_1^3}8s\sum_{t=1}^\infty\epsilon_t^2+\frac{2(1-L_1\eta)}{L_1\eta\beta}(f(\BM x_1)-f_*)+\eta\mu_3\epsilon_1+\eta\mu_4\epsilon_1^2\\
={}&\Big(1+\frac{2(1-L_1\eta)}{L_1\eta\beta}\Big)(f(\BM x_1)-f_*)+\Big(\frac{\sqrt{C_{\rho}}\eta^2L_0L_1^2}2\sqrt s\cdot\frac{\uppi^2}6+\eta\mu_3\Big)\epsilon_1+\Big(\frac{C_\rho\eta^2L_1^3}8s\cdot\frac{\uppi^4}{90}+\eta\mu_4\Big)\epsilon_1^2
.}
With $\mu_5:=\frac{\frac{\sqrt{C_{\rho}}\eta^2L_0L_1^2}2\sqrt s\cdot\frac{\uppi^2}6+\eta\mu_3}{\eta-\frac{L_1\eta^2}2}$ and $\mu_6:=\frac{\frac{C_\rho\eta^2L_1^3}8s\cdot\frac{\uppi^4}{90}+\eta\mu_4}{\eta-\frac{L_1\eta^2}2}$, dividing Eq.~\eqref{eq:const-times-cum-sq-grad} by $\eta-\frac{L_1\eta^2}2$ gives
\AL{\sum_{t=1}^\infty\|\nabla f(\BM x_t)\|_2^2\le\frac{1+\frac{2(1-L_1\eta)}{L_1\eta\beta}}{\eta-\frac{L_1\eta^2}2}(f(\BM x_1)-f_*)+\mu_5\epsilon_1+\mu_6\epsilon_1^2.}
Let $\epsilon_1:=\frac{-\mu_6+\sqrt{\mu_6^2+4\mu_5\varDelta}}{2\mu_5}>0$, which has $\mu_5\epsilon_1+\mu_6\epsilon_1^2=\varDelta$. It follows that for all $T\ge1$ simultaneously,
\AL{
\min_{t=1,\dots,T}\|\nabla f(\BM x_t)\|_2^2&\le\frac1T\sum_{t=1}^T\|\nabla f(\BM x_t)\|_2^2\le\frac1T\sum_{t=1}^\infty\|\nabla f(\BM x_t)\|_2^2\\
&\le\frac{\frac{1+\tfrac{2(1-L_1\eta)}{L_1\eta\beta}}{\eta-\tfrac{L_1\eta^2}2}(f(\BM x_1)-f_*)+\varDelta}T
.\qedhere}\end{proof}
\ASEC{Experiments (Continued)}{app:exp}

\ASSEC{Additional Implementation Details}{app:exp-hyp}
The hyperparameters of all methods are summarized in Table~\ref{tab:app-exp-hyp}. To ensure fair comparison, we let all methods have the query budget $(q+1)T$ at least that of \Ours{}. Since RS, TPGE, ZO-AdaMM, and GLD use $O(1)$ queries per step, we adjust their number $T$ of steps so that their total number of queries matches that of our \Ours{}; for other methods, we use the same number of queries per step as that of our \Ours{}. For each method, we choose the best step size $\eta$ among $\{0.5,0.2,0.1,0.05,0.02,0.01,\dots\}$. 

\begin{table}[t]
\caption{Summary of hyperparameters ($q$: queries per step; N/A means that $q$ is decided by the algorithm).}
\label{tab:app-exp-hyp}
\begin{small}\begin{center}\begin{tabular}{cl|ccc|ccc|ccc}
\toprule
\multicolumn{1}{c}{\multirow{2}*{\textbf{Type}}}&\multicolumn{1}{c|}{\multirow{2}*{\textbf{Method}}}&\multicolumn{3}{c|}{\textsc{Distance}}&\multicolumn{3}{c|}{\textsc{Magnitude}}&\multicolumn{3}{c}{\textsc{Attack}}\\
&&$\eta$&$q$&$T$&$\eta$&$q$&$T$&$\eta$&$q$&$T$\\
\midrule
\multirow{6}*{\makecell[c]{Full\\Gradient}}
 & RS & 0.0001 & N/A & 3000 & 0.0005 & N/A & 775 & 0.002 & N/A & 10000 \\
 & TPGE & 0.0001 & N/A & 2000 & 0.0005 & N/A & 517 & 0.002 & N/A & 6667 \\
 & RSPG & 0.005 & 58 & 100 & 0.02 & 31 & 50 & 0.1 & 199 & 100 \\
 & ZO-signSGD & 0.001 & 58 & 100 & 0.002 & 31 & 50 & 0.001 & 199 & 100 \\
 & ZO-AdaMM & 0.001 & N/A & 3000 & 0.001 & N/A & 775 & 0.002 & N/A & 10000 \\
 & GLD & 0.001 & 4 & 1200 & 0.002 & 4 & 310 & 0.01 & 4 & 5000 \\
\midrule
\multirow{7}*{\makecell[c]{Sparse\\Gradient}}
 & LASSO & 0.005 & 58 & 100 & 0.02 & 31 & 50 & 0.2 & 199 & 100 \\
 & SparseSZO & 0.01 & 58 & 100 & 0.05 & 31 & 50 & 0.2 & 199 & 100 \\
 & TruncZSGD & 0.02 & 58 & 100 & 0.2 & 31 & 50 & 0.02 & 199 & 100 \\
 & ZORO & 0.02 & 58 & 100 & 0.5 & 31 & 50 & 0.0002 & 199 & 100 \\
 & ZO-BCD & 0.5 & 58 & 100 & 0.5 & 31 & 50 & 0.002 & 199 & 100 \\
 & SZOHT & 0.005 & 58 & 100 & 0.5 & 31 & 50 & 0.2 & 199 & 100 \\
 & GraCe (ours) & 0.5 & N/A & 100 & 0.5 & N/A & 50 & 0.5 & N/A & 100 \\
\bottomrule
\end{tabular}\end{center}\end{small}
\end{table}

\ASSEC{Performance under Inexact Sparsity}{app:exp-unknown-s}
In practice, when the true sparsity $s$ is unknown, we may use an estimated sparsity $\HAT s$ instead of the true sparsity $s$. We show in Table~\ref{tab:app-inexact-s} the normalized objective of our method w.r.t.\ various $\HAT s$ on \textsc{Distance} with true sparsity $s=10$. We can see that even when $\HAT s$ is slightly smaller than the true sparsity $s=10$ (especially when $\HAT s\ge 8$), our method still achieves strong performance. Furthermore, slight overestimation of $\HAT s$ can even improve the performance. The results demonstrate that our method is fairly robust w.r.t.\ inexact sparsity. Therefore, as long as the estimated $\HAT s$ is not too far from the true sparsity $s$, we can expect that our method should still have good performance. 

\begin{table}[t]
\caption{Performance of our method w.r.t.\ various $\HAT s$ on \textsc{Distance} with true sparsity $s=10$.}
\label{tab:app-inexact-s}
\begin{small}\begin{center}\begin{tabular}{cccccccccc}
\toprule
$s=6$&$s=7$&$s=8$&$s=9$&$s=10$&$s=11$&$s=12$&$s=13$&$s=14$&$s=15$\\
\midrule
0.0094&0.0097&0.0053&0.0053&0.0051&0.0045&0.0040&0.0040&0.0056&0.0036\\
\bottomrule
\end{tabular}\end{center}\end{small}
\end{table}

\ASSEC{Performance under Non-Sparse Gradients}{app:exp-nonsparse}
We further investigate the performance of our method when $s=\Theta(d)$ (i.e., the function has non-sparse gradients). Note that \Ours{} reduces to the classic method FDSA \cite{kiefer1952stochastic} when $s=d$, so we also include FDSA in our comparison. Here we use \textsc{Distance} with $d=10000$ and $s=2500$ and report the normalized objective under a query budget of 250000. The results are presented in Table~\ref{tab:app-nonsparse}. From the table, we can see that our \Ours{} and FDSA achieve the best performance and significantly outperform all other methods. Therefore, it is actually beneficial that our method reduces to FDSA when $s=\Theta(d)$. The results are not surprising because it has been recently shown that $\Omega(d)$ queries are required if the gradient is not sparse (Theorem~1, \citealp{alabdulkareem2021information}), which implies that any ZOO method should not have significant advantage over FDSA under non-sparse gradients in the worst case. 

\begin{table}[t]
\caption{Performance under non-sparse gradients.}
\label{tab:app-nonsparse}
\begin{small}\begin{center}\begin{tabular}{ccccccc}
\toprule
FDSA&RS&TPGE&RSPG&ZO-signSGD&ZO-AdaMM&GLD\\
\midrule
\textbf{0.0008}&0.0021&0.0056&0.0012&0.1002&0.0023&0.3392\\
\midrule[0.72pt]
LASSO&SparseSZO&TruncZSGD&ZORO&ZO-BCD&SZOHT&GraCe (ours)\\
\midrule
0.0013&0.0448&0.0017&0.0792&0.0187&0.0015&\textbf{0.0007}\\
\bottomrule
\end{tabular}\end{center}\end{small}
\end{table}

\ASSEC{Validation of Query Complexity}{app:exp-num-queries}
To validate the query complexity $O\big(s\log\log\frac ds\big)$, we present the worst-case numbers of queries under various $d$ and $s$ in Table~\ref{tab:app-worstcase}. From the table, we can see that the number of queries grows very slowly w.r.t.\ $d$. For instance, \Ours{} needs at most 19 queries for $d=10^8$ and $s=1$. This provides strong evidence that the number of queries does scale as $O\big(s\log\log\frac ds\big)$. 

\begin{table}[t]
\caption{Worst-case numbers of queries per step under various $d$ and $s$.}
\label{tab:app-worstcase}
\begin{small}\begin{center}\begin{tabular}{c|ccccccc}
\toprule
\textbackslash&$d=10^2$&$d=10^3$&$d=10^4$&$d=10^5$&$d=10^6$&$d=10^7$&$d=10^8$\\
\midrule
$s=1$&11&15&15&17&19&19&19\\
$s=2$&16&19&22&22&28&28&28\\
$s=3$&26&26&36&36&46&46&46\\
$s=4$&25&31&43&43&55&55&55\\
$s=5$&33&41&57&57&73&73&73\\
\bottomrule
\end{tabular}\end{center}\end{small}
\end{table}

\end{document}